%% file: main.tex
\documentclass[lettersize,journal]{IEEEtran}

\newtheorem{theorem}{Theorem}

\newtheorem{lemma}{Lemma}   

\newtheorem{assumption}{Assumption}
\newtheorem{remark}{Remark}
\newtheorem{proof}{Proof}
\usepackage[linesnumbered,ruled,boxed]{algorithm2e}
\usepackage{amsmath,amsfonts}
\usepackage{bbm}
\usepackage[noend]{algorithmic}
\usepackage{array}
\usepackage[caption=false,font=normalsize,labelfont=sf,textfont=sf]{subfig}
\usepackage{textcomp}
\usepackage{stfloats}
\usepackage{adjustbox}
 \usepackage{multirow}
 \usepackage{booktabs}       
\usepackage{url}
\usepackage{verbatim}
\usepackage{graphicx}
\usepackage{caption} 
\usepackage{color}
\usepackage{tabularx}
\newcommand{\loc}{\mathrm{loc}}
\newcommand{\ser}{\mathrm{ser}}

\newcommand{\Sg}{\boldsymbol{\Sigma}}
\newcommand{\Bf}{\mathbf{b}}
\newcommand{\Tf}{\boldsymbol{\theta}}
\newcommand{\xb}{\mathbf{x}}

\newcommand{\alglinelabel}{%
  \addtocounter{ALC@line}{-1}
  \refstepcounter{ALC@line}
  \label
}

\def\BibTeX{{\rm B\kern-.05em{\sc i\kern-.025em b}\kern-.08em
    T\kern-.1667em\lower.7ex\hbox{E}\kern-.125emX}}
\usepackage{balance}
\begin{document}


\title{CANS: Accelerating Multiuser Collaborative Edge Inference via Cooperative Autodidactic NeuroSurgeon}
    \author{Zheshun Wu, Ziyang Zhang, Changyao Lin, Zenglin Xu, \IEEEmembership{Senior Member, IEEE}, and Jie Liu, \IEEEmembership{Fellow, IEEE}
\thanks{Copyright (c) 2015 IEEE. Personal use of this material is permitted. However, permission to use this material for any other purposes must be obtained from the IEEE by sending a request to pubs-permissions@ieee.org.}
\thanks{This work was supported in part by National Natural Science Foundation
of China (No. 62350710797). (\emph{Corresponding author: Jie Liu; Zenglin Xu.})}
\thanks{Zheshun Wu and Jie Liu are with the School of Computer Science and Technology, Harbin Institute of Technology Shenzhen, Shenzhen 518055, China (e-mail:
wuzhsh23@gmail.com; jieliu@hit.edu.cn).}
\thanks{Ziyang Zhang is with NESLab, Politecnico di Milano, 39-20133 Milano, Italy (e-mail:
ziyang.zhang@polimi.it).}
\thanks{Changyao Lin is with the School of Computer Science and Technology, Harbin Institute of Technology, Harbin, Heilongjiang 150001, China  (e-mail: lincy@stu.hit.edu.cn).}
\thanks{Zenglin Xu is with the Fudan University, and also with the Shanghai Academy of Artificial Intelligence for Science  (e-mail: zenglin@gmail.com).}
}


\maketitle

\begin{abstract}
Recently, mobile edge computing (MEC)–enabled collaborative deep neural network (DNN) inference has emerged as a promising approach for delivering intelligent services to resource-constrained mobile devices.  A representative scenario is multi-user collaborative edge inference, where distinct devices independently partition their DNN models and offload backend computation to a common edge server over wireless networks. However, determining the optimal DNN partition for each device is challenging due to unknown and time-varying system conditions, including fluctuating wireless links and diverse device capabilities. To address this problem, we propose Cooperative Autodidactic NeuroSurgeon (CANS), a collaborative edge inference framework that enables devices to adaptively learn optimal DNN partitions by sharing informative feedback during online inference. To handle the challenge of device heterogeneity and better leverage offline inference experience, we integrate a novel FedLinUCB-DW algorithm that groups devices of the same type and warm-starts online exploration using local offline early-exit inference experience. Furthermore, we provide theoretical guarantees for FedLinUCB-DW by deriving the regret upper bound. We also validate our method on both a simulated environment and a hardware prototype system. Empirical evaluations demonstrate that CANS achieves lower inference latency compared to state-of-the-art baselines. Especially, in prototype experiments on two edge devices, the proposed CANS reduced average inference latency by up to 50\% compared to the non-cooperative baseline.

\end{abstract}

\begin{IEEEkeywords}
Collaborative Edge Inference, Mobile Edge Computing, Online Learning, Distributed Linear Bandits, Heterogeneous  Computing Devices
\end{IEEEkeywords}

\section{Introduction}
\input{content/intro}

\section{Problem Formulation}
\input{content/formulation}

\section{Algorithm}
\input{content/algorithm}

\section{Experiments}
\input{content/experiment}

\section{Conclusion}
This paper addresses the challenge of identifying optimal DNN partition points in multi-user collaborative edge inference under unknown system parameters. We formulate this task as a distributed online learning problem and introduce the CANS system, which accelerates parameter estimation by facilitating feedback sharing among UEs. Building on this system, we devise the FedLinUCB-DW algorithm to enhance learning efficiency by grouping homogeneous devices and leveraging offline inference experience for a warm start. Both theoretical analysis and extensive evaluations, including large-scale simulations and a hardware prototype, demonstrate that the proposed approach significantly outperforms existing baselines in reducing end-to-end inference latency.

\bibliographystyle{ieeetr}
\bibliography{IEEEtran}

\input{content/supplement}

\end{document}

%% file: content/intro.tex
\IEEEPARstart{D}{eep} neural networks (DNNs) have been widely adopted in various mobile and Internet of Things (IoT) applications such as face recognition~\cite{koubaa2021cloud}, video analytics~\cite{xu2023edge} and speech assistants~\cite{tulshan2018survey}. However, resource-constrained mobile devices, including handheld devices, wearables and wireless sensors, often struggle to execute computationally intensive DNN inference tasks  due to their limited energy budgets and processing capabilities~\cite{mao2016dynamic,shuvo2022efficient}. In recent years, Mobile Edge Computing (MEC) has  emerged as a promising paradigm to meet the high computational demands of DNN inference while alleviating the workload of mobile devices~\cite{10485479,8798727,8924682}. In MEC systems, edge servers equipped with powerful computing resources are deployed in close proximity to mobile devices, allowing each device to offload its inference tasks to the edge server~\cite{8771176,8998328}.

 A particularly effective offloading approach within MEC is collaborative edge inference, whose central idea is to partition the DNN into a front-end segment executed on the mobile device and a back-end segment processed on the edge server~\cite{kang2017neurosurgeon,ren2023survey}. Compared with purely on-device inference or complete task offloading, collaborative edge inference offers a flexible balance between communication and computation, enabling efficient resource utilization and reduced latency. The core challenge in collaborative edge inference lies in determining the optimal partition point that divides the DNN between the device and the server so as to minimize the end‑to‑end inference latency~\cite{zhang2021autodidactic}. Most existing approaches rely on performing offline profiling~\cite{kang2017neurosurgeon,8737614,9882293}, where the optimal partition is identified by solving an offline optimization problem based on prior knowledge of system parameters such as the wireless transmission rate and the processing capabilities of both the mobile device and the edge server. However, these system parameters are often difficult to obtain for mobile devices in real-world scenarios, and can be easily outdated due to highly dynamic and uncertain environments~\cite{zhang2021autodidactic,9612607}.

To address these challenges, some studies have utilized online learning methods to dynamically estimate unknown system parameters and thereby better identify  optimal partition points~\cite{zhang2021autodidactic,huang2023adversarial,11141772}. For instance, the authors in~\cite{zhang2021autodidactic} model the search for the optimal DNN partition as a contextual linear bandit problem and solve it using a Linear Upper Confidence Bound (LinUCB)-style algorithm. Based on it,~\cite{huang2023adversarial} and~\cite{11141772} further consider the offloading-server selection and the offloading-path selection problem respectively.   However,  these works focus solely on the single-user setting, where the edge server provides inference offloading services to only one user equipment (UE) in the wireless network.  In practical applications, an edge server typically provides inference offloading services to multiple users over wireless networks, supported by its powerful computing resources.

Several recent works have begun to target the multi-user scenario by jointly optimizing DNN partitioning and server-resource allocation~\cite{9542866,9144210}.  However, they still consider conducting offline profiling under the restrictive assumption that system parameters are known in advance, which is impractical in realistic deployments. The work most closely related to ours is~\cite{wang2024multi}, which iteratively solves optimization problems to identify optimal partitions for multiple UEs using deep reinforcement learning. However, this work still assumes a stationary environment and that all system parameters are known to the algorithm in advance. Hence, although supporting collaborative edge inference for multiple UEs is of greater practical importance, research on devising distributed online learning methods for searching optimal DNN partitioning under this setting still remains open.


   \begin{figure*}[ht]
\captionsetup[subfloat]{font=footnotesize}	
\centering
   	\subfloat[In CANS, each UE conducts its own collaborative inference with the edge server over the wireless network. ]{\includegraphics[width = 0.43\textwidth]{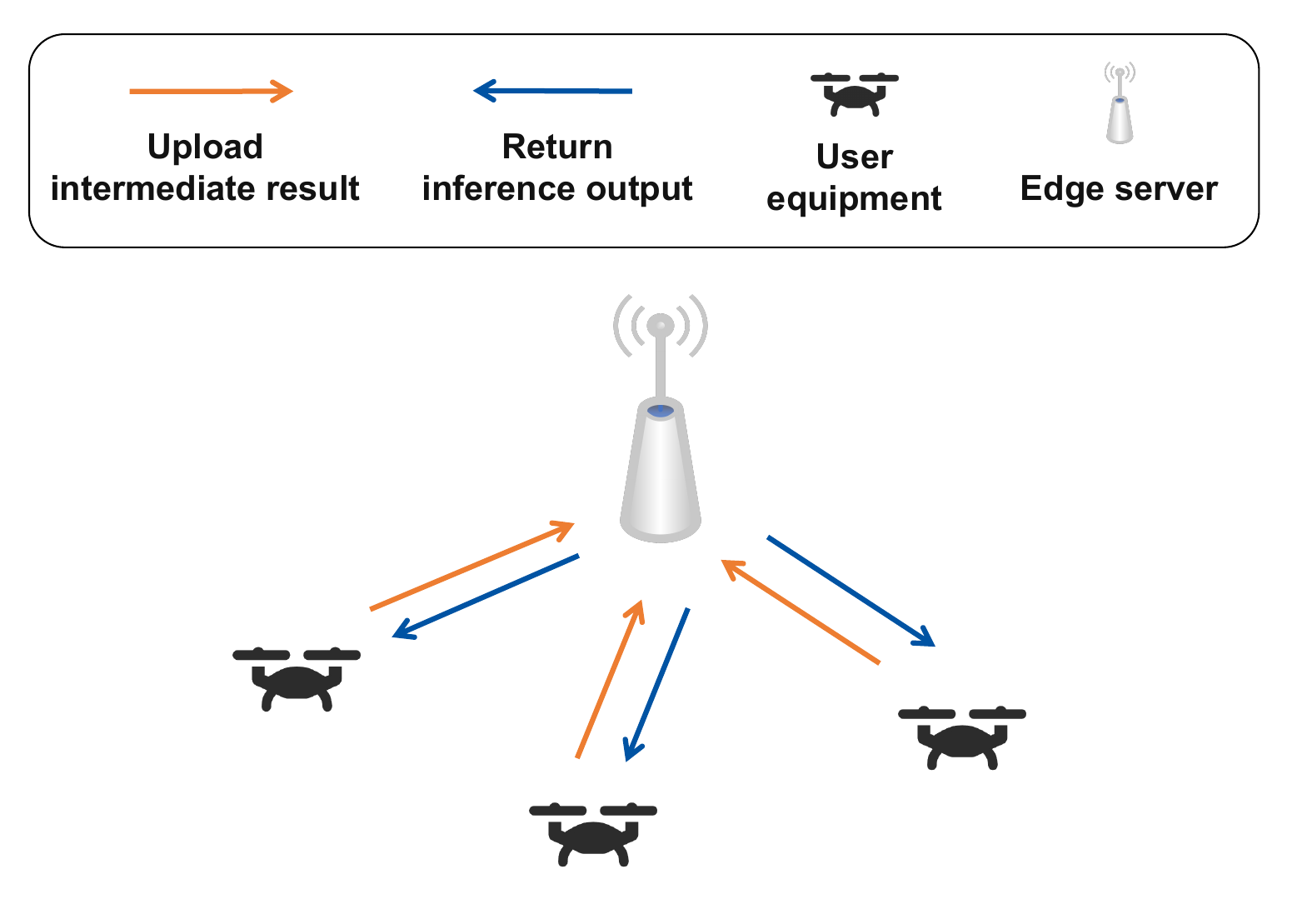}}
    \hspace{0.2in}
    \subfloat[ Each UE independently selects its partition points.]
    {\includegraphics[width = 0.44\textwidth]{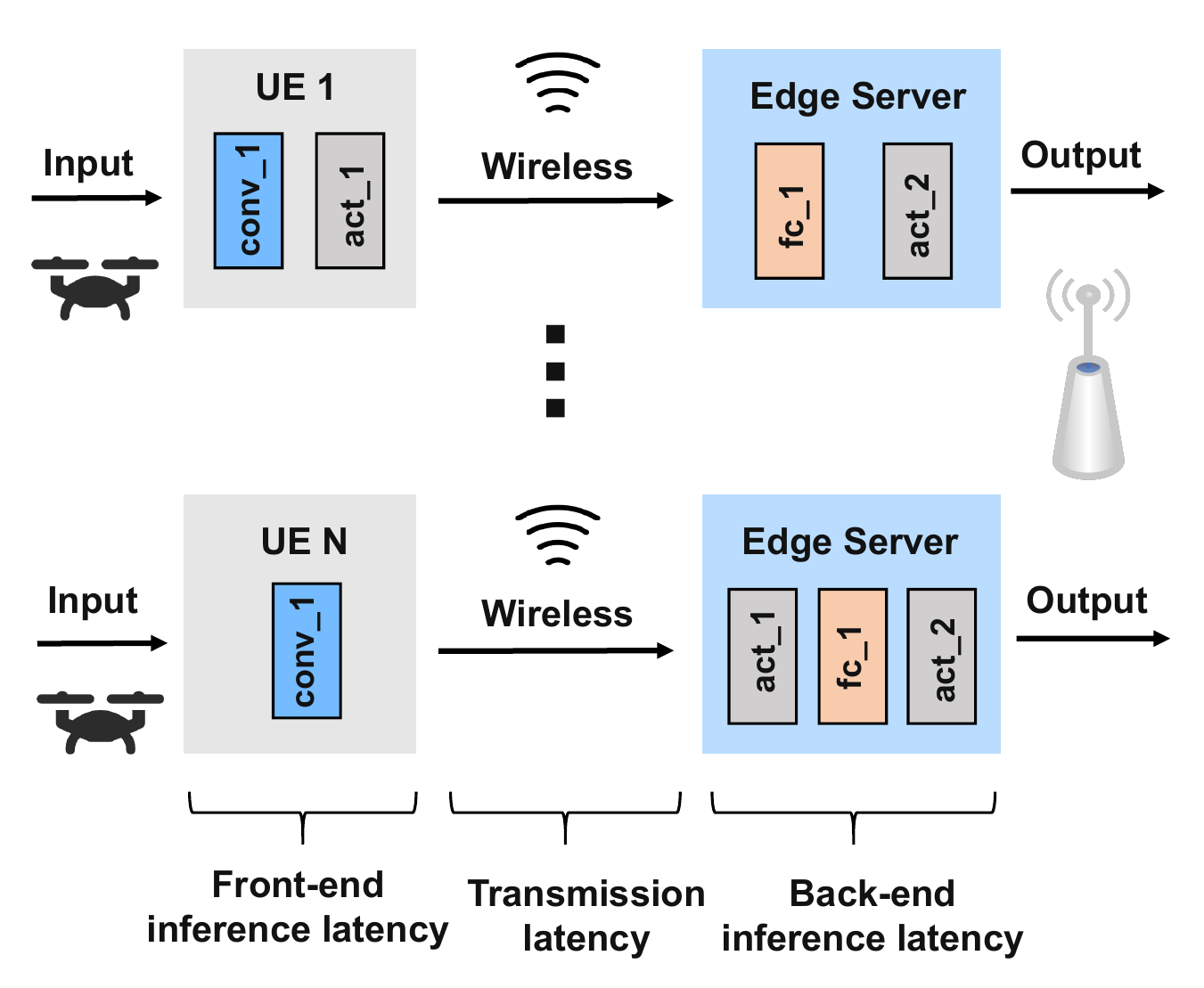}}
\caption{The system architecture of CANS. }
\label{fig:system}
\end{figure*}

Motivated by these observations, this paper presents a novel multiuser collaborative edge inference system called Cooperative Autodidactic NeuroSurgeon (CANS). CANS incorporates a well-designed distributed online learning module to efficiently identify optimal partition points for heterogeneous devices under limited latency feedback. Specifically,  CANS enables UEs to independently conduct collaborative edge inference with a common server, and to share local exploration experience with other devices for accelerating online learning processes. However, directly applying standard distributed online learning methods in CANS leads to significant learning inefficiencies due to two key challenges: (i) device heterogeneity and (ii) underutilization of offline inference experience. To address these issues, we propose a novel distributed linear bandit learning algorithm called FedLinUCB-DW, which mitigates these challenges by grouping devices of the same type and warm‑starting online learning stages using offline exploration experience collected by local early-exit inference. Our main contributions are summarized as follows.
\begin{enumerate}
    \item We take the first step to formulate the task of learning optimal DNN partition points in multiuser collaborative edge inference as a distributed online learning problem. Based on this formulation, we develop a novel CANS system, enabling multiple UEs to cooperate by sharing their exploration data, thereby accelerating  online learning and reducing the overall inference latency.
    \item  We identify two key limitations of directly applying traditional distributed online learning methods in CANS and further  propose a novel distributed linear bandit algorithm, FedLinUCB-DW, to mitigate these issues by (i) grouping UEs by device type and (ii) warm-starting the online learning process with local offline  inference experience.
    \item  We provide rigorous theoretical performance guarantees for the proposed algorithm by deriving an upper bound on the regret. We further conduct simulation experiments and implement our algorithm on prototype systems using NVIDIA Jetson Xavier NX and Orin Nano platforms. Extensive empirical results demonstrate that our approach outperforms baselines and aligns closely with our theoretical findings. Specifically, for prototype experiments, our proposed CANS reduced the average inference
latency by up to 50\% compared to the non-cooperative baseline.
\end{enumerate}

\textbf{Notations.} We use lower and upper case bold face letters to
denote vectors and matrices, respectively. For any positive integer $n$, we denote the set $\{1,2,\ldots,n\}$ by $[n]$. We use $\mathbf{I}$ to denote the identity matrix. For any vector $\mathbf{x}\in \mathbb{R}^d$ and positive semi-definite matrix $\boldsymbol{\Sigma}\in  \mathbb{R}^{d\times d}$, we denote $\Vert \mathbf{x}\Vert_{\boldsymbol{\Sigma}}=\sqrt{\mathbf{x}^\top \boldsymbol{\Sigma} \mathbf{x}}$, and by $\det(\boldsymbol{\Sigma})$ the determinant of $\boldsymbol{\Sigma}$. We use  $\lambda_{\max}(\boldsymbol{\Sigma})$, $\lambda_{\min}(\boldsymbol{\Sigma})$, $\lambda_{j}(\boldsymbol{\Sigma})$ denote the
largest, the smallest, and the $j$-th largest eigenvalue of matrix $\boldsymbol{\Sigma}$, respectively.

\section{Related Work}

\begin{table*}[ht]
  \centering  \normalsize
  \label{tab:offload_classification}
  \begin{tabular}{l|c|c|c|c|c}
    \hline
    & \cite{kang2017neurosurgeon,8737614,10413648} & \cite{zhang2021autodidactic,huang2023adversarial,chacun2024dronebandit} & \cite{9542866,9144210} &\cite{wang2024multi}& Ours\\
    \hline
 Offline  / Online & Offline& Online& Offline&Online& Online \\
    \hline
 Single-user / Multi-user & Single-user  &  Single-user& Multi-user&Multi-user& Multi-user\\
 \hline
 Known parameters  /  Unknown parameters &  Known&  Unknown & Known&Known& Unknown \\
    \hline
  \end{tabular}
   \caption{Comparison of Collaborative Edge Inference Approaches. We remark that by ``Online" we refer to the case the DNN partitioning is dynamically adjusted during the online inference stage.  By ``Multi-user” we refer to the case where the edge server provides inference offloading services for multiple UEs. The term ``Unknown parameters" indicates that the algorithm does not need to know the transmission rates and computing capacities of devices or servers in advance.}
\end{table*}

\textbf{Collaborative Edge Inference.} Prior research on collaborative edge inference has explored DNN‑partitioning optimization in both offline settings~\cite{kang2017neurosurgeon,8737614,10413648} and online settings \cite{zhang2021autodidactic,huang2023adversarial,11141772}. In the offline setting,~\cite{8737614} models the DNN partitioning problem as a min-cut problem and~\cite{10413648} proposes a fine-grained model partitioning method to jointly minimize inference latency and energy cost. In the online setting,~\cite{zhang2021autodidactic} first formulates the problem of identifying the optimal DNN partition as a linear bandit problem and solves it using a LinUCB-style algorithm. Based on it,~\cite{huang2023adversarial} further considers  the problem of selecting optimal offloading server. However, these works mainly focus on the case where the edge server provides offloading services for only one UE.

Recently, several works have targeted the multi‑user scenario in both offline~\cite{9542866,9144210} and online scenarios~\cite{wang2024multi}. However, these studies still assume that all  system parameters including transmission rates and computational capacities of devices or servers are stationary and known to the algorithm in advance, which is impractical in realistic applications.

A more recent work considers distributed inference among multiple devices in an online setting with unknown parameters~\cite{11141772}. Nevertheless, this work focuses on optimizing offloading path selection for a single inference request rather than providing offloading services for multiple UEs with distinct requests. In contrast, this paper considers the scenario where an edge server provides collaborative edge inference service for multiple UEs under uncertain environments, and proposes novel distributed online learning methods to efficiently identify optimal DNN partitions for UEs.

\textbf{Contextual Linear  Bandits.} The multi-armed bandit (MAB) problem is a well-established framework for modelling the fundamental trade-off between exploration and exploitation in sequential decision-making under uncertainty~\cite{lattimore2020bandit,bubeck2012regret}. Contextual linear bandit learning further extends the basic MAB model by incorporating contextual features of actions, and by assuming that the expected reward is a linear function of these features, tackling the challenge of large action spaces in MAB~\cite{dani2008stochastic}. A canonical algorithm in this setting is the Linear Upper Confidence Bound (LinUCB) method, which uses confidence-bound estimates within the linear model to guide action selection~\cite{abbasi2011improved}. 

Several studies have explored multi-agent linear bandit learning with decentralized data, where multiple agents collaborate under the coordination of a central server to solve contextual linear bandit problems. For example, \cite{DBLP:conf/iclr/WangHCW20} proposes several distributed linear bandit algorithms featuring event-triggered communication, and \cite{he2022simple} devises an asynchronous communication mechanism to tackle the challenge of stragglers in synchronous settings. Recently, \cite{blaser2024federated} develops a clustered bandit algorithm to overcome environment heterogeneity. However, all of these works overlook two important aspects: (i) more fine-grained environment heterogeneity, namely that the cluster structure among environments may apply only to a subset of the model parameters rather than the full parameter vector; and (ii) leveraging offline exploration data from agents to enhance online learning efficiency.


%% file: content/formulation.tex
In this section, we first describe how to formulate the task of identifying the optimal DNN partitioning in collaborative edge inference as a contextual linear bandit problem. We then extend this formulation to the multi-user setting by introducing a distributed linear bandit framework. 


\subsection{Contextual Linear Bandits for Optimal DNN Partitioning in  Collaborative Edge Inference}

In this subsection, we first introduce the procedure of collaborative edge inference and then formulate the task of identifying optimal DNN partitions as a contextual linear bandit problem.  

The basic idea of collaborative edge inference is to partition a DNN model into a front-end part running on the mobile device and a back-end part running on an edge server. Collaborative edge inference is thus more flexible in balancing the transmission and computation overhead compared with pure on-device processing or fully offloading.


\textbf{DNN Partitioning.}  Consider a resource-limited mobile device deployed with a pretrained DNN model over a wireless network that needs to execute DNN inference tasks. We denote $\mathcal{P}=\{0,1,2, \ldots, P\}$ as the set of all possible partition points (DNN layers). The mobile device can select a partition point $p\in\mathcal{P}$ to divide a DNN model into two parts: 1) the front-end part contains layers from the input to the partition point $p$ processed locally, and 2) the back-end part contains layers from $p$ to the output layer processed on the edge server. The two special
 cases $p = 0$ and $p=P$ correspond to the cases where the mobile device transmits raw input data to the edge server for executing the entire DNN model, and the entire inference task is run locally at the mobile device, respectively.



\textbf{DNN Inference Latency Model.} The end-to-end collaborative edge inference latency includes three main parts: 1) Front-end inference latency on the mobile device; 2) Transmission latency for transmitting the intermediate output from the device to the edge server; 3) Back-end inference latency on the edge server. Since the data size of the final inference result is generally small, the transmission latency for returning the final result can be ignored to simplify the problem formulation.

In this paper, we consider a limited feedback setting: each device observes only its own local front‑end inference latency and receives the back‑end/offloading latency from the edge server~\cite{zhang2021autodidactic}. The offloading latency reflects the total time elapsed between sending the intermediate output and receiving the inference result from the server. Consequently, for a given partition point $p$, the end‑to‑end inference latency is expressed as $\ell_p=\ell^f_p+\ell^b_p$, where $\ell^f_p$ and $\ell^b_p$ denote the local inference and edge offloading latencies, respectively.


 \textbf{Linear Prediction Model.} In this paper, we follow a similar idea proposed in~\cite{zhang2021autodidactic, huang2023adversarial} to learn a linear prediction model that maps contextual features of any partition point  $p\in\mathcal{P}$ to the end-to-end inference latency $\ell_p$. This linear prediction model is estimated using the past observed feedback information (inference latency of selected partition points), and is used for the subsequent identification of the optimal partition point. The advantage of using contextual features of partition points to learn a linear latency prediction model lies in tackling the challenge of searching a large decision space for the optimal choice~\cite{lattimore2020bandit}.  Besides, compared with learning more complex and non-linear models, the linear model is much more lightweight and needs fewer computational and storage  overhead on resource-constrained mobile devices~\cite{zhang2021autodidactic}.


In the following, we introduce two different constructions of contextual feature vectors that capture the characteristics of partition points at various levels,  proposed in~\cite{zhang2021autodidactic} and~\cite{huang2023adversarial}, respectively. Following the methodology of~\cite{zhang2021autodidactic}, we define the front-end feature vector for a specific partition point $p$ as: $\mathbf{x}^f_p = [m^f_{C}(p), m^f_{L}(p), m^f_{A}(p), n^f_{C}(p), n^f_{L}(p), n^f_{A}(p)]^T$. In this vector, the components $m^f_{C}(p)$, $m^f_{L}(p)$, and $m^f_{A}(p)$ represent the cumulative multiply-accumulate (MAC) operations for all convolutional, linear, and activation layers preceding and including the partition point $p$. Similarly, $n^f_{C}(p)$, $n^f_{L}(p)$, and $n^f_{A}(p)$ denote the total count of these respective layer types within the same front-end segment. The back-end feature vector is similarly defined as $\mathbf{x}^b_p=[m^b_{C}(p),m^b_{L}(p),m^b_{A}(p),n^b_{C}(p),n^b_{L}(p),n^b_{A}(p), \psi_p]^T$, further incorporating $\psi_p$ to denote the data size of the intermediate output at partition point $p$. 

Alternatively, following the simplified approach in~\cite{huang2023adversarial}, the contextual vectors can be streamlined to $\mathbf{x}^f_{p}=[\phi^f_p]$ and $\mathbf{x}^b_{p}=[\phi^b_p,\psi_p]^T$, where $\phi^f_p$ and $\phi^b_p$ aggregate the total computational workload (MAC operations) of the front-end and back-end layers, correspondingly.

We then introduce the linear latency prediction model considered in this paper. As mentioned earlier, the end-to-end inference latency $\ell_p:=\ell^f_p+\ell^b_p$ for a specific partition point $p\in \mathcal{P}$ consists of two parts: the front-end inference latency $\ell^f_p$ and the edge offloading latency $\ell^b_p$. We consider the prediction model of $\ell^f_p$ has the form of $\ell^f_p={\boldsymbol{\theta}}^f\cdot\mathbf{x}^f_p+\eta^f$, where  $\mathbf{x}^f_p\in \mathbb{R}^{d_f}$ denotes the contextual feature for the front-end inference part of partition point $p$, and ${\boldsymbol{\theta}}^f\in \mathbb{R}^{d_f}$ denotes the linear coefficients capturing the unknown effect of the computation capability of the mobile device on the front-end inference latency. $\eta^f$ models randomness in the front-end inference process. Similarly, the prediction model of $\ell^b_p$ has the form of $\ell^b_p={\boldsymbol{\theta}}^b\cdot\mathbf{x}^b_p+\eta^b$, where  $\mathbf{x}^b_p\in \mathbb{R}^{d_b}$ denotes the contextual feature for the edge offloading part of partition point $p$ and ${\boldsymbol{\theta}}^b\in \mathbb{R}^{d_b}$  denotes the linear coefficients capturing the unknown effects of the wireless
 uplink channel condition and the computation capability of the edge server on the edge offloading latency. $\eta^b$ characterizes the randomness in the transmission and back-end inference processes. 
 
Below we define $\Tf:=[\Tf^f,\Tf^b]\in \mathbb{R}^{d_f+d_b}$ as the overall system parameters and  $\xb:=[\xb^f_p,\xb^b_p]\in \mathbb{R}^{d_f+d_b}$ as the overall contextual features of partition point $p$. Consequently, the end-to-end latency $\ell_p$ of partition point $p$ can be modeled as: $\ell_p=\Tf\cdot\xb_p+\eta$, where $\eta:=\eta^f+\eta^b$ reflects the overall randomness. 


\textbf{Contextual Linear Bandits.} Now we introduce how to model the problem of identifying optimal  DNN partitioning as a contextual linear bandit problem. As introduced above, the mobile device deployed with a pretrained DNN model over a wireless network needs to execute inference tasks that arrive sequentially, denoted by the task index $t=1,\ldots,T$. For each task $t\in[T]$, the mobile device is required to select a partition point $p_t \in \mathcal{P}$ for performing collaborative edge inference with an edge server in the same wireless network. Once the overall inference process is finished, this mobile device can obtain the final inference result and calculate the end-to-end inference latency $\ell_{p_t}=\ell^{f}_{p_t}+\ell^{b}_{p_t}$ of $p_t$. Suppose that the linear coefficients ${\boldsymbol{\theta}}$ is a priori known,  the optimal partition point $p^*$ is apparently picked by minimizing the end-to-end inference latency by solving the following optimization problem: $p^*=\mathop{\arg\min}_{p \in \mathcal{P}} {\boldsymbol{\theta}}\cdot\mathbf{x}_p$.

 However, since both ${\boldsymbol{\theta}}^f$ and ${\boldsymbol{\theta}}^b$ are a priori unknown in practice,  the device must try different partition points and use the observed noisy feedback to obtain a good estimate of unknown linear coefficients. Hence, there is a typical \emph{exploitation vs. exploration trade-off} between picking the partition point by solving the above optimization problem based on the latest estimate,  and selecting other partition points to collect diverse feedback for constructing a more accurate estimate of linear coefficients. This is a classical contextual linear bandits problem and there are some well-developed algorithms like Linear Upper Confidence Bound (LinUCB)~\cite{abbasi2011improved} or Linear Thompson Sampling~\cite{DBLP:conf/icml/AgrawalG13} that can solve it.


%

\subsection{Asynchronous Distributed Linear Bandits for  Multiuser Inference-Offloading Systems}

\begin{table}[t]
\caption{ Notations used in online learning}
\label{tab:notations}
\centering
\small
\renewcommand{\arraystretch}{1.2}
\setlength{\tabcolsep}{6pt}
\begin{tabular}{|c|p{0.68\columnwidth}|}
\hline
\textbf{Notation} & \textbf{Description} \\ \hline
$\mathcal{P}$ & collection of all partition points \\ \hline
$T$ & number of rounds \\ \hline
$\ell_p^f,\ell_p^b$ & front-end/back-end latency for partition point $p$ \\ \hline
$\mathbf{x}_p^f,\mathbf{x}_p^b$ & front-end/back-end contextual feature vector for partition point $p$ \\ \hline
$\mathcal{M}$ & collection of all UEs \\ \hline
$\boldsymbol{\theta}^f_{m}$ & front-end parameters of device $m$ \\ \hline
$\boldsymbol{\theta}^b$ & back-end parameters \\ \hline
$\eta_t^f,\eta_t^b$ & front-end/back-end measurement noise \\ \hline
$m_t$ & active UE in round $t$ \\ \hline
$p_m^*$ & optimal partition point for UE $m$ \\ \hline
\end{tabular}
\vspace{-0.1in}
\end{table}

In this subsection, we consider a representative collaborative edge inference scenario where a set $\mathcal{M}$ of $|\mathcal{M}|=M$ UEs independently perform their respective DNN inference tasks with a resource-sufficient edge server over a wireless network.  

As discussed in the previous subsection, each UE in $\mathcal{M}$ is equipped with a pretrained DNN model and can select a partition point for conducting collaborative inference with the edge server. We assume that these mobile devices possess heterogeneous computational capabilities, while they all deploy the same DNN model. This scenario commonly arises in intelligent IoT applications, for example in video analytics or object-detection tasks where diverse devices such as sensors or UAVs run the same intelligent task with the assistance of an edge server~\cite{9144210,DBLP:conf/cogmi/WuLK21, zhu2026real}.

Given that all devices share identical backend computational resources on a common edge server and experience nearly identical wireless uplink conditions, the local latency feedback collected from one device can provide valuable information to accelerate the parameter estimation of other devices. \textbf{Accordingly, cooperative online learning across devices can be naturally leveraged to expedite the identification of individual optimal partition points.} Driven by this insight, we formulate the inter-device cooperative online learning framework as an \emph{asynchronous distributed linear bandit problem}, which is elaborated in the following sections.




In this paper, we consider the asynchronous communication setting. Specifically, in any given time slot, we assume that at most one UE is active in executing collaborative edge inference with the edge server, aligning with the standard assumption in asynchronous distributed linear bandits~\cite{he2022simple,li2022asynchronous}. This requirement can be realized using resource allocation or task scheduling strategies to coordinate device participation~\cite{9144210,dong2021joint}. In practice, concurrent inference requests can be serialized or scheduled by the edge server into a logical participation order. Under the assigned participation intervals for each device, there always exists a well-defined logical order indexed by $t \in [T]$. We refer to each index $t$ as a round, consistent with the terminology used in online learning literature.   In summary, only one device $m \in \mathcal{M}$ is active in round $t$ for conducting collaborative edge inference.


 
 Formally, at each round $t\in[T]$, an arbitrary device $m_t \in \mathcal{M}$ is active for participating in collaborative edge inference. This active device $m_t$ picks a partition point $p_t$ with the corresponding contextual feature $\mathbf{x}_{p_t}:=[\mathbf{x}^f_{p_t},\mathbf{x}^b_{p_t}] \in \mathbb{R}^{d_f+d_b}$ to conduct inference offloading with the edge server and obtains the corresponding end-to-end latency $\ell_t$ in this round. 
 
 
 As mentioned above, $\ell_t$ can be calculated as $\ell_t=\ell^f_t+\ell^b_t$, where $\ell^f_t$ denotes the front-end inference latency and $\ell^b_t$ denotes the edge offloading latency. According to our linear prediction model, the front-end inference latency $\ell^f_t$ is defined as $\ell^f_t:={\boldsymbol{\theta}}^f_{m_t}\cdot\mathbf{x}^f_{p_t}+\eta^f_t$, where  ${\boldsymbol{\theta}}^f_{m_t}\in\mathbb{R}^{d_f}$ is the linear coefficient reflecting the computation capacity of the active device $m_t$. Similarly, the edge offloading latency $\ell^b_t$ is defined as $\ell^b_t:={\boldsymbol{\theta}}^{b}\cdot\mathbf{x}^b_{p_t}+\eta^b_t$, where  ${\boldsymbol{\theta}}^{b}\in\mathbb{R}^{d_b}$ is the linear coefficient reflecting the joint effect of uplink channel conditions of the wireless network and the computation capacity of the edge server on edge offloading latency. 
 
 In this paper, since all mobile devices perform inference offloading to a common edge server over the same wireless network, we assume that the coefficient ${\boldsymbol{\theta}}^{b}$ is identical across all devices, while ${\boldsymbol{\theta}}^f_{m}$ may differ, thereby allowing heterogeneous computational capabilities across distinct devices $m\in\mathcal{M}$.  Below we define $\boldsymbol{\theta}_{m}:=[\boldsymbol{\theta}^f_{m}, \boldsymbol{\theta}^b]\in\mathbb{R}^{d_f+d_b}$ reflecting the overall parameters for $m\in\mathcal{M}$. The noises $\eta^f_t$ and $\eta^b_t$ capture the randomness inherent in the local inference and edge offloading processes.

The goal of all the participating mobile devices in the wireless network over $T$ rounds is to collaboratively minimize the total  cumulative regret, defined as
\begin{equation}
    \begin{aligned}
             \mathrm{Reg}(T):&=\sum_{t=1}^T \left(\Tf_{m_t}\cdot\xb_{p_t}-\min_{p\in\mathcal{P}}\boldsymbol{\theta}_{m_t}\cdot\mathbf{x}_{p} \right)\\
  &  =\sum_{t=1}^T \left\langle \boldsymbol{\theta}_{m_t},\mathbf{x}_{p_t}-\xb_{p^*_{m_t}}\right\rangle,
    \end{aligned}
\end{equation}
where $p^*_{m_t}:=\arg\min_{p\in\mathcal{P}}\Tf_{m_t}\cdot\xb_p$.

We then provide some further explanation of this learning objective. Notice that each individual term  $\Tf_{m_t}\cdot\xb_{p_t}-\min_{p\in\mathcal{P}}\boldsymbol{\theta}_{m_t}\cdot\mathbf{x}_{p}, \forall t\in [T]$   in the summation denotes the difference between the expected latency incurred by selecting the partition point $p_t$ selected by the active device $m_t$ in round $t$, and the expected latency that  $m_t$ would incur if it instead selected its optimal partition point. \textbf{Hence, a smaller cumulative regret  $\mathrm{Reg}(T)$ implies superior inference-acceleration performance among all mobile devices involved in this multiuser collaborative edge-inference system.}

To facilitate the theoretical analysis, we make the following assumptions on the noise terms, system parameters, and contextual features, which are standard in the contextual linear bandit literature~\cite{he2022simple,li2022asynchronous}. We first impose the following assumption on the random noise $\eta^f_t$ and $\eta^b_t$.
\begin{assumption} \label{assumption:noise}
For $i\in\{f,b\}$ and $t\in[T]$, $\eta^i_t$ is conditionally $R_i$-sub-Gaussian given $\mathbf{x}_{p_{1:t}}$, $m_{1:t}$ and $\ell_{1:t-1}$, i.e.,  $\forall \lambda \in \mathbb{R}$,
 \begin{align}
     \mathbb{E}\left[ \exp(\lambda \eta^i_t) \vert \mathbf{x}_{p_{1:t}}, m_{1:t}, \ell_{1:t-1}  \right]\leq \exp(R_i^2\lambda^2/2).
 \end{align}
\end{assumption}

We also assume that the system parameters and contextual features are bounded. Specifically, for the system parameters, we assume that $\Vert \boldsymbol{\theta}^f_{m} \Vert \leq S_f$ for all $m\in \mathcal{M}$ and $\Vert \boldsymbol{\theta}^b \Vert \leq S_b$. Similarly, for the contextual features, we assume that $\Vert \mathbf{x}^{f}_p \Vert \leq L_f$ and $\Vert \mathbf{x}^{b}_p \Vert \leq L_b$ for any partition point $p\in\mathcal{P}$. 






%% file: content/algorithm.tex

In this section, we introduce our proposed collaborative edge inference system, termed Cooperative Autodidactic NeuroSurgeon (CANS), tailored for the multi-user setting.    The key idea behind CANS is to enable cooperation among devices by sharing locally collected feedback to accelerate the identification of each device’s optimal DNN partition. This cooperative decision-making process naturally leads to a distributed linear bandits formulation.

Although there are existing well-established  algorithms such as DisLinUCB~\cite{DBLP:conf/iclr/WangHCW20} and FedLinUCB~\cite{he2022simple} designed for the distributed linear bandits task,  we observe two significant challenges when directly applying these  algorithms to our proposed CANS system.  Therefore, to address these challenges, we further devise a novel distributed online learning algorithm for our CANS system.

\subsection{FedLinUCB and its Limitations in Multiuser Inference-Offloading Systems.}

\begin{figure*}[ht]
    \centering
    \includegraphics[width = 0.8\textwidth]{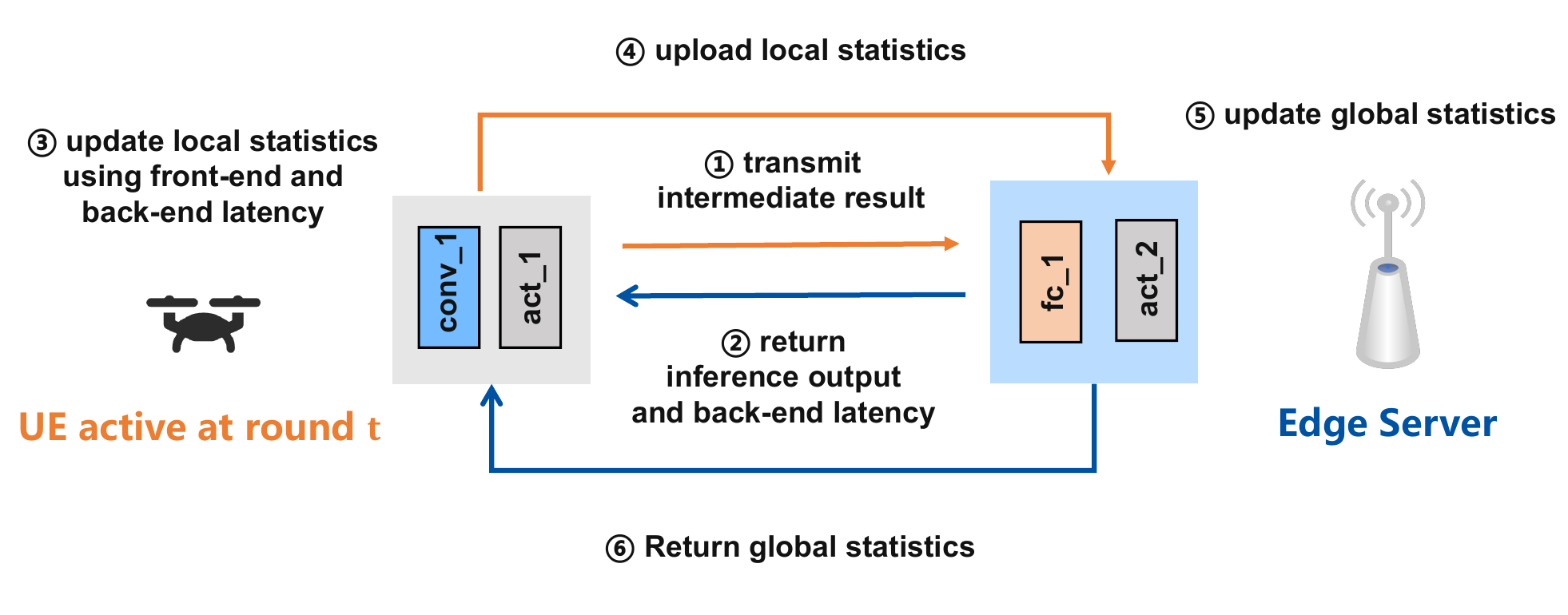}
    \caption{FedLinUCB for asynchronous multiuser collaborative edge inference.}
    \label{fig:coherence}
\end{figure*}

In this subsection, we first introduce a prevalent distributed online learning algorithm termed FedLinUCB~\cite{he2022simple} (Federated Linear UCB) and explain how it works in solving the asynchronous distributed linear bandits problem. We then discuss its limitations when applied in the considered multiuser collaborative edge inference system.

The core component of the FedLinUCB framework is the well-known LinUCB (Linear Upper Confidence Bound) algorithm, a canonical online learning method tailored for linear bandits based on  online linear regression. Specifically, LinUCB updates the linear coefficient of the latency-prediction model using historical feedback via ridge regression. For each round  $t\in[T]$ (task that sequentially arrives), the learner (mobile device) selects an action (partition point) by jointly considering its latency estimated by the current model and an exploration bonus reflecting the uncertainty of prediction, thus balancing the exploration-exploitation trade-off. In specific, the mobile device maintains sufficient statistics $\boldsymbol{\Sigma}_{t}\in\mathbb{R}^{(d_f+d_b)  \times (d_f+d_b) }$ and $\mathbf{b}_{t}\in \mathbb{R}^{(d_f+d_b) \times 1}$ to estimate the linear coefficient $\boldsymbol{\theta}\in\mathbb{R}^{(d_f+d_b)  \times 1}$. For each round $t\in[T]$, the unknown coefficient is estimated by $\widehat{\boldsymbol{\theta}}(t)= \boldsymbol{\Sigma}^{-1}_{t} \mathbf{b}_{t} $, and  the partition point  is selected to be
\begin{align}
    p_t=\mathop{\arg\min}_{p \in \mathcal{P}} \langle\widehat{\boldsymbol{\theta}}(t), \mathbf{x}_p\rangle - \beta\Vert \mathbf{x}_p\Vert_{\boldsymbol{\Sigma}^{-1}_{t}},
\end{align}
where $\langle\widehat{\boldsymbol{\theta}}(t), \mathbf{x}_p\rangle $ denotes the predicted end-to-end latency of partition point $p$ using the current estimate $\widehat{\boldsymbol{\theta}}(t)$.  $\beta\Vert \mathbf{x}_p\Vert_{\boldsymbol{\Sigma}^{-1}_{t}}$ represents the confidence interval of latency prediction.  This term reflects the exploration bonus tailored for linear bandits to balance the exploration-exploitation trade-off in online learning. After the inference offloading finishes, i.e., the final inference result is returned to the device, the practical inference latency $\ell_{t}=\ell^f_t+\ell^b_t$ of selected partition point $p_t$ can be obtained by the device. Then the device updates sufficient statistics as $\boldsymbol{\Sigma}_{t+1}\gets \boldsymbol{\Sigma}_{t} +\mathbf{x}_{p_t} \mathbf{x}^{\top}_{p_t} $ and $\mathbf{b}_{t+1}\gets \mathbf{b}_{t} +  \mathbf{x}_{p_t} \ell_t$.

Then we introduce the basic idea of  applying  FedLinUCB~\cite{he2022simple} in  asynchronous multiuser inference-offloading systems. In round $t\in[T]$, device $m_t$ participates in inference offloading with the edge server. $m_t$ first uses the local prediction model $\widehat{\Tf}_{m_t}(t)$ to select the partition point $p_t\in\mathcal{P}$ based on its current estimate of the inference latency with the corresponding exploration bonus. After receiving the practical latency $\ell_t$ of $p_t$, $m_t$ updates its local sufficient statistics.  Following the elaborate event-triggered communication protocol, $m_t$ will upload its local sufficient statistics to the edge server so that it can help other devices efficiently estimate latency prediction models.   The server then incorporates these uploaded statistics into its global sufficient statistics and returns the latest global statistics to  $m_t$, allowing $m_t$ to use the downloaded global statistics to update $\widehat{\Tf}_{m_t}(t)$. Meanwhile, the local sufficient statistics for all other devices $m \neq m_t, m \in \mathcal{M}$ remain unchanged for this round.

In addition, we observe that in FedLinUCB, the communication of exchanging sufficient statistics involves only  device $m_t$ active in round $t $ and the  server, remaining completely independent of other inactive agents in that round. This asynchronous communication mechanism  is well-suited for multiuser collaborative edge inference systems, as it avoids the straggler issue commonly encountered in synchronous settings. 

However, we notice that there are two major limitations in the original FedLinUCB algorithm for it to work effectively in our proposed CANS system:
\begin{itemize}
    \item \textbf{Cooperation fails among heterogeneous devices.} The original FedLinUCB naively aggregates all devices' local sufficient statistics to estimate the overall coefficients $\Tf^{m}$ including $\Tf^f_m$ and $\Tf^b$  for each device $m\in\mathcal{M}$. However, when devices possess heterogeneous computational capacities, the  parameters $\{\Tf_m\}_{m\in\mathcal{M}}$ vary significantly. In particular, the distance $\Vert \Tf_m-\Tf_n \Vert=\Vert \Tf^f_m-\Tf^f_{n} \Vert$ can be large for any distinct devices $m\neq n$. In such cases, directly sharing exploration experiences will even \emph{worsen} the online learning performance of each device. 
    \item \textbf{Inefficient utilization of offline inference experiences.} In FedLinUCB, all devices initialize their local sufficient statistics with $\Sg_{m,1}=\lambda\mathbf{I}$ and $\Bf_{m,1}=\mathbf{0}$. This design overlooks the valuable local  offline inference experiences (e.g., from early-exit/multi-exit DNN inference), which could serve as a warm start for the online learning stage and thereby enhance overall learning efficiency.
\end{itemize}




\subsection{FedLinUCB with Device Grouping and Local Warm-Start}

Given the limitations of the original FedLinUCB outlined above, we devise a novel distributed online learning algorithm, called FedLinUCB with  Device Grouping and Local Warm-Start (FedLinUCB-DW) to support our proposed CANS framework, which addresses these shortcomings based on theoretically grounded mechanisms.

\begin{figure*}[ht]
    \centering
    \includegraphics[width = 0.7\textwidth]{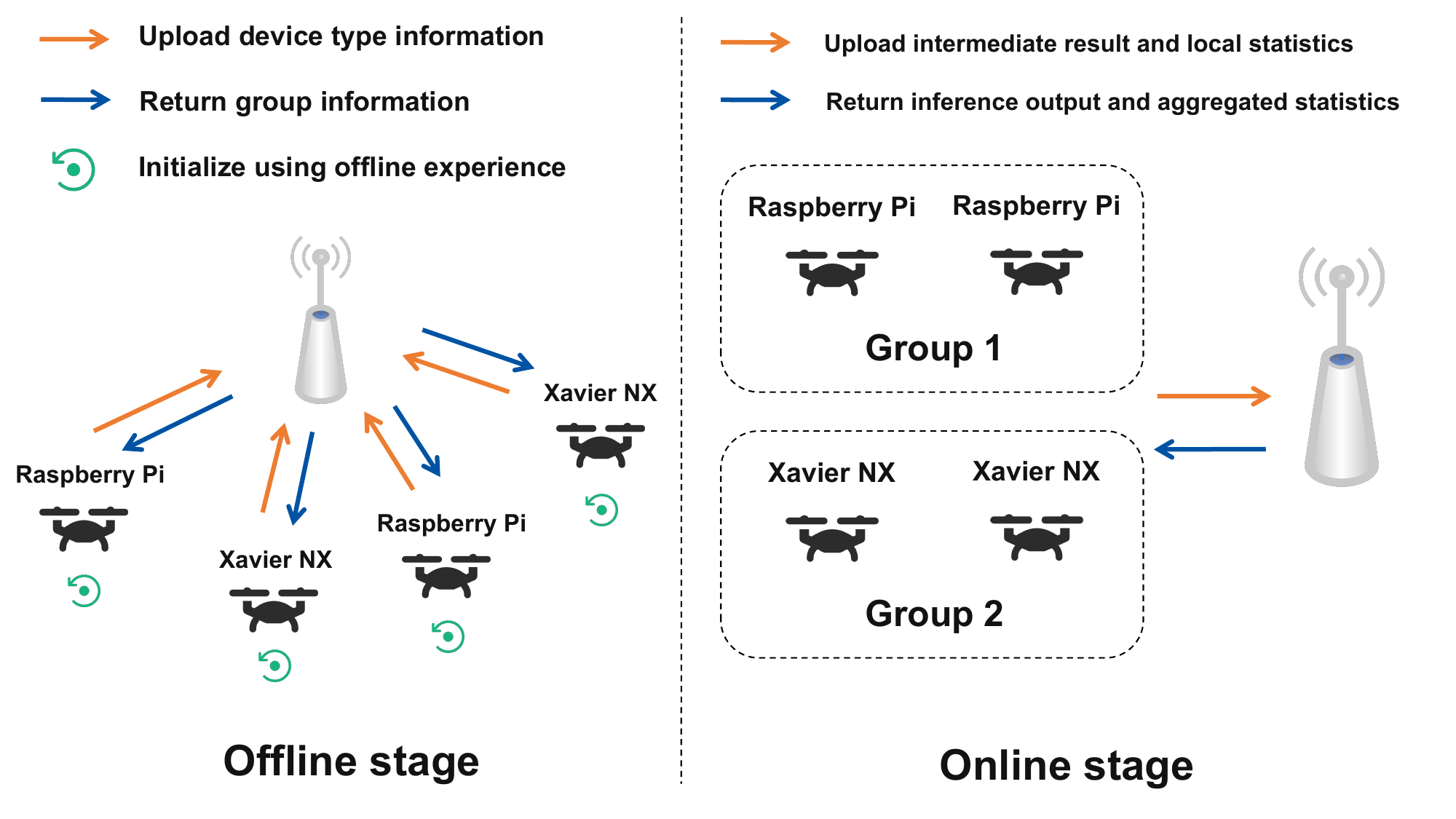}
    \caption{The workflow of FedLinUCB-DW.}
    \label{fig:fedlinucb-dw}
\end{figure*}

\textbf{Heterogeneity-aware device grouping.} As described above, naive aggregation of sufficient statistics for the overall parameter vector $ \Tf_{m} = [\Tf^f_m, \Tf^b] $ in  FedLinUCB deteriorates cooperative online learning under device heterogeneity. In contrast, our proposed FedLinUCB-DW method does not share the sufficient statistics for the overall parameters across all devices.  Instead, the edge server first divides the devices into different groups according to the prior knowledge of device types, which is sent to the server before the cooperative online learning begins.  Since the computational capability of each device strongly depends on its type, we can assume that devices within the same group share identical or closely similar front-end parameters $ \Tf^f_m $. Based on this heterogeneity-aware devices grouping, \textbf{the server can only share information related to the front-end part among devices within the same group, while sharing information related to the back-end part among all the devices.} Below, we provide the mathematical formulation of  devices grouping in FedLinUCB-DW.

\begin{figure}[ht]
    \centering
    \includegraphics[width = 0.5\textwidth]{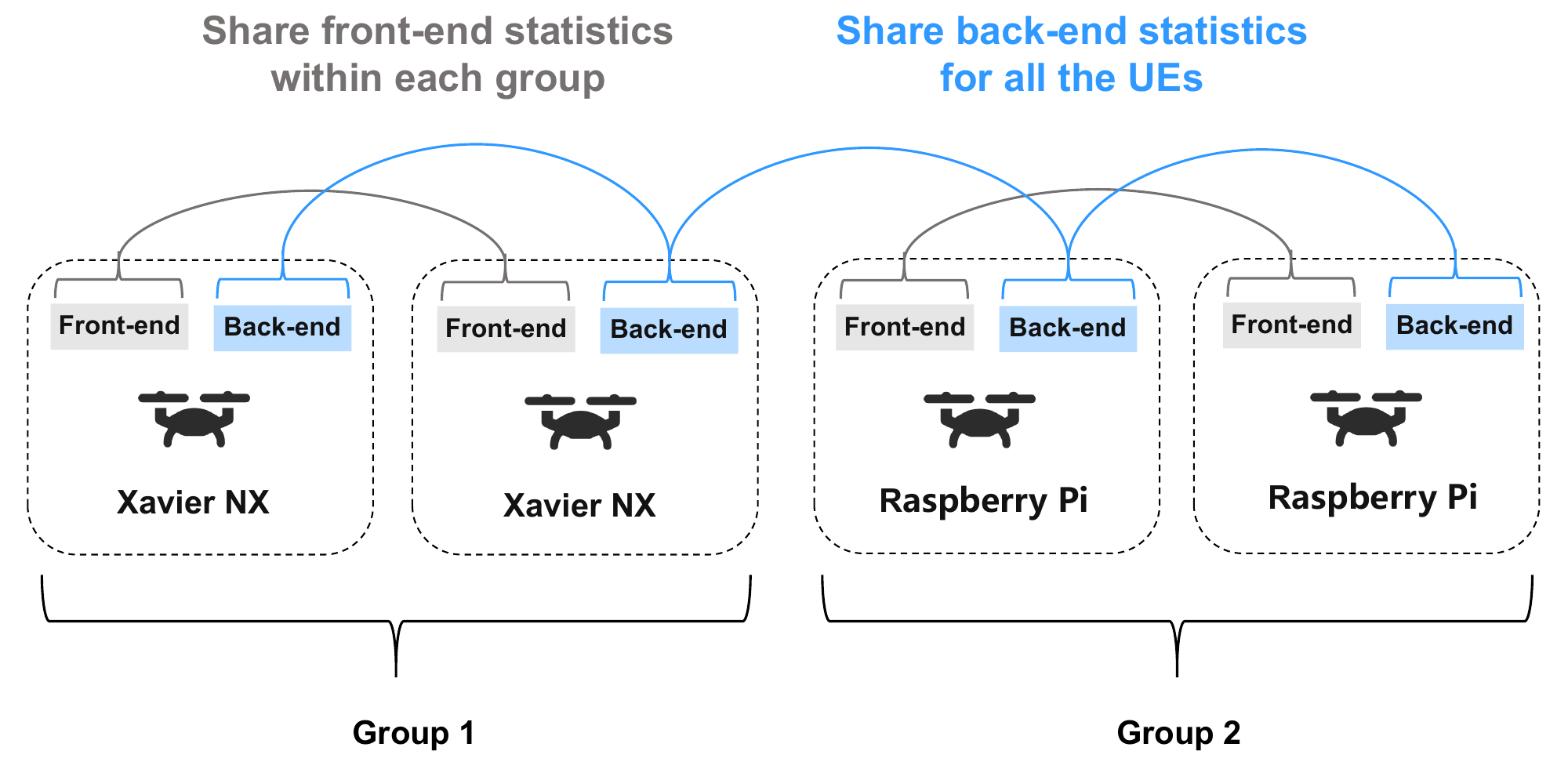}
    \caption{Illustration of device grouping. The server uses device-type information to assign UEs running on Xavier NX devices to one group and UEs running on Raspberry Pi devices to another. During the online stage, the server aggregates front-end statistics related to device capacity within each group. In contrast, back-end statistics are aggregated across all devices.}
    \label{fig:heterogrouping}
    \vspace{-0.1in}
\end{figure}

Formally, assume that a total of $\vert \mathcal{M}\vert=M$ devices participate in CANS,  and these devices can be divided into groups represented as $\mathcal{G}=\{\mathcal{G}_1,\mathcal{G}_2,\ldots,\mathcal{G}_K\}$, where $\mathcal{G}_k, \forall k \in [K]$ denotes the collection of devices with the same type and $K$ denotes the number of device types in CANS. For example, if there are three types of devices (Xavier NX, Orin Nano and Raspberry Pi) participating in CANS, then $K=3$ and  $\mathcal{G}=\{\mathcal{G}_1,\mathcal{G}_2,\mathcal{G}_3\}$. We use the below assumption about the Intra-group similarity for theoretical analysis.
\begin{assumption}\label{assumption:hetero}
    For any two different devices $m,n$ within a particular group $\mathcal{G}_k\in\mathcal{G}$, we have $\Vert \Tf^f_m-\Tf^f_n \Vert\leq \epsilon$.
    \begin{remark}
This assumption is more flexible than those commonly used in prior clustered bandit studies,  which either assume that all devices within a group share identical parameters~\cite{liu2022federated}, or presume the grouping structure exists w.r.t. the overall parameter $\Tf$~\cite{blaser2024federated}. In contrast, our assumption allows devices of the same type to have closely similar—but not exactly identical—computational capacities. We believe this assumption is more realistic in practical applications. 
    \end{remark}
\end{assumption}


\begin{table}[t]
\caption{Notations used in FedLinUCB-DW}
\label{tab:notations}
\centering
\small
\renewcommand{\arraystretch}{1.2}
\setlength{\tabcolsep}{6pt}
\begin{tabular}{|c|p{0.68\columnwidth}|}
\hline
\textbf{Notation} & \textbf{Meaning} \\ \hline
$\widehat{\Tf}^f_m(t), \widehat{\Tf}^b_m(t)$ & estimate of $\Tf^f_m$ and $\Tf^b$ \\ \hline
$\Sg^{f}_{m,t},\Bf^{f}_{m,t}$ & statistics used to estimate $\widehat{\Tf}^f_m(t)$ \\ \hline
$\Sg^{b}_{m,t},\Bf^{b}_{m,t}$ & statistics used to estimate $\widehat{\Tf}^b_m(t)$ \\ \hline
$\Sg^{f,\mathrm{loc}}_{m,t},\Bf^{f,\mathrm{loc}}_{m,t}$ & local  front-end statistics for device $m$ \\ \hline
$\Sg^{b,\mathrm{loc}}_{m,t},\Bf^{b,\mathrm{loc}}_{m,t}$ & local back-end statistics for device $m$ \\ \hline
$\Sg^{f,\mathrm{ser}}_{\mathcal{G}_k,t},\Bf^{f,\mathrm{ser}}_{\mathcal{G}_k,t}$ & server-side front-end statistics for group $\mathcal{G}_k$ \\ \hline
$\Sg^{b,\mathrm{ser}}_{t},\Bf^{b,\mathrm{ser}}_{t}$ & server-side back-end statistics \\ \hline
$\Sg^{f,\mathrm{off}}_{m},\Bf^{f,\mathrm{off}}_{m}$ & offline front-end statistics for device $m$ \\ \hline
\end{tabular}
\end{table}

During the cooperative online learning stage, devices in CANS  should maintain two kinds of estimated coefficients  $\widehat{\Tf}^f_m(t)$ and $ \widehat{\Tf}^b_m(t)$ used to estimate the front-end and back-end latency, respectively. Each device $m\in\mathcal{M}$ learns $\Tf^b$ with all the other devices $n\neq m\in\mathcal{M}$ and learns $\Tf^f_m$ with devices in the same group. This is because we assume that the back-end parameters $\Tf^b$ are the same across all the devices, while the front-end parameters $\Tf^f_m$ are closely similar within a group (devices with the same type). We introduce the complete workflow of FedLinUCB-DW in Algorithm~\ref{alg:main}. For clarity, we summarize the related notations in Table~\ref{tab:notations}. 

\begin{algorithm}[t]
	\caption{FedLinUCB-DW}
	\label{alg:main}
	\begin{algorithmic}[1]\label{algorithm:1}
     \STATE Edge server uses the uploaded device type information to divide $M$ devices into  $K$ groups: $\mathcal{G}=\{\mathcal{G}_k\}_{k\in[K]}$ 
	\STATE Initialize $\Sg^{f}_{m,1}=\Sg^{f}_{\mathcal{G}_k,1}=\lambda_f \mathbf{I}+\sum_{n\in\mathcal{G}_k}\Sg^{f,\mathrm{off}}_{n}$, $\Sg^{b}_{m,1}=\Sg^{b}_{1}=\lambda_b\mathbf{I}$,   $\Bf^{f}_{m,1}=\sum_{n\in\mathcal{G}_k}\Bf^{f,\mathrm{off}}_{n}$, $\Bf^{b}_{m,1}=\mathbf{0}$, $\Sg^{f,\mathrm{loc}}_{m,t}=\mathbf{0}$, $\Sg^{b,\mathrm{loc}}_{m,t}=\mathbf{0}$, $\Bf^{f,\mathrm{loc}}_{m,t}=\mathbf{0}$,  $\Bf^{b,\mathrm{loc}}_{m,t}=\mathbf{0}$, $\widehat{\Tf}^f_m(1)=(\Sg^{f}_{m,1})^{-1}\Bf^{f}_{m,1}$, $ \widehat{\Tf}^b_m(1)=\mathbf{0}$  for each device $m \in [M]$
	\FOR{round $t = 1, 2, \dots, T$}
	\STATE Device $m_t$ is active and selects $p_t$ based on Eq.~\eqref{eq:optimistic_selection} \alglinelabel{algmain:agent}
	\STATE Device $m_t$ executes DNN inference offloading with the edge server based on $p_t$, then obtains $\ell^f_t$ and $\ell^b_t$ 
	\alglinelabel{algmain:reward}
\FOR{$i\in\{f,b\}$}
	\STATE $\Sg^{i,\loc}_{m_t, t}\gets \Sg^{i,\loc}_{m_t, t-1} 
    + \xb^i_{p_t} (\xb^i_{p_t})^{\top}$, \quad $\Bf^{i,\loc}_{m_t, t} \gets 
   \Bf^{i,\loc}_{m_t, t-1} + \ell^i_t \xb^i_{p_t}$
    \alglinelabel{algmain:local_update} {\color{blue}\texttt{//Local update }}
	\IF{$\frac{\det(\Sg^i_{m_t, t} + \Sg^{i,\loc}_{m_t, t})}{\det(\Sg^i_{m_t, t})} > 1 + \alpha_i
    $} \alglinelabel{algmain:criterion} 
    \STATE Device $m_t$ sends $\Sg^{i,\loc}_{m_t, t}$ and $\Bf^{i,\loc}_{m_t, t}$ to 
    server \alglinelabel{algmain:upload} {\color{blue}\texttt{//Upload }}
        \STATE $\textbf{Global\_Update}(\Sg^{i,\loc}_{m_t, t},\Bf^{i,\loc}_{m_t, t})$
        \STATE $\Sg^{i,\loc}_{m_t, t} \gets 0$,\quad $\Bf^{i,\loc}_{m_t, t} \gets 
    0$ \alglinelabel{algmain:reset_local} {\color{blue}\texttt{//Reset local }}
    \STATE $\widehat{\Tf}^i_{m_t}(t+1) \gets {(\Sg^i_{m_t, t+1}})^{-1} \Bf^i_{m_t, t+1}$ \alglinelabel{algmain:estimate2} \\
    {\color{blue}\texttt{//Compute estimate }}
    \ELSE
    \STATE $\Sg^i_{m_t, t+1} \gets \Sg^i_{m_t, t}$,\quad $\Bf^i_{m_t, t+1} 
    \gets \Bf^i_{m_t, t}$, \quad $\widehat{\Tf}^i_{m_t}(t+1) \gets \widehat{\Tf}^i_{m_t}(t)$\alglinelabel{algmain:no_update}
	\ENDIF
\ENDFOR
	\FOR{other inactive agent $m\in [M] \setminus \{m_t\}$}
	\STATE $\Sg^f_{m, t+1} \gets \Sg^f_{m, t}$,\quad $\Bf^f_{m, t+1} 
    \gets \Bf^f_{m, t}$, $\Sg^b_{m, t+1} \gets \Sg^b_{m, t}$,\quad $\Bf^b_{m, t+1} 
    \gets \Bf^b_{m, t}$, \quad$\widehat{\Tf}^f_m(t+1) \gets \widehat{\Tf}^f_m(t)$,  $\widehat{\Tf}^b_m(t+1) \gets \widehat{\Tf}^b_m(t)$
    \alglinelabel{algmain:inactive_agents}
	\ENDFOR
	\ENDFOR
	\end{algorithmic}
\end{algorithm}

In each round $t\in[T]$, device $m_t\in \mathcal{G}_k$ is active and selects a partition point $p_t\in\mathcal{P}$ to conduct collaborative edge inference based on the optimistic criterion defined as
\begin{equation}\label{eq:optimistic_selection}
    \begin{aligned}
        p_t=\arg&\min_{p\in\mathcal{P}}
\Bigg\{
\left\langle \widehat{\boldsymbol{\theta}}^f_{m_t}(t),\mathbf{x}^f_p\right\rangle
-
\mathrm{CB}^f_{m_t,p}(t)\\
&+
\left\langle \widehat{\boldsymbol{\theta}}^b_{m_t}(t),\mathbf{x}^b_p\right\rangle
-
\mathrm{CB}^b_{m_t,p}(t)
\Bigg\},
    \end{aligned}
\end{equation}
where $\mathrm{CB}^f_{m_t,p}(t):=\beta^f_{k,t}\|\mathbf{x}^f_p\|_{(\mathbf{\Sigma}^f_{m_t,t})^{-1}}$ and $\mathrm{CB}^b_{m_t,p}(t):=\beta^b_t\|\mathbf{x}^b_p\|_{(\mathbf{\Sigma}^b_{m_t,t})^{-1}}$. These two terms reflect the uncertainty of the estimated front-end and back-end latency, respectively, which encourages exploration. The hyper-parameters $\beta^f_{k,t}$ and $\beta^b_t$ will be introduced below.

Once the inference offloading finishes,  $m_t$ obtains the local inference latency $\ell_t^f$ locally and the back-end inference latency $\ell_t^b$ returned by the server. Then, $m_t$ updates its local front-end and back-end statistics using $\ell^f_t$ and $\ell^b_t$, respectively (Line~\ref{algmain:local_update} in Alg.~\ref{alg:main}). Following FedLinUCB~\cite{he2022simple}, we also adopt the matrix-determinant-based criterion (Line~\ref{algmain:criterion} in Alg.~\ref{alg:main}) to decide when the information accumulated in the current local statistics is sufficient to upload to the server for updating the global model. Due to computational heterogeneity across devices, we apply this criterion independently to the front-end and back-end parts. If this criterion is satisfied, it implies that the local statistics would help significantly reduce the uncertainty of estimating the front-end (back-end) coefficient $\Tf^f_m$ ($\Tf^b$). Consequently, $m_t$ will share its progress by uploading the local statistics to the edge server (Line~\ref{algmain:upload} in Alg.~\ref{alg:main}).

\begin{algorithm}[t]
	\caption{Global updating sub-routine}
	\label{alg:global_update}
	\begin{algorithmic}[1]\label{algorithm:2}
	\STATE Input $\Sg^{i,\loc}_{m_t, t},\Bf^{i,\loc}_{m_t, t}$
\IF{$i=f$}
    \STATE $\Sg^{f,\ser}_{\mathcal{G}_k,t} \gets \Sg^{f,\ser}_{\mathcal{G}_k,t} + \Sg^{f,\loc}_{m_t, t}$, \quad
    $\Bf^{f,\ser}_{\mathcal{G}_k,t} \gets \Bf^{f,\ser}_{\mathcal{G}_k,t}  + \Bf^{f,\loc}_{m_t, t}$\alglinelabel{algmain:update_server_f}
    {\color{blue}\texttt{//Global update for front-end statistics of group $\mathcal{G}_k$ }}
    \STATE Server sends $\Sg^{f,\ser}_{\mathcal{G}_k,t}$ and $\Bf^{f,\ser}_{\mathcal{G}_k,t}$ back to agent $m_t$ \alglinelabel{algmain:download_f} {\color{blue}\texttt{//Download }}
    \STATE $\Sg^f_{m_t, t+1} \gets \Sg^{f,\ser}_{\mathcal{G}_k,t}$,\quad $\Bf^f_{m_t, t+1} 
    \gets \Bf^{f,\ser}_{\mathcal{G}_k,t}$ \alglinelabel{algmain:estimatef}
    \ELSE
    \STATE $\Sg^{b,\ser}_t \gets \Sg^{b,\ser}_t + \Sg^{b,\loc}_{m_t, t}$, \quad
    $\Bf^{b,\ser}_t \gets \Bf^{b,\ser}_t  + \Bf^{b,\loc}_{m_t, t}$\alglinelabel{algmain:update_server_b}
    {\color{blue}\texttt{//Global update for back-end statistics of all devices}}
    \STATE Server sends $\Sg^{b,\ser}_t$ and $\Bf^{b,\ser}_t$ back to agent $m_t$ \alglinelabel{algmain:download_b} {\color{blue}\texttt{//Download }}
    \STATE $\Sg^b_{m_t, t+1} \gets \Sg^{b,\ser}_t$,\quad $\Bf^b_{m_t, t+1} 
    \gets \Bf^{b,\ser}_t$ \alglinelabel{algmain:estimateb}
    \ENDIF
	\end{algorithmic}
\end{algorithm}

Then the server updates the global statistics according to the global updating sub-routine (Algorithm~\ref{alg:global_update}). If the uploaded local statistics pertains to the front-end ($\Sg^{f,\loc}_{m_t,t}, \Bf^{f,\loc}_{m_t,t}$), then the server updates the front-end global statistics $\Sg^{f,\ser}_{\mathcal{G}_k,t}$ and $\Bf^{f,\ser}_{\mathcal{G}_k,t}$, for the group $\mathcal{G}_k$ to which device $m_t$ belongs (i.e., $m_t\in \mathcal{G}_k$) (Line~\ref{algmain:update_server_f} in Alg.~\ref{alg:global_update}). Otherwise, the uploaded statistics is ${\Sg^{b,\loc}_{m_t,t}, \Bf^{b,\loc}_{m_t,t}}$, and the server updates the back-end global statistics ($\Sg^{b,\ser}_{t},\Bf^{b,\ser}_{t}$), which can be utilized by all devices (Line~\ref{algmain:update_server_b} in Alg.~\ref{alg:global_update}).  Afterwards, $m_t$ downloads the latest global statistics from the server, and  updates its corresponding
 local statistics and estimated parameters.  
 
If the criterion is not met, the communication between device $m_t$ and the server is not triggered. In this case, the newly collected information remains in the local buffer of $m_t$, while the server-side statistics and the estimator used by $m_t$ remain unchanged until the next synchronization (Line~\ref{algmain:no_update} in Alg.~\ref{alg:main}). At last, all the other inactive devices remain unchanged (Line~\ref{algmain:inactive_agents} in Alg.~\ref{alg:main}).



\textbf{Warm-start with local offline inference experiences.} As is known, devices deployed with  DNN models can also execute efficient inference locally based on early exit/multi-exit DNN inference without the help of inference offloading. Early-exit inference allows samples to terminate inference at intermediate layers when sufficient prediction confidence is achieved, reducing computational cost~\cite{DBLP:journals/csur/PSCPC25,DBLP:conf/icc/PachecoSCMHC23}. Devices perform early-exit inference offline and record exit points and corresponding inference latency, which can be used as offline exploration experience to warm-start online collaborative inference.


In this paper, we assume that before cooperative online learning starts, each device $m\in\mathcal{M}$ is equipped with offline local inference experiences, denoted by $D^{\mathrm{off}}_m=\{\Tilde{\xb}^f_{m,\tau}, \Tilde{\ell}^f_{m,\tau}\}_{\tau=1}^{n_{\mathrm{off}}}$. Specifically, the pair $(\Tilde{\xb}^f_{m,\tau}, \Tilde{\ell}^f_{m,\tau})$ represents the $\tau$-th offline inference experience of device $m$, which is collected when device $m$ selects the partition point $p_\tau\in\mathcal{P}$ to perform early-exit inference. Here, $\Tilde{\xb}^f_{m,\tau}$ denotes the contextual feature associated with partition point $p_\tau$, while $\Tilde{\ell}^f_{m,\tau}$ denotes the observed inference latency. We further assume that the computing capacity of each device remains unchanged between the offline and online stages. Therefore, the offline latency follows the same linear model as the online latency, i.e., $\Tilde{\ell}^f_{m,\tau}=\langle\boldsymbol{\theta}^f_m,\Tilde{\xb}^f_{m,\tau}\rangle+\Tilde{\eta}^f_{m,\tau}$, where $\Tilde{\eta}^f_{m,\tau}$ denotes the corresponding observation noise.

Based on the offline experiences $D^{\mathrm{off}}_m$, device $m$ constructs the offline front-end sufficient statistics as $\Sg^{f,\mathrm{off}}_{m}=\sum_{\tau=1}^{n_{\mathrm{off}}}\Tilde{\xb}^f_{m,\tau}(\Tilde{\xb}^f_{m,\tau})^{\top}$ and $\Bf^{f,\mathrm{off}}_{m}=\sum_{\tau=1}^{n_{\mathrm{off}}}\Tilde{\ell}^f_{m,\tau}\Tilde{\xb}^f_{m,\tau}$. These statistics are then used to initialize $\Sg^{f}_{m,t}$ and $\Bf^{f}_{m,t}$ for the subsequent cooperative online learning process. As shown in the following theoretical analysis, this warm-start mechanism leverages offline local inference experiences to improve the regret performance in online learning, thereby accelerating the identification of the optimal DNN partition.


\subsection{Theoretical Performance Guarantee}
In this subsection, we provide theoretical guarantees for the proposed FedLinUCB-DW algorithm by deriving an upper bound on the cumulative regret. We further analyze the computational and communication complexity of FedLinUCB-DW.
\begin{theorem} Under Assumptions~\ref{assumption:noise} and~\ref{assumption:hetero}, if we set the back-end confidence  radius $\beta^b_{t}=\mathcal{O}(\sqrt{\lambda_b}S_b+(\sqrt{1+M\alpha_b}+M\sqrt{2\alpha_b})(R_b\sqrt{d_b\log((1+TL_b^2/\lambda_b)/\delta)}+\sqrt{\lambda_b}S_b))$ and front-end confidence radius $\beta^f_{k,t}=\mathcal{O}(\sqrt{\lambda_f}S_f+(\sqrt{1+M_k\alpha_f}+M_k\sqrt{2\alpha_f})(R_f\sqrt{d_f\log((1+\bar{N}_k(T)L_f^2/\lambda_f)/\delta)}+\sqrt{\lambda_f}S_f+\epsilon L_f \sqrt{\bar{N}_k(T)}))$, with probability at least $1-\delta$, when $\epsilon=1/(M\sqrt{T})$, the cumulative regret $\mathrm{Reg}(T)$ satisfies
\begin{equation}
    \begin{aligned}
       & \mathrm{Reg}(T)\\
       & \leq \widetilde{\mathcal{O}}\Bigg(
        \beta^b_T
        \sqrt{
        d_b(1+M\alpha_b)T
        \log\left(1+\frac{TL_b^2}{\lambda_b d_b}\right)}
        \\
        &+\sum_{k=1}^K
        \beta^f_{k,T}
        \sqrt{
        d_f(1+M_k\alpha_f)N_k(T)
        \log\left(1+\frac{N_k(T)L_f^2}{d_f(\lambda_f+\rho_k)}\right)}
        \Bigg),
    \end{aligned}
\end{equation}
where $M_k=|\mathcal{G}_k|$ denotes the number of devices in group $\mathcal{G}_k$, and $N_k(T)$ denotes the total number of online participations of devices in group $\mathcal{G}_k$ up to round $T$. Moreover, $\bar{N}_k(T)=M_kn_{\mathrm{off}}+N_k(T)$ denotes the total number of offline and online front-end samples associated with group $\mathcal{G}_k$ up to round $T$, and $\rho_k$ denotes the minimum eigenvalue of the group-level offline covariance matrix $\mathbf{\Sigma}^{f,\mathrm{off}}_{\mathcal{G}_k}:=\sum_{n\in\mathcal{G}_k}\mathbf{\Sigma}^{f,\mathrm{off}}_n$.
\end{theorem}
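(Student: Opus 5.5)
The plan is to follow the asynchronous FedLinUCB analysis of~\cite{he2022simple}, applied separately to the back-end block and to each front-end group.

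\textbf{Step 1: split the regret into blocks.} Since $\langle\Tf_{m_t},\xb_p\rangle=\langle\Tf^f_{m_t},\xb^f_p\rangle+\langle\Tf^b,\xb^b_p\rangle$, the optimistic rule~\eqref{eq:optimistic_selection} gives, on a good event $\mathcal{E}$ where each block's true parameter lies in its confidence ellipsoid,
\[
\langle\Tf_{m_t},\xb_{p_t}-\xb_{p^*_{m_t}}\rangle\le 2\mathrm{CB}^f_{m_t,p_t}(t)+2\mathrm{CB}^b_{m_t,p_t}(t).
\]
Summing over $t$ separates $\mathrm{Reg}(T)$ into one back-end sum over all $T$ rounds and $K$ front-end sums. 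The $k$-th front-end sum runs only over the $N_k(T)$ rounds with $m_t\in\mathcal{G}_k$. Because group $k$'s front-end statistics are touched only in those rounds, each block is an asynchronous FedLinUCB instance: the back-end has $M$ agents, and group $k$ has $M_k$ agents.

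\textbf{Step 2: confidence sets.} For the back-end, I follow the argument in~\cite{he2022simple}. Let $\Sg^{b,\mathrm{all}}_t$ be the virtual Gram matrix containing every sample up to round $t$. The determinant trigger, together with at most $M$ pending local buffers, gives
\[
\Sg^b_{m,t}\succeq \Sg^{b,\mathrm{all}}_t/(1+M\alpha_b).
\]
The self-normalized bound of~\cite{abbasi2011improved} on the virtual statistics, combined with bounds on the buffered terms, then produces the factor $\sqrt{1+M\alpha_b}+M\sqrt{2\alpha_b}$ and hence $\beta^b_t$.

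For group $k$, the warm-started statistics are
\[
\Sg^f_{m,t}=\lambda_f\mathbf{I}+\Sg^{f,\mathrm{off}}_{\mathcal{G}_k}+(\text{online group data}).
\]
The data mixes devices with different $\Tf^f_n$, so I write each response as $\langle\Tf^f_m,\xb\rangle+\langle\Tf^f_n-\Tf^f_m,\xb\rangle+\eta$. The estimation error then has three parts: a regularization bias of order $\sqrt{\lambda_f}S_f$, a noise term controlled by the self-normalized bound with at most $\bar N_k(T)$ samples, and a misspecification term
\[
\Big\|\sum_s \xb_s\langle\Tf^f_{n_s}-\Tf^f_m,\xb_s\rangle\Big\|_{\Sg^{-1}}\le \epsilon L_f\sqrt{\bar N_k(T)}.
\]
This bound follows from Cauchy--Schwarz and $\sum_s\|\xb_s\|^2_{\Sg^{-1}}\le \bar N_k(T)$, using Assumption~\ref{assumption:hetero}. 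Together these recover $\beta^f_{k,t}$. A union bound over the $K+1$ blocks gives $\mathbb{P}(\mathcal{E})\ge 1-\delta$.

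\textbf{Step 3: elliptical potentials.} For each block I sum the widths. The matrix comparison above gives
\[
\|\xb\|_{(\Sg_{m,t})^{-1}}\le\sqrt{1+M\alpha}\,\|\xb\|_{(\Sg^{\mathrm{all}}_t)^{-1}},
\]
and the elliptical potential lemma with Cauchy--Schwarz then bounds each sum. For the back-end this gives $\sqrt{d_b(1+M\alpha_b)T\log(1+TL_b^2/(\lambda_bd_b))}$. For the front-end, the virtual Gram matrix starts at
\[
\lambda_f\mathbf{I}+\Sg^{f,\mathrm{off}}_{\mathcal{G}_k}\succeq(\lambda_f+\rho_k)\mathbf{I},
\]
so the log-determinant ratio is at most $d_f\log(1+N_k(T)L_f^2/(d_f(\lambda_f+\rho_k)))$. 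This is where the warm-start gain enters the bound. Multiplying by $\beta^b_T$ and $\beta^f_{k,T}$, using that the radii are monotone in $t$, yields the stated bound. With $\epsilon=1/(M\sqrt T)$, the misspecification contribution $\epsilon L_f\sqrt{\bar N_k}$ is $\mathcal{O}(1)$ up to the $M_kn_{\mathrm{off}}$ term, which is absorbed into $\widetilde{\mathcal O}$.

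\textbf{Main obstacle.} The hardest step is Step 2 for the front-end, because asynchrony and misspecification interact. The local buffers and stale downloads of other group members carry bias relative to $\Tf^f_m$. Device $m$'s estimator also contains the offline data of other devices, which are biased relative to $\Tf^f_m$ as well. I would first bound everything against the virtual group estimator, built from all offline and online group data, and then transfer to $\Sg^f_{m,t}$ via the determinant-trigger comparison. The key check is that the bias is counted once, with $\sqrt{\bar N_k}$, and is not additionally multiplied by $M_k$ beyond the factor already written in $\beta^f_{k,t}$.
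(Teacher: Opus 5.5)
Your architecture is the same as the paper's: a block split via the optimistic rule, an all-information group estimator carrying an explicit heterogeneity bias, transfer through determinant-trigger comparisons, and an elliptical potential whose base matrix $\lambda_f\mathbf{I}+\Sg^{f,\mathrm{off}}_{\mathcal{G}_k}\succeq(\lambda_f+\rho_k)\mathbf{I}$ carries the warm-start gain. However, Step 3 rests on a false inequality. The comparison $\Sg^b_{m,t}\succeq\Sg^{b,\mathrm{all}}_t/(1+M\alpha_b)$ (and its front-end analogue) fails for the matrix a device holds at a generic round. Suppose $m$ synchronizes at $t_1$ and then stays idle while other devices collect and upload many samples. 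At $m$'s next active round its held matrix is still the server matrix of round $t_1$, whereas $\Sg^{\mathrm{all}}_t$ can be arbitrarily larger. The trigger bounds each pending buffer against the server, and $m$'s own buffer against $m$'s held matrix. It says nothing about how far the server has moved since $m$'s last download.

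What does hold is $\Sg^{\mathrm{ser}}_t\succeq\Sg^{\mathrm{all}}_t/(1+M_k\alpha_f)$ and $\Sg^{\mathrm{ser}}_t\succeq(\alpha_f\lambda_f\mathbf{I}+\Sg^{\mathrm{loc}}_{m,t})/(2\alpha_f)$. So the comparison is valid for the held matrix only immediately after a synchronization. That suffices for Step 2 if you prove the confidence bound at synchronization rounds and inherit it in between, since the estimator and matrix are frozen; this is what the paper does. It does not suffice for summing the widths. There the paper uses the restricted comparison, valid only while $m$ is the sole active device of its group between its synchronization at $t_1$ and round $t_2$. It combines this with the communication-epoch reordering argument of FedLinUCB and obtains $\sum_{t:\,m_t\in\mathcal{G}_k}\|\xb^f_{p_t}\|_{(\Sg^f_{m_t,t})^{-1}}\le C^f_k(T)+\sqrt{2(1+M_k\alpha_f)N_k(T)\Gamma^f_k(T)}$. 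Here $\Gamma^f_k(T)=\log\bigl(\det\Sg^{f,\mathrm{all}}_{\mathcal{G}_k,T}/\det(\lambda_f\mathbf{I}+\Sg^{f,\mathrm{off}}_{\mathcal{G}_k})\bigr)$, and the additive boundary term is $C^f_k(T)=\mathcal{O}(M_kd_f\Gamma^f_k(T))$, with analogously $C^b(T)=\mathcal{O}(Md_b\Gamma^b(T))$ for the back-end. These terms are lower order and are absorbed into $\widetilde{\mathcal{O}}$. The reordering step is the missing idea; without it, your direct application of the elliptical potential lemma is unjustified.

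A smaller slip: your justification of the misspecification bound does not give $\epsilon L_f\sqrt{\bar{N}_k(T)}$. The triangle inequality followed by Cauchy--Schwarz over $\|\xb_s\|_{\Sg^{-1}}$ gives $\epsilon L_f\sqrt{\bar{N}_k(T)}\,(\sum_s\|\xb_s\|^2_{\Sg^{-1}})^{1/2}$. That is $\epsilon L_f\sqrt{d_f\bar{N}_k(T)}$ via the trace bound, and $\epsilon L_f\bar{N}_k(T)$ with the inequality you quote. The latter is of order $\sqrt{T}/M$ when $\epsilon=1/(M\sqrt{T})$ and would ruin the $\sqrt{T}$ rate. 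Use the dual-norm argument instead, as the paper does. For $\|\mathbf{z}\|_{\Sg}\le 1$ and $|c_s|\le\epsilon L_f$, one has $\mathbf{z}^\top\sum_s\xb_sc_s\le(\sum_s(\mathbf{z}^\top\xb_s)^2)^{1/2}(\sum_sc_s^2)^{1/2}\le\epsilon L_f\sqrt{\bar{N}_k(T)}$, because $\sum_s\xb_s\xb_s^\top\preceq\Sg$.
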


\textbf{Case study.} We next present a simple case study to illustrate the benefits of device cooperation and warm-starting with offline experience in accelerating the identification of optimal partition points. Suppose that each device type contains the same number of devices, i.e., $M_k=\frac{M}{K}$ for all $k\in[K]$. In addition, we assume that each device $m\in\mathcal{M}$ participates in the same number of rounds, denoted by $T_1$. Thus, we have $T=MT_1$ and $N_k(T)=\frac{MT_1}{K}$. Furthermore, the minimum eigenvalue of $\Sg^{f,\mathrm{off}}_{\mathcal{G}_k}$ is set to $\rho$ for each group $\mathcal{G}_k$. Under this setting, the cumulative regret averaged over the number of devices, i.e., $\mathrm{Reg}(T)/M$, satisfies
\begin{equation}\label{eq:regret_bound_2}
    \begin{aligned}
       & \frac{\mathrm{Reg}(T)}{M}\\
     &   \leq \widetilde{\mathcal{O}}\left(
        d_b\sqrt{\frac{T_1}{M}}
        +
        d_f\sqrt{
        T_1\cdot\frac{K}{M}
        \cdot
        \log\left(1+\frac{MT_1L_f^2}{K d_f(\lambda_f+\rho)}\right)}
        \right).
    \end{aligned}
\end{equation}

From Eq.~\eqref{eq:regret_bound_2}, we observe that the upper bound on the average cumulative regret becomes tighter as the number of participating devices $M$ increases and the number of device types $K$ decreases. This indicates that stronger collaboration among more devices, together with higher device homogeneity, i.e., more devices belonging to the same type, enables each device to identify its optimal partition point more efficiently. Furthermore, a larger $\rho$ indicates that the group-level offline covariance matrix is better conditioned, meaning that the offline experiences provide richer coverage over different feature directions. This reduces the uncertainty in front-end parameter estimation and leads to a tighter regret bound, thereby accelerating the identification of the optimal partition point during the online learning phase. Due to space limitations, the detailed proof is deferred to the appendix.

\subsection{Computation and Communication Complexity}

We first analyze the computational complexity of each device. Following the analysis in~\cite{zhang2021autodidactic}, the computational complexity  is $\max\{\mathcal{O}(d_f^3+d_b^3),\mathcal{O}((P+2)(d_f^2+d_b^2))\}$. We measure the communication complexity by the number of communication events between devices and the server~\cite{he2022simple}. The communication complexity of the overall system is thus $\mathcal{O}\left(\sum_{k=1}^{K}\frac{M_kd_f}{\alpha_f}\log\left(1+\frac{N_k(T)L_f^2}{d_f(\lambda_f+\rho_k)}\right)+\frac{Md_b}{\alpha_b}\log\left(1+\frac{TL_b^2}{d_b\lambda_b}\right)\right)$. Due to space limitations, we defer the detailed proof of the communication complexity to the appendix.

Since $d_f$ and $d_b$ are typically small in our formulation, the computational overhead mainly comes from low-dimensional matrix operations and is negligible compared with regular deep inference tasks. Moreover, each communication event primarily involves transmitting the compact sufficient statistics maintained by LinUCB, rather than frequently exchanging raw inference data. Therefore, FedLinUCB-DW incurs low computation and communication overhead, making it well suited for deployment on resource-constrained edge devices.

%% file: content/experiment.tex
In this section, we conduct both the simulation and hardware prototype experiments to validate the superiority of the proposed method over other competitive baselines. Below, we first introduce the comparison baselines and then introduce the simulation and hardware prototype experiments, respectively.
\subsection{Baseline methods}
In this subsection, we introduce the following baseline approaches used in experiments.
\begin{enumerate}
  \item \textbf{LinUCB for ANS:} This baseline deploys ANS~\cite{zhang2021autodidactic} by restricting each UE to independently execute the LinUCB algorithm~\cite{abbasi2011improved} for partition point selection, devoid of inter-device information sharing.
    \item \textbf{FedLinUCB for ANS:} This baseline uses FedLinUCB algorithm~\cite{he2022simple} to enable cooperation among UEs by sharing sufficient statistics used for estimating the overall system parameters, without considering inter-device computational heterogeneity.
    \item \textbf{Warm-Start LinUCB for ANS:} Each UE deploys ANS based on the Warm-Start LinUCB algorithm~\cite{DBLP:journals/kais/OetomoPBR23}, which initializes the local sufficient statistics using offline early-exit inference experience, without cooperating with other UEs.
    \item \textbf{Random:} Each UE selects  DNN partition points randomly to conduct collaborative edge inference.
      \item \textbf{Purely local inference:} All inference computations are conducted on UE locally without the assistance of the edge server. 
\end{enumerate}

\subsection{Simulation experiment}
In this subsection,  we develop a simulation environment with heterogeneous computing devices to demonstrate the superior performance of the proposed FedLinUCB-DW method compared with baseline approaches.

\textbf{Heterogeneous edge devices.} Our experimental setup comprises $N=25$ UEs, categorized into three performance tiers to simulate diverse edge computational capabilities. The High-Performance Group (30\%) represents high-end edge GPUs (NVIDIA Jetson Orin Nano, $20.0$ TOPS); the Mid-Performance Group (40\%) corresponds to mid-range accelerators (NVIDIA Jetson Xavier NX, $10.5$ TOPS)~\cite{jeon2021run}; and the Low-Performance Group (30\%) captures resource-constrained devices (Raspberry Pi 5, $75.0$ GFLOPS), utilizing NEON-optimized CPU execution~\cite{DBLP:journals/corr/abs-2108-09457}.


\textbf{Server and network configuration.} The edge-side is supported by a powerful back-end server (e.g., NVIDIA H100/A100 class) with a peak AI compute capacity of $312$ TFLOPS. Wireless communication is configured  with the available bandwidth $B=10$  Mbps~\cite{huang2023adversarial}.

\textbf{Backbone model and action space.} We utilize the VGG-16, ResNet-50 and ViT-16 architectures as the backbone models, which are with the corresponding action space $\mathcal{P}$ consisting of $23, 20, 15$ distinct partition points, respectively. Each partition point $p \in \mathcal{P}$ is characterized by a three-dimensional contextual feature vector $ [\text{MAC}^p_{front}, \text{Data}^p_{trans}, \text{MAC}^p_{back}]^T \in \mathbb{R}^3$, where $\text{MAC}^p_{front}$ and $\text{MAC}^p_{back}$ denote the MAC units of the front-end and back-end layers, respectively. $\text{Data}^p_{trans}$ denotes the size of the intermediate result to be transmitted at the partition point. For CNN-based models, we use the online analytical tool NetscopeAnalyzer  to
 calculate the computational workload of each layer (in terms
 of the number of MAC units) and
 the intermediate data size between the layers~\cite{netscope_analyzer}. For ViT-16 model, we employ the fvcore library to extract the cumulative MAC operations for each candidate partition.

\textbf{Learning protocol.} We consider an online learning protocol over $T=2500$ rounds. In each round $t \in [T]$, one UE $m_t \in \mathcal{M}$ becomes active and performs collaborative edge inference with the edge server. The parameter
vector of $m_t$ is defined as $\boldsymbol{\theta}_{m_t} := [1/f_{m_t}, 1/B, 1/f_{S}]^\top$, representing the reciprocal of the computational and communication capacities. $f_{m_t}$ and $f_{S}$ denote the computational speed of device $m_t$ and the edge server, respectively. For a partition point $p_t$ selected by UE $m_t$, the realized latency is decomposed into front-end and back-end components. Specifically, the front-end latency $\ell^f_t$ and back-end latency $\ell^b_t$ are modeled as:
\begin{align}
\ell^f_t &= \frac{MAC^{p_t}_{front}}{f_{m_t}} + \eta^f_t, \\
\ell^b_t &= \frac{\text{Data}^{p_t}_{trans}}{B} + \frac{MAC^{p_t}_{back}}{f_{S}} + \eta^b_t,
\end{align}
where $\eta^f_t, \eta^b_t \sim \mathcal{N}(0, \sigma^2)$ are independent Gaussian noise terms representing measurement errors and network jitter in practical deployments. 

To emulate the offline early-exit inference experience adopted in the proposed FedLinUCB-DW and Warm-Start LinUCB methods, each UE randomly selects partition points to perform local inference for five rounds prior to the online learning phase. The resulting latency feedback is then collected to construct local statistics, which enable more accurate and efficient estimation of front-end parameters during the subsequent online phase.


\textbf{Evaluation Metrics.} To evaluate the performance of CANS, the following metrics are used: 
\begin{enumerate}
 \item  \textbf{Cumulative regret.} For each round's active UE, the instantaneous regret is defined as the difference between the latency of the selected partition point and the  latency of the optimal partition point. The cumulative regret is the sum of  instantaneous regrets over completed rounds.
    \item  \textbf{Average latency.} For each  round, we first record the end-to-end inference latency of the UE active in that round. The final reported metric is the round-averaged latency, obtained by averaging these per-round latency values over all completed rounds.
    \item  \textbf{Latency estimation error.} For each round's active UE, the estimated latency is computed using learned parameters and the feature vector of the selected partition point, while the true latency is derived from true parameters. We report the average discrepancy between the estimated and true latency across  completed  rounds.
\end{enumerate}

\begin{figure*}[htbp]
    \captionsetup[subfloat]{font=footnotesize}	
    \centering
    \subfloat[Cumulative Regret for VGG]{
        \includegraphics[width = 0.32\textwidth]{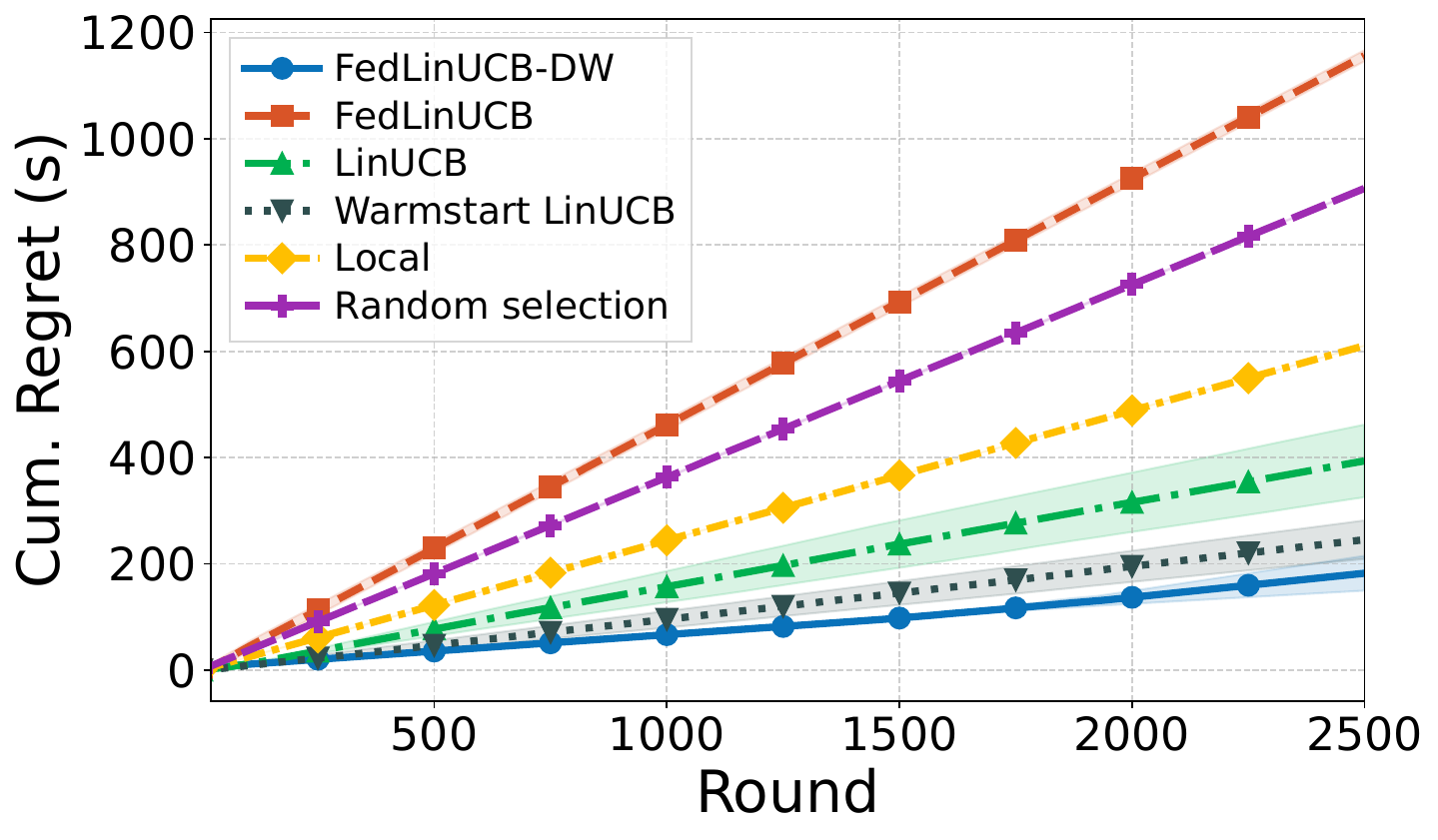}
        \label{fig:regret_vgg} 
    }
    \subfloat[Cumulative Regret for ResNet]{
        \includegraphics[width = 0.32\textwidth]{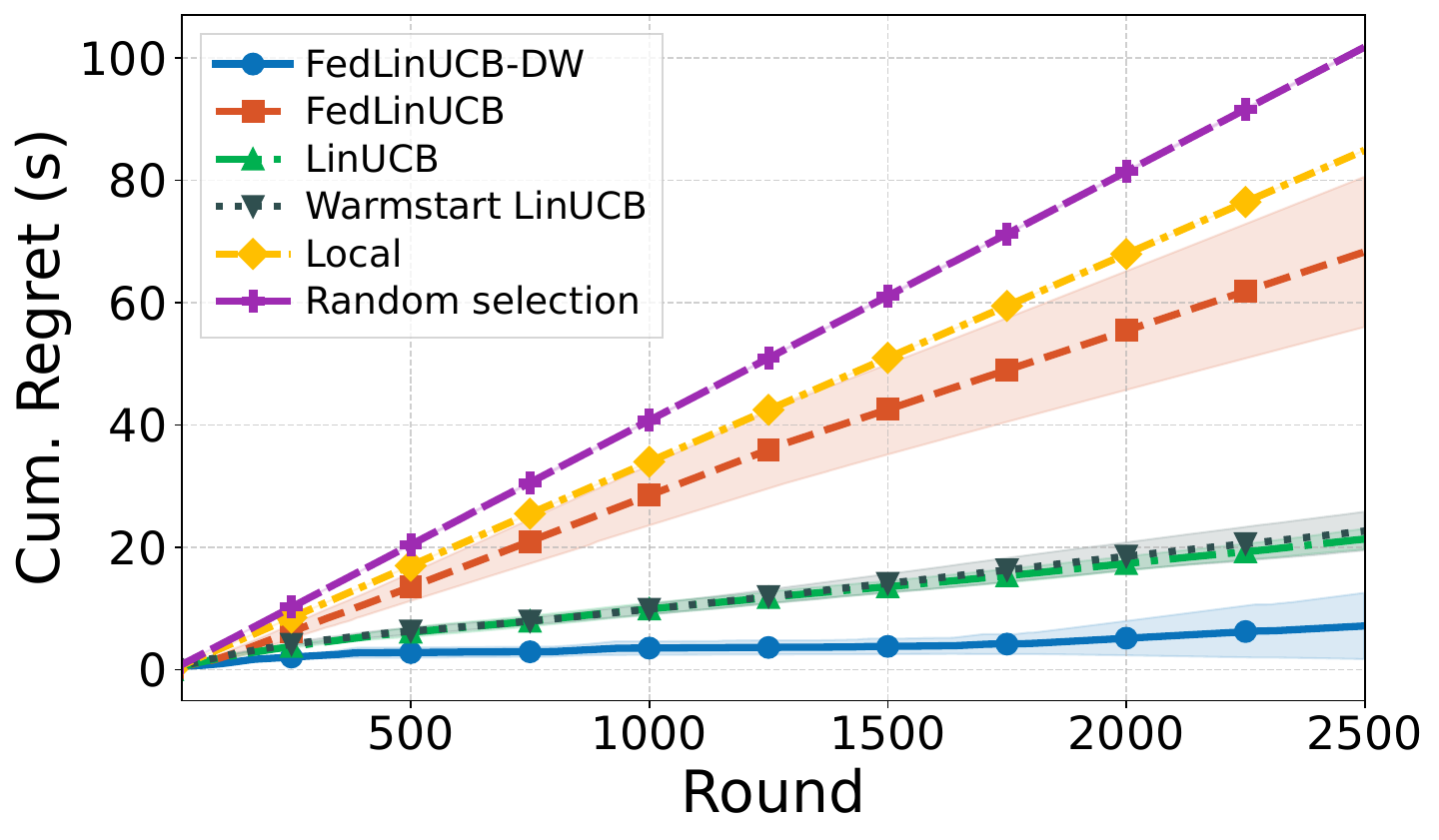}
        \label{fig:regret_resnet} 
    }
    \subfloat[Cumulative Regret for ViT]{
        \includegraphics[width = 0.32\textwidth]{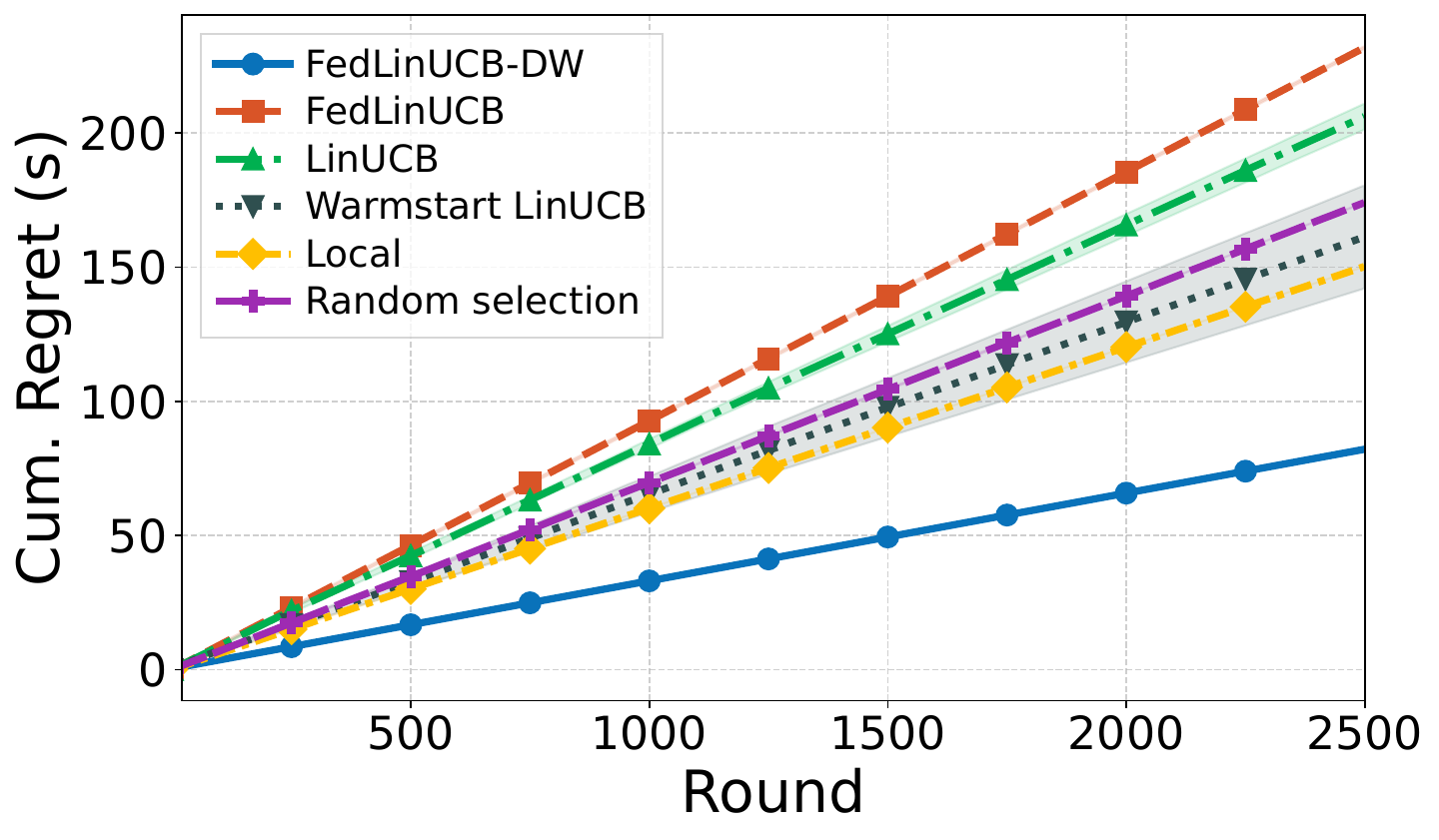}
        \label{fig:regret_vit} 
    }
    \caption{Cumulative Regret for three backbone models.}
    \label{fig:regret} 
\end{figure*}

\begin{figure*}[htbp]
\captionsetup[subfloat]{font=footnotesize}	
\centering
\subfloat[Average latency  for VGG]{\includegraphics[width = 0.33\textwidth]{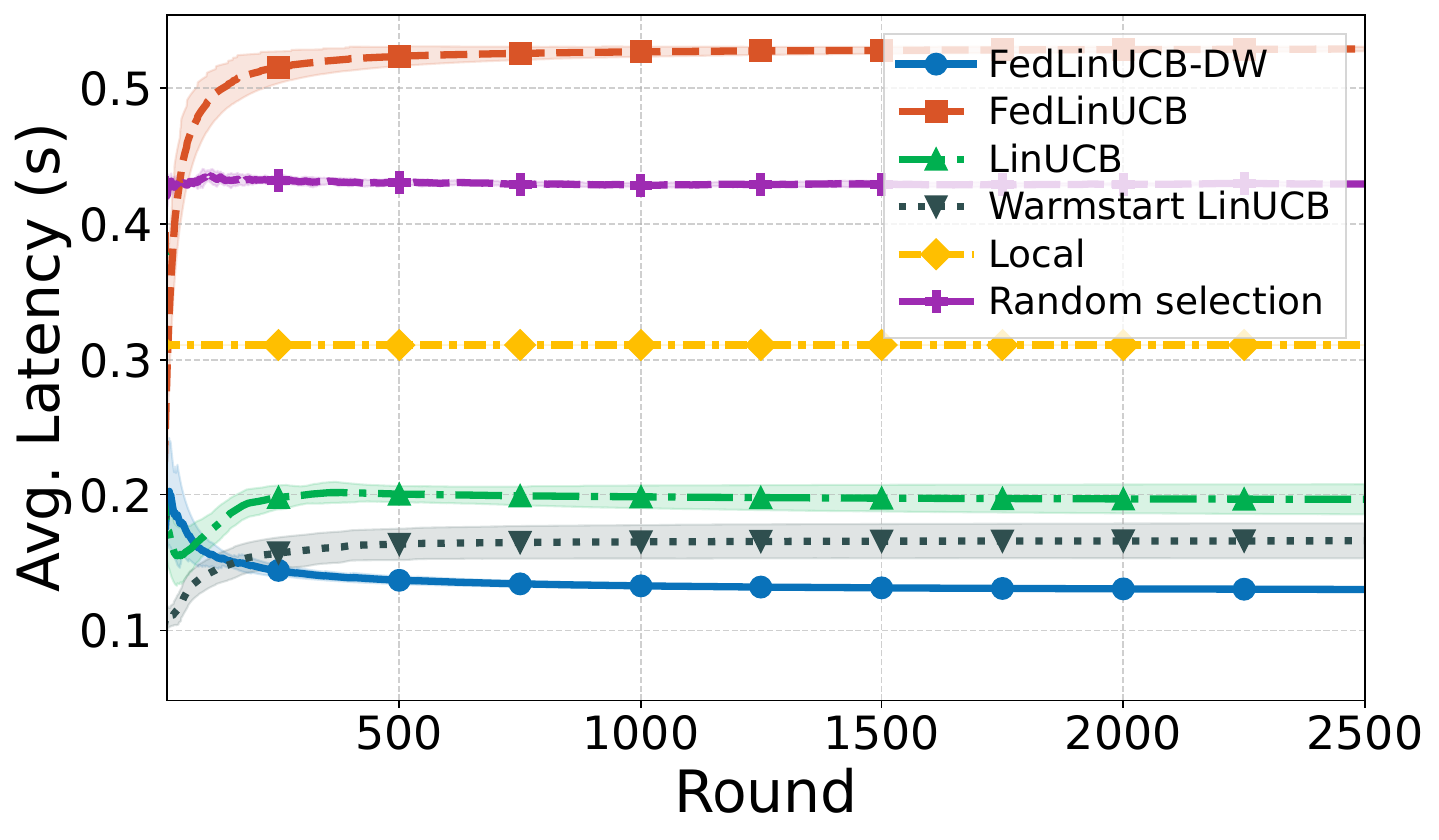}}
   	\subfloat[Average latency  for ResNet]{\includegraphics[width = 0.33\textwidth]{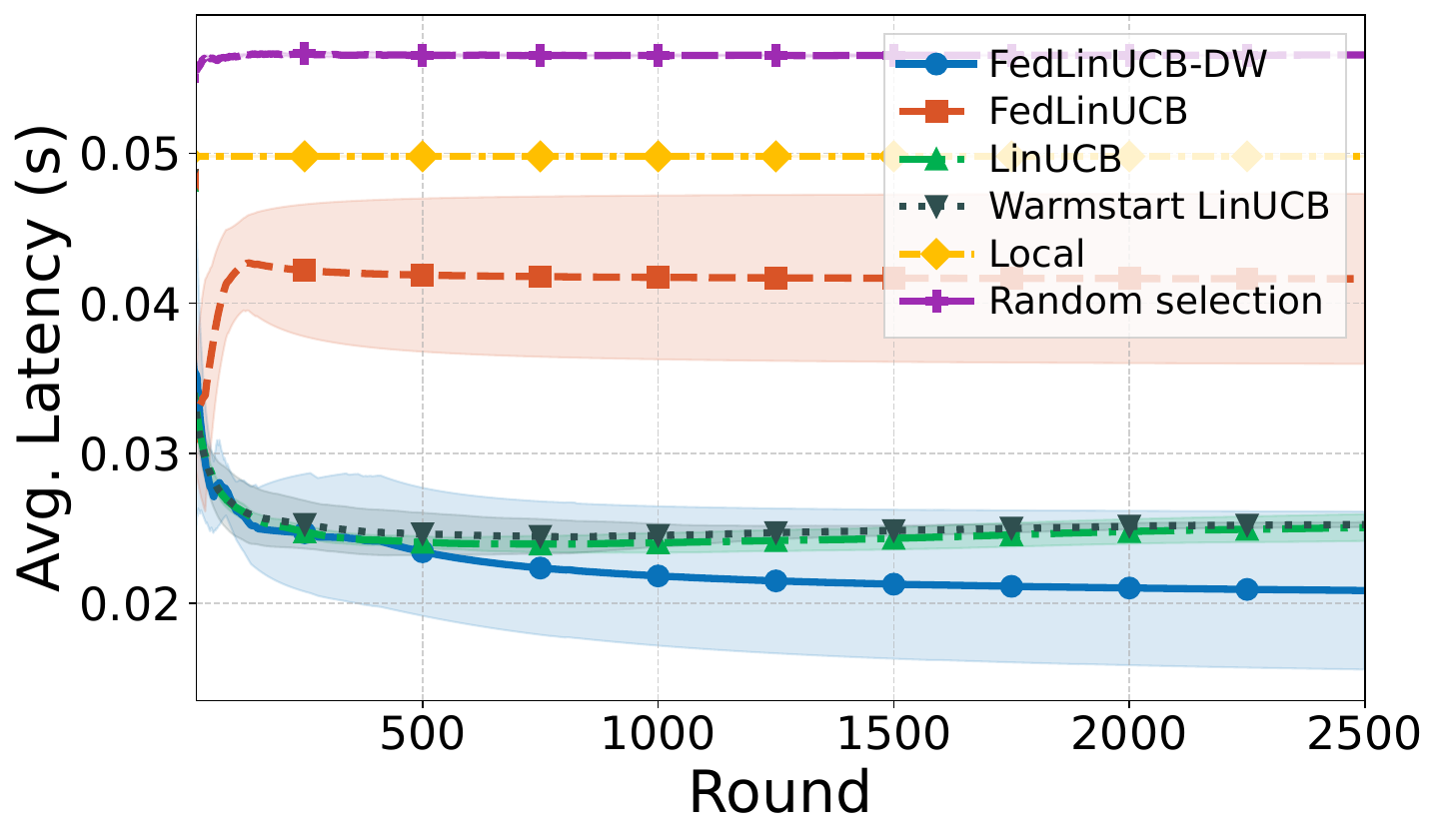}}
	\subfloat[Average latency  for ViT]{\includegraphics[width = 0.33\textwidth]{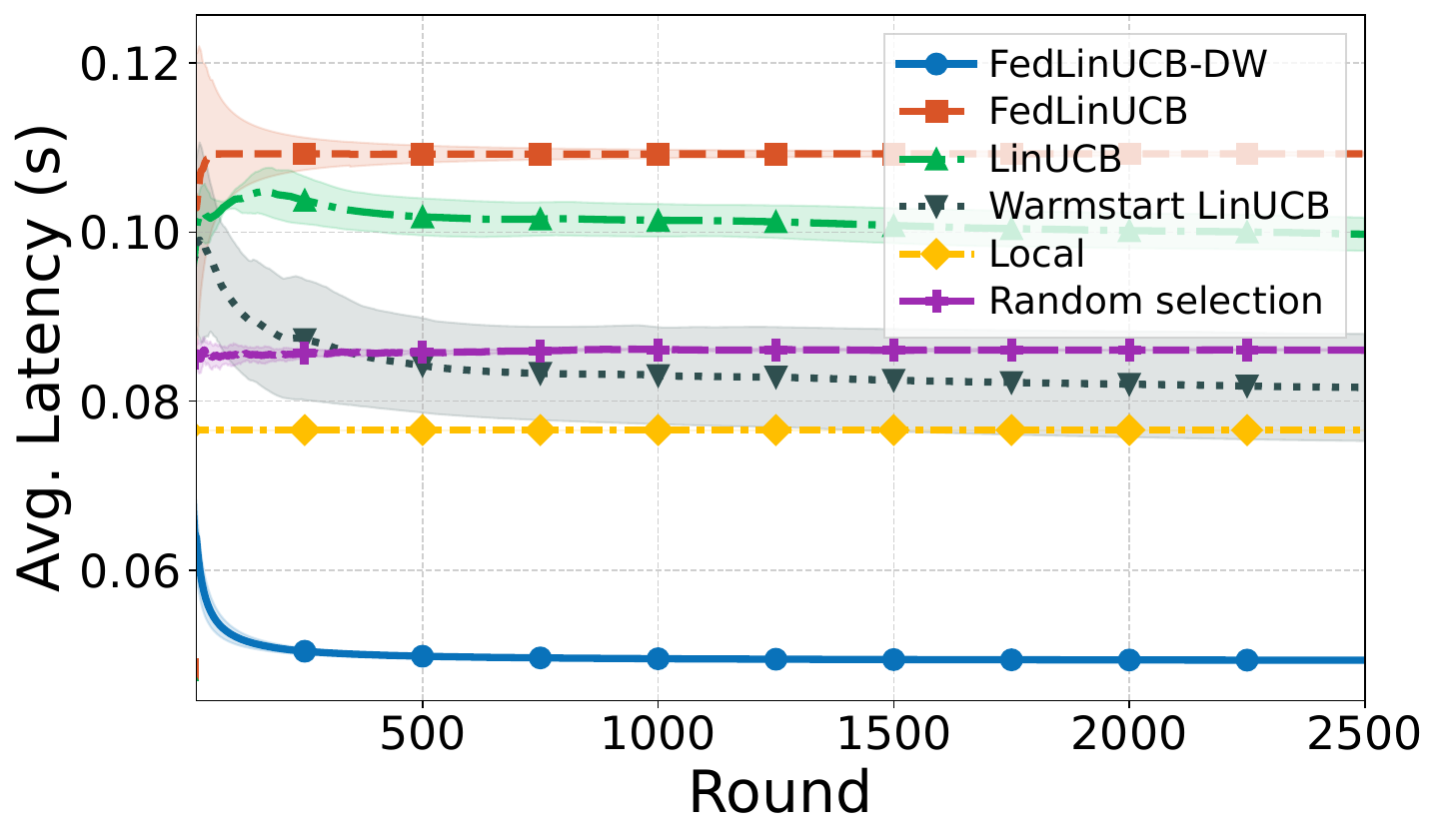}}
\caption{Average latency for three backbone models.}
\label{fig:latency}
\end{figure*}

\begin{figure*}[htbp]
\captionsetup[subfloat]{font=footnotesize}	
\centering
\subfloat[Estimation error  for VGG]{\includegraphics[width = 0.33\textwidth]{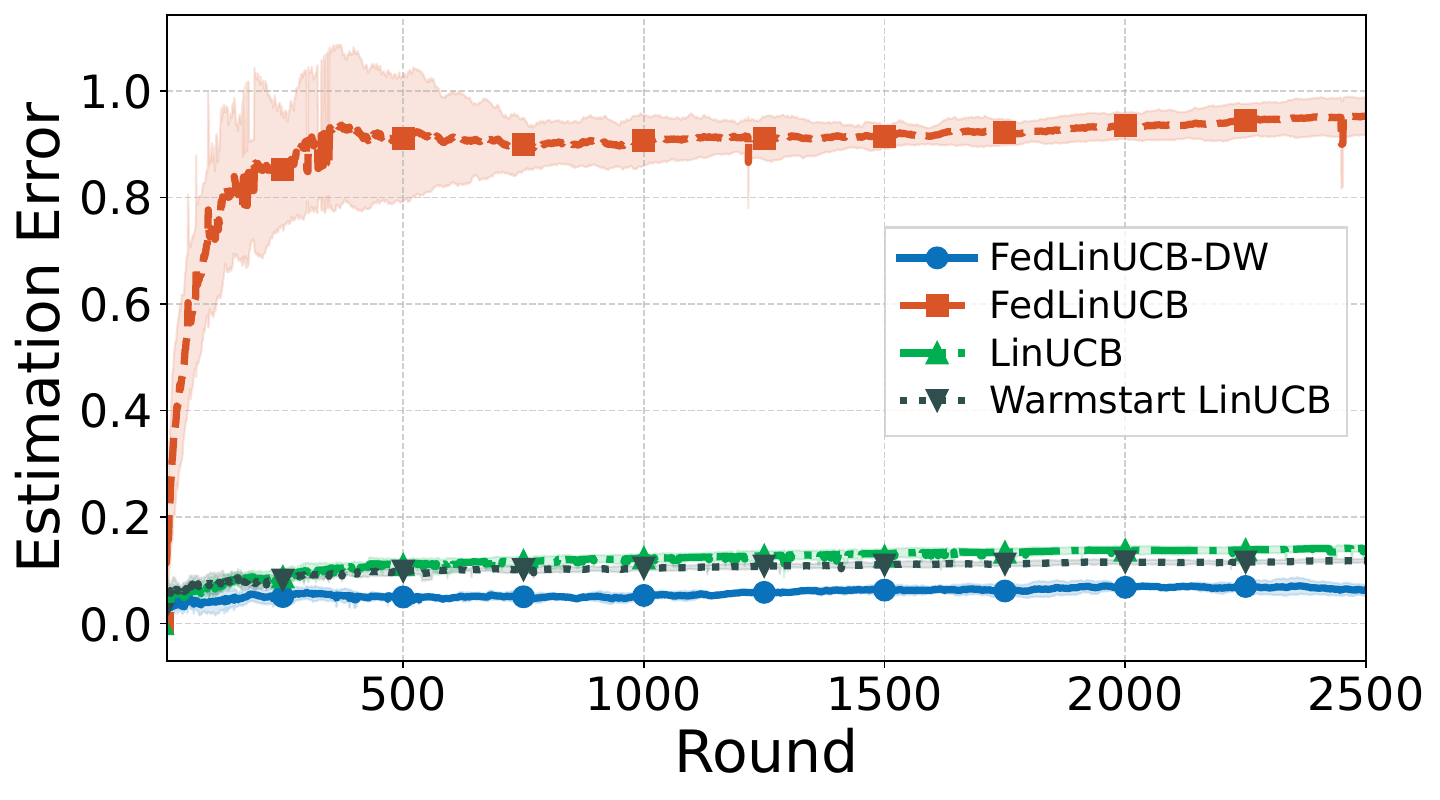}}
   	\subfloat[Estimation error  for ResNet]{\includegraphics[width = 0.33\textwidth]{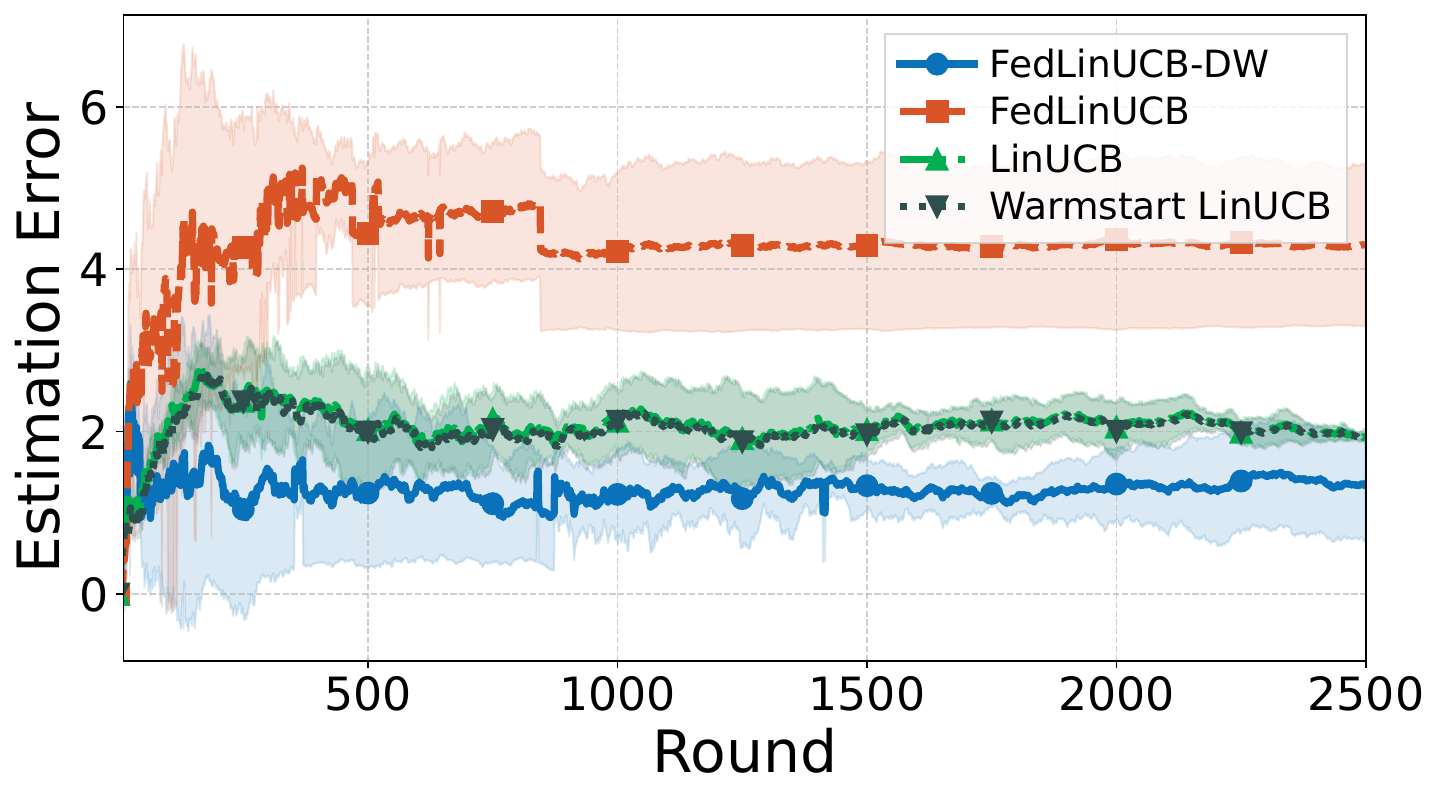} \label{fig:estimation_resnet}}
	\subfloat[Estimation error  for ViT]{\includegraphics[width = 0.33\textwidth]{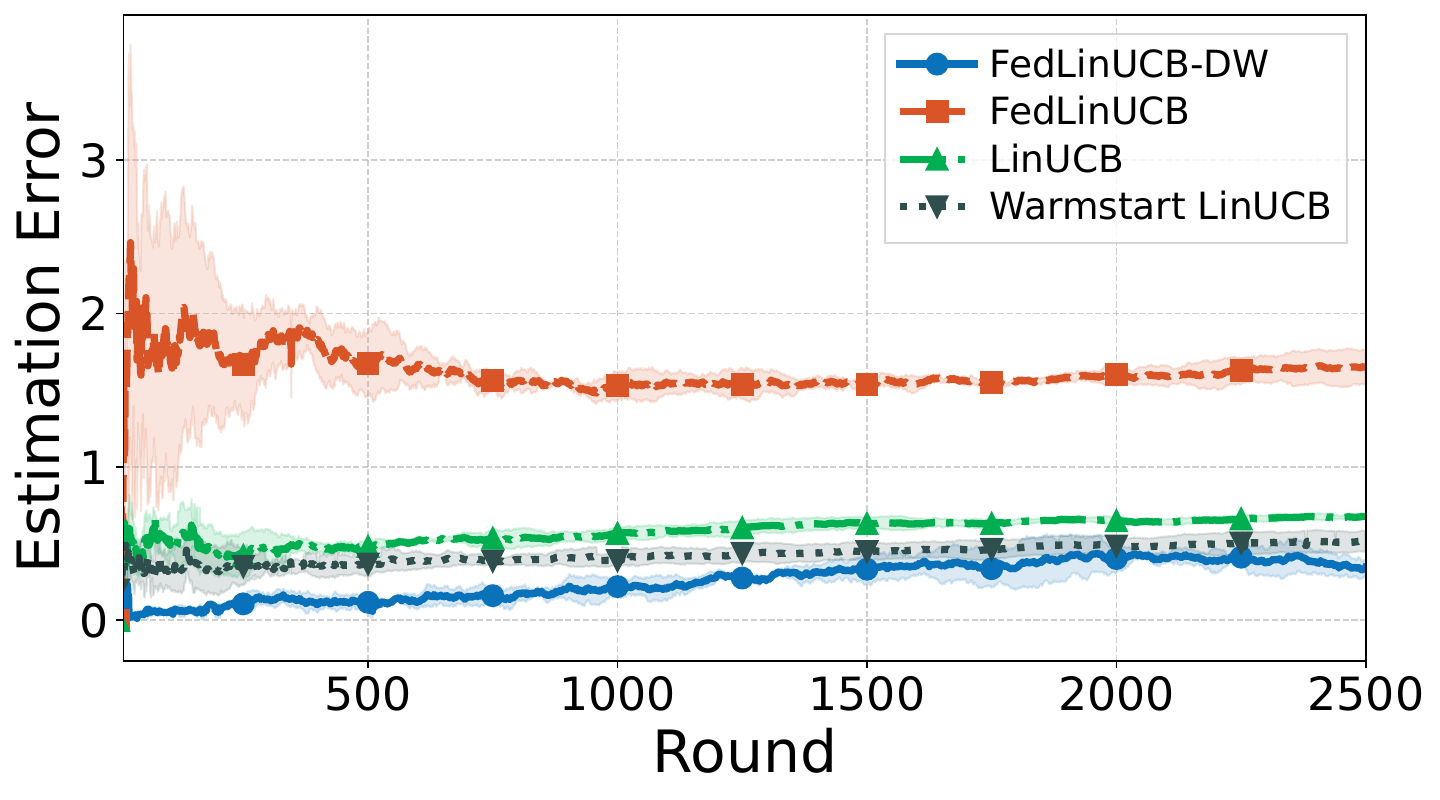}}
\caption{Estimation error for three backbone models.}
\label{fig:estimation}
\end{figure*}

\textbf{Experimental results.} We first compare the performance of our proposed FedLinUCB-DW with baseline methods in terms of the cumulative regret  (shown in Fig.~\ref{fig:regret}), the average latency (shown in Fig.~\ref{fig:latency}), and the latency estimation error (shown in Fig.~\ref{fig:estimation}) for three different backbone models.  All reported simulation results are averaged over three independent runs with different random seeds, and are presented as mean values with standard deviation error bars.


Regarding the cumulative regret results in Fig.~\ref{fig:regret}, we emphasize that this metric is defined as the sum of instantaneous regrets incurred by all active agents throughout the completed  rounds. Specifically, the instantaneous regret at round $t$ is defined as the difference between the latency of the partition point $p_t$ selected by the active UE $m_t$, and the latency of the optimal partition point $p_{m_t}^*$. The optimal partition point is determined  by leveraging the ground-truth system parameters at each round. Specifically, we evaluate all candidate partitions and select the one that yields the minimum theoretical latency as the benchmark. For all three different models, we notice that our proposed FedLinUCB-DW algorithm achieves the smallest cumulative regret and exhibits a sublinear regret in the number of rounds. Below we provide the reason why the baseline methods underperform our proposed FedLinUCB-DW algorithm. \textbf{Pure local inference}: Although pure local inference eliminates the need to upload intermediate inference results to the edge server via wireless transmission, it fails to exploit the advantages of a more powerful edge server. In Figure~\ref{fig:regret_resnet}, when employing ResNet-50 as the backbone model, this approach performs worse than most other algorithms, with the exception of FedLinUCB and random selection. \textbf{FedLinUCB:} We observe that FedLinUCB, which aggregates local statistics of all devices to estimate the global parameter $\boldsymbol{\theta}$, underperforms compared to other baseline methods when using VGG-16 and ViT-16 in Figure~\ref{fig:regret_vgg} and Figure~\ref{fig:regret_vit}. This is primarily because the algorithm neglects the computational heterogeneity across mobile devices. By indiscriminately combining latency feedback from heterogeneous devices, it may introduce misleading information into the estimation process, ultimately resulting in degraded performance. \textbf{LinUCB:} This method achieves lower regret than both Random selection and pure local inference in VGG-16 and ResNet-50, owing to its ability to learn the optimal partition point through parameter estimation based on latency feedback. It also outperforms FedLinUCB under the considered computational heterogeneity setting, as each UE relies solely on its own local statistics to estimate device-specific parameters. However, LinUCB performs worse than our proposed FedLinUCB-DW, as it neither facilitates cooperative online learning among devices of the same type nor leverages prior offline inference experiences. \textbf{Warm-Start LinUCB:} This method achieves lower regret than LinUCB in VGG-16 and ViT-16, and demonstrates comparable performance in ResNet-50, as it leverages offline inference experiences to initialize local statistics for improved parameter estimation. This observation highlights the effectiveness of offline warm-start in enhancing online learning. Nevertheless, it is outperformed by our proposed FedLinUCB-DW, as it relies solely on local information and does not incorporate cooperative learning through the sharing of latency feedback among  UEs of the same device type.

As shown in Fig. \ref{fig:latency}, we observe that our proposed FedLinUCB-DW achieves the lowest average latency (round-average end-to-end inference latency) among all baseline methods. The latency decreases rapidly during the initial stage, which can be attributed to its effective initialization and its ability to aggregate a larger number of samples from UEs of the same type. This enables more efficient and accurate learning of system parameters. Besides, FedLinUCB exhibits the highest average latency among other learning-based methods. Notably, the latency even exhibits an adverse upward trend before converging to a suboptimal partition point, resulting in significantly higher end-to-end latency. This phenomenon can be attributed to a similar underlying issue: FedLinUCB aggregates local statistics for estimating global parameters across all UEs with heterogeneous computational capabilities. Such indiscriminate aggregation introduces biased and potentially misleading information into the parameter estimation process. Consequently, the learned model fails to accurately capture device-specific characteristics, leading to suboptimal decision-making and degraded overall system performance.
   \begin{figure*}[htbp]
\captionsetup[subfloat]{font=footnotesize}	
\centering
\subfloat[Average latency for VGG]{\includegraphics[width = 0.33\textwidth]{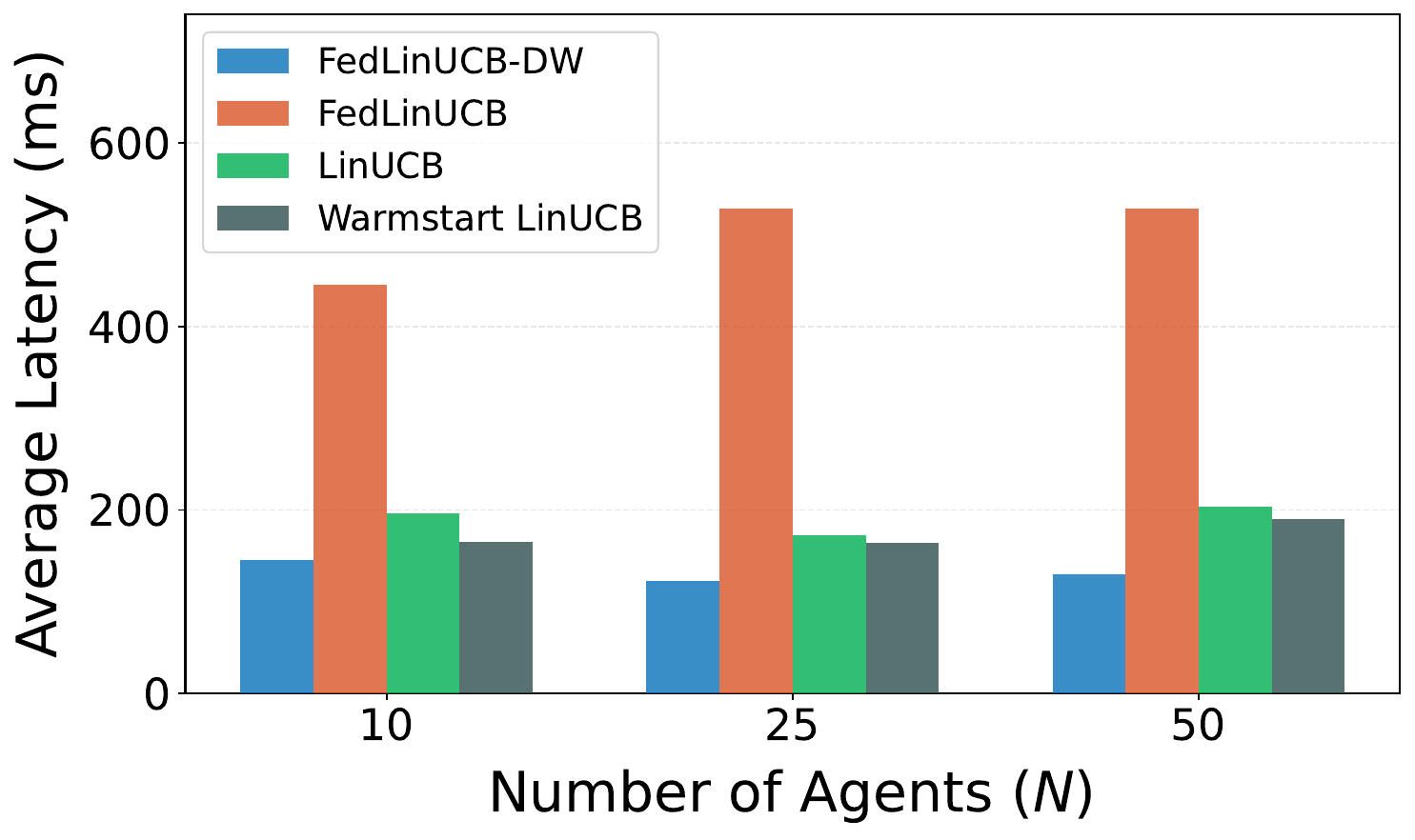}}
   	\subfloat[Average latency for ResNet]{\includegraphics[width = 0.33\textwidth]{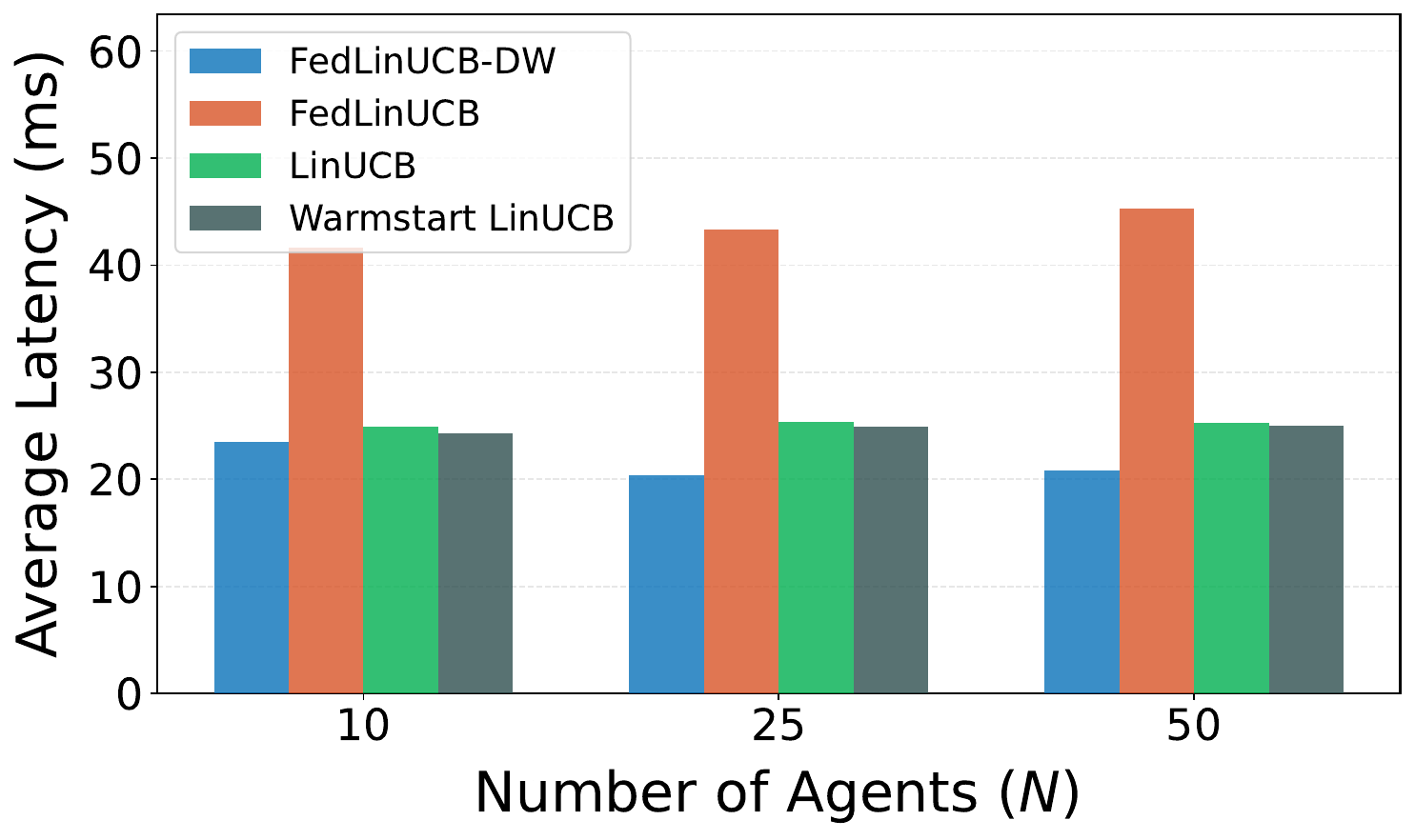}}
	\subfloat[Average latency for ViT]{\includegraphics[width = 0.33\textwidth]{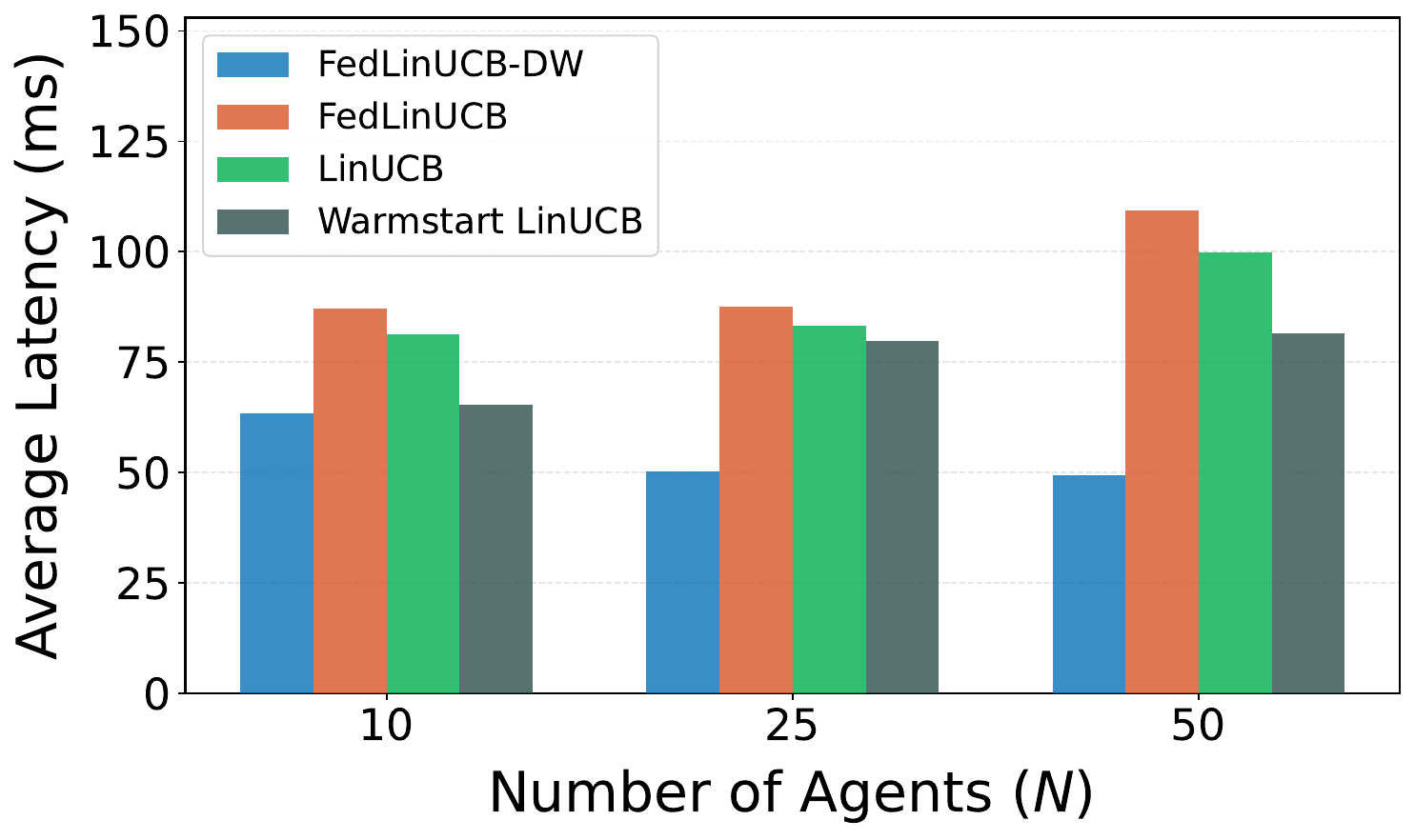}}
\caption{Average latency of different numbers of agents for three backbone models.}
\label{fig:agent}
\end{figure*}
\begin{figure}[!t]
    \centering
    \includegraphics[width=3.5in]{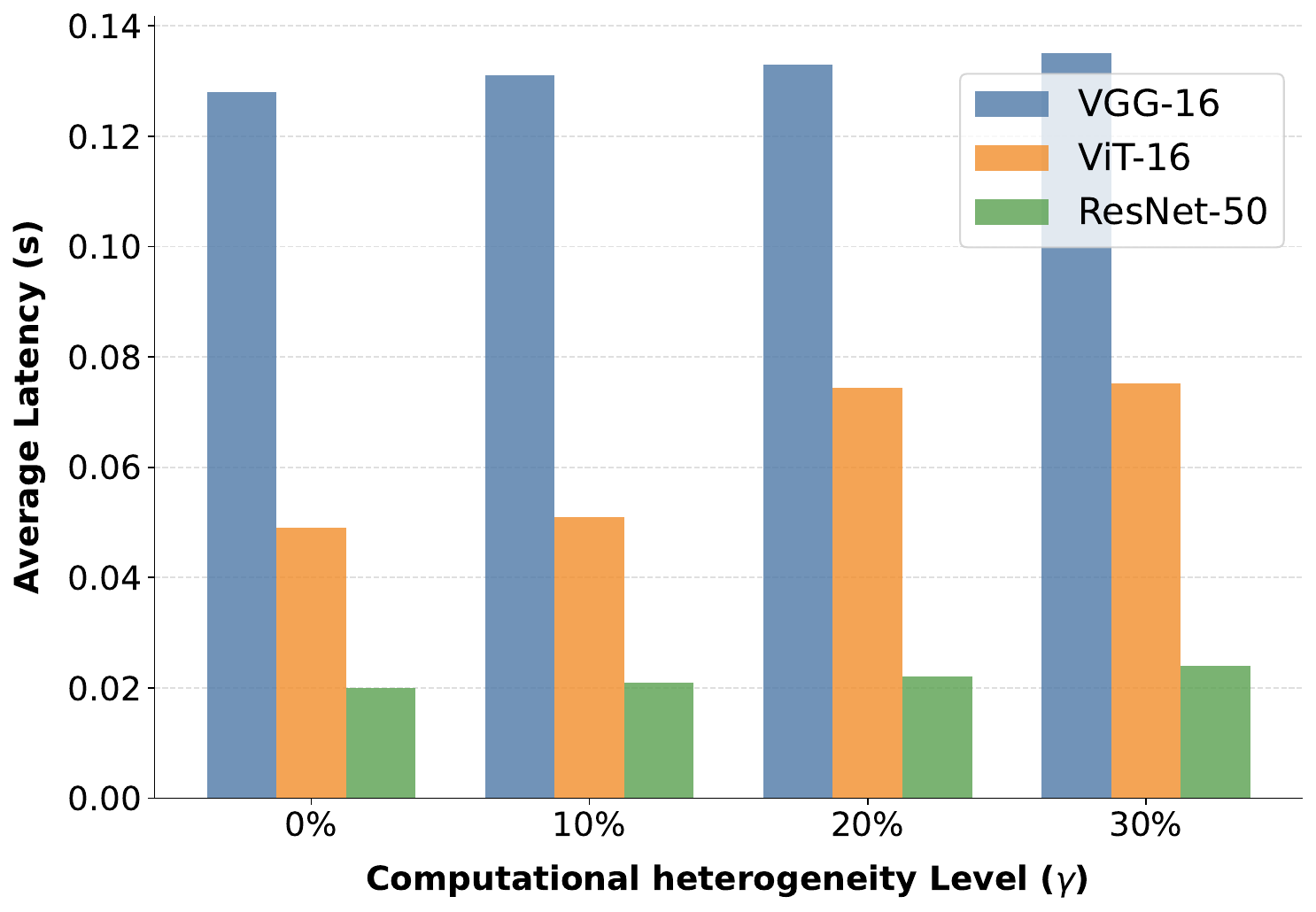}
    \caption{Impact of computational  heterogeneity  within each group on the average latency across different backbone models using FedLinUCB-DW.}
    \label{fig:noise_latency}
\end{figure}
Regarding the latency estimation error illustrated in Fig. \ref{fig:estimation}, we observe that  FedLinUCB-DW still achieves the best performance, indicating that it provides more accurate estimation of the unknown system parameters. In addition, we note that in Figure~\ref{fig:estimation_resnet}, when ResNet-50 is adopted as the backbone model, all four online learning methods exhibit significant fluctuations and larger variance in estimation error. This suggests that parameter estimation becomes more challenging when ResNet-50 is used. Similarly, FedLinUCB continues to demonstrate the largest estimation error, further confirming that aggregating local statistics from heterogeneous devices can indeed mislead the system parameter estimation process.

Fig. \ref{fig:agent} illustrates the impact of agent scaling on the average end-to-end latency. In this evaluation, the number of participating UEs was varied ($N = 10, 25, 50$) across all backbone models to assess the scalability of each approach. The results demonstrate that FedLinUCB-DW consistently achieves the lowest average latency across all configurations, indicating its robustness to varying network scales. In contrast, the performance of FedLinUCB degrades markedly as the number of agents reaches $N=50$. This observation suggests that a larger population of hardware-diverse devices amplifies computational heterogeneity, which in turn exacerbates the negative effects of aggregating shared information. Specifically, aggregating local statistics across heterogeneous devices can introduce biased or misleading signals into the estimation of the overall system parameters, thereby impairing the effectiveness of cooperative online learning.

We further investigate the performance of FedLinUCB-DW under computational heterogeneity within each group. Specifically, we model stochastic hardware fluctuations (e.g., thermal throttling) that cause performance variations among UEs of the same type. As mentioned earlier, $N=25$ UEs are categorized into three groups $\mathcal{G}=\{G_1, G_2, G_3\}$, the instantaneous compute capacity $C_i$ for UE  $i$ is formulated as $C_i = g_i \times \alpha$, where $g_i \in \mathcal{G}$ and $\alpha \sim \mathcal{U}(1 - \gamma, 1 + \gamma)$. Here, $\gamma \in \{0\%, 10\%, 20\%, 30\%\}$ denotes the noise level, controlling the deviation within the same tier. As illustrated in Fig.~\ref{fig:noise_latency}, increasing $\gamma$ leads to a general rise in average latency across all models. Notably, VGG-16 and ViT-16 exhibit higher sensitivity to these fluctuations, whereas ResNet-50 maintains a more resilient latency profile. Throughout these volatile scenarios, FedLinUCB-DW consistently mitigates performance degradation, demonstrating superior adaptability.


\subsection{Hardware prototype experiment}

In this subsection, we conduct hardware prototype experiments to evaluate the practicality of CANS in real-world edge-inference settings. The prototype experiments focus on validating the deployment feasibility of CANS and its ability to reduce end-to-end inference latency on real edge devices, while the large-scale benefits of device grouping are evaluated in the preceding simulation study.

\textbf{Testbed setup.} We build a hardware testbed to validate the performance of our proposed CANS system with FedLinUCB-DW algorithm. We employ two distinct NVIDIA Jetson modules as the mobile computing nodes to evaluate performance across different hardware generations. Specifically, we use the NVIDIA Jetson Xavier NX, which features a 384-core NVIDIA Volta GPU and 8 GB of 128-bit LPDDR4x unified memory. Additionally, we utilize the NVIDIA Jetson Orin Nano, equipped with a 1024-core NVIDIA Ampere GPU and 8 GB of 128-bit LPDDR5 memory.  The workstation is equipped with an Intel Xeon Platinum 8276L CPU @ 2.20 GHz, 376 GB RAM, and four NVIDIA A100-SXM4-40GB GPUs. The mobile devices and edge server are wirelessly connected by  Wi-Fi, and we use Linux traffic control to set the same wireless transmission conditions.

 Fig.~\ref{fig:hardware_testbed} shows the hardware prototype testbed used in our experiments. The workstation serves as the edge server, while the Jetson Xavier NX and Jetson Orin Nano act as two heterogeneous mobile edge devices. During each inference task, the active Jetson device performs the front-end computation locally, transmits the intermediate output to the workstation, and receives the final inference result together with the back-end latency feedback. This setup emulates a practical multiuser collaborative edge-inference scenario, where multiple resource-constrained devices independently interact with a shared edge server.

     \begin{figure}[htbp]
    \centering
    \includegraphics[width=0.52\columnwidth]{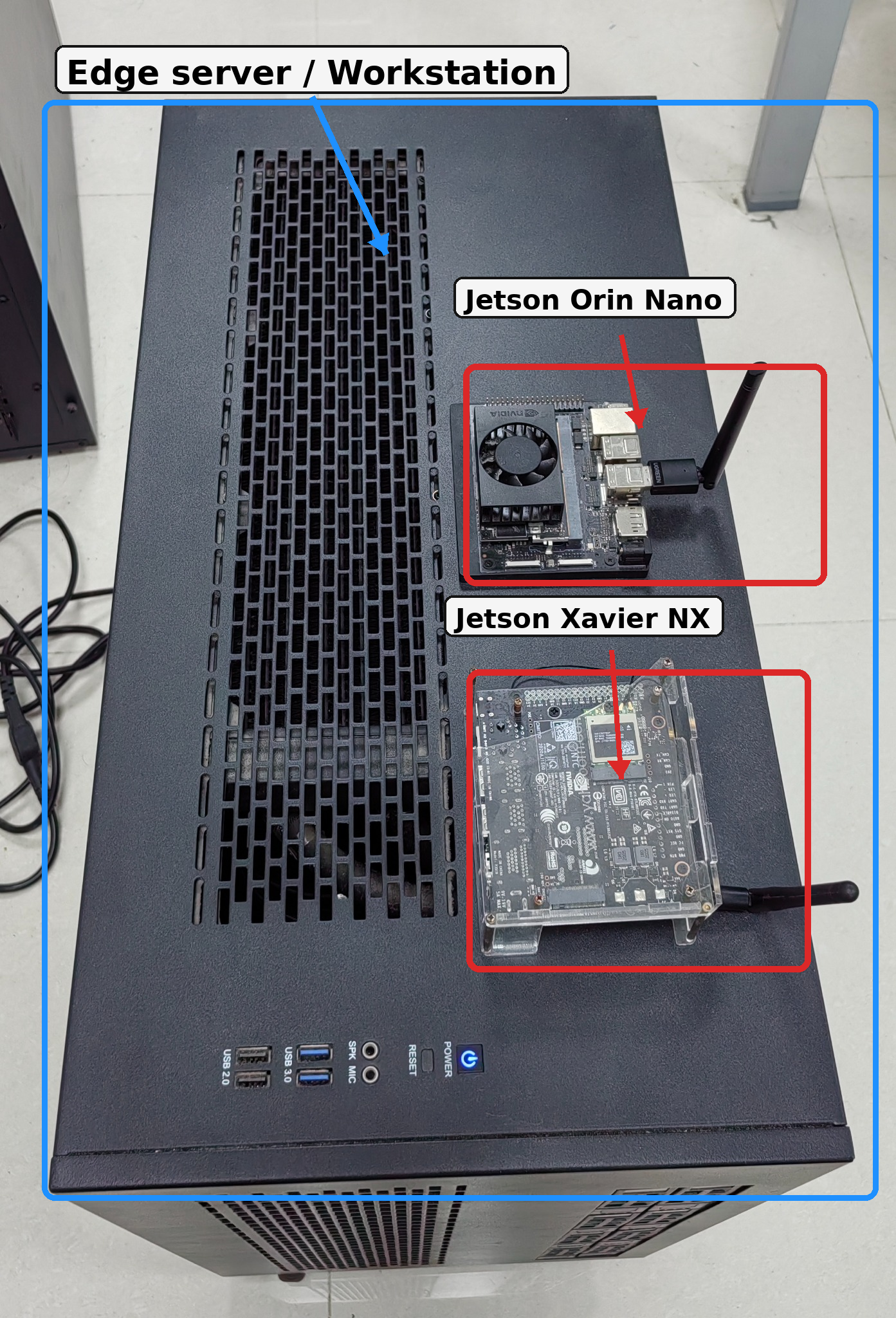}
    \caption{Overall hardware prototype testbed of CANS. }
    \label{fig:hardware_testbed}
    \vspace{-0.15in}
\end{figure}


\textbf{Backbone model and action space.}
Similar to the simulation experiments, we adopt VGG-16, ResNet-50, and ViT-16 as backbone models, with 23, 20, and 15 partition points, respectively.
The key difference is that, in this hardware prototype experiment, we employ a more fine-grained contextual feature representation for each partition point.
Specifically, for CNN-based models, we consider both the number of operations and their corresponding MAC units for convolutional layers, linear layers, and activation functions at different partition points.
For ViT-16, we follow a similar approach by modeling the number of operations and MAC costs for attention layers, linear layers, and activation functions across different partition points. This operation-specific categorization is superior because it explicitly captures the hardware-level distinction between compute-bound operations and memory-bound operations. By encoding these distinct characteristics into the feature vector, the framework can accurately predict the actual execution latency on different devices, rather than relying on misleadingly uniform total MAC counts.

\begin{figure*}[htbp]
\captionsetup[subfloat]{font=footnotesize}	
\centering
\subfloat[Average latency for VGG]{\includegraphics[width = 0.33\textwidth]{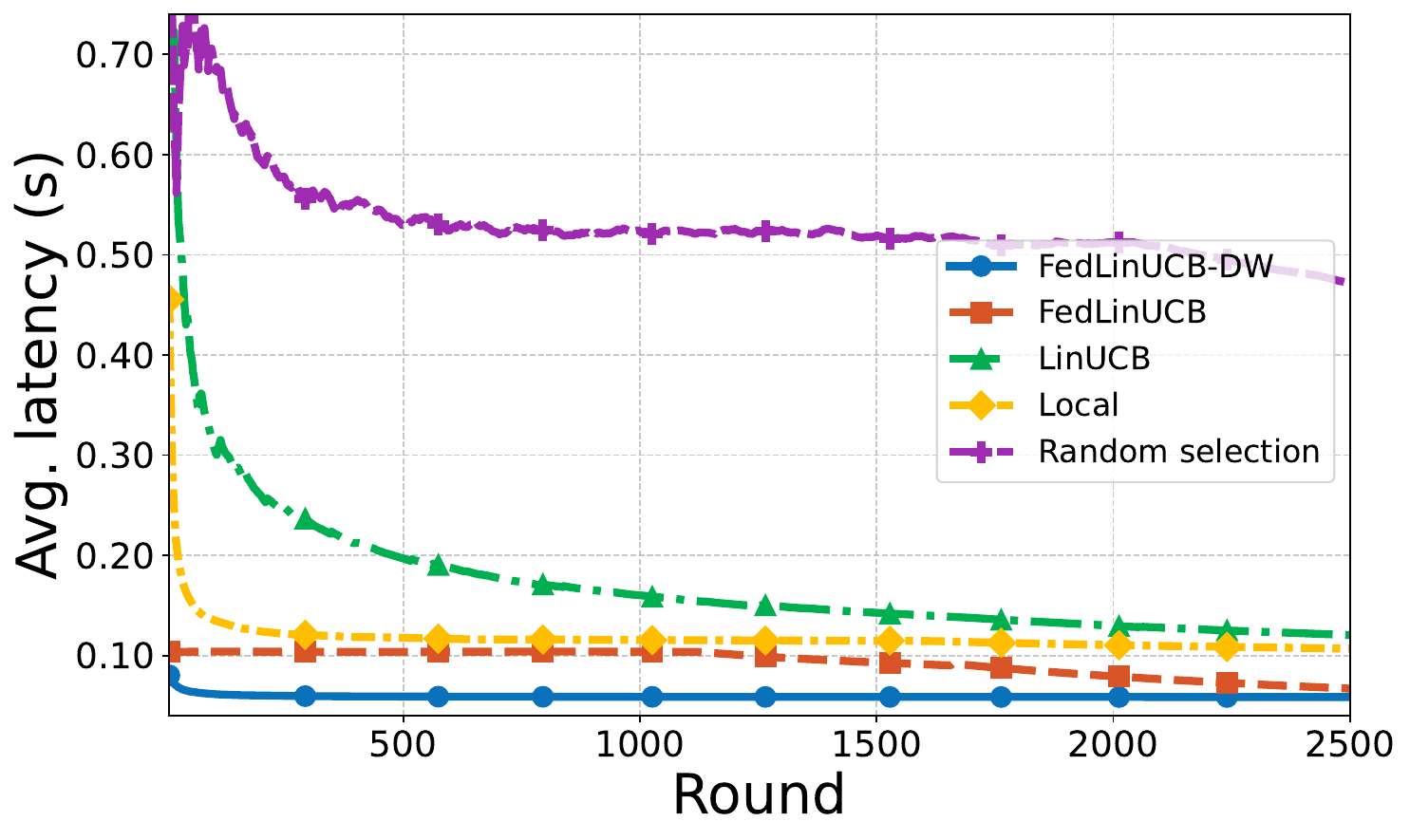}}
   	\subfloat[Average latency for ResNet]{\includegraphics[width = 0.33\textwidth]{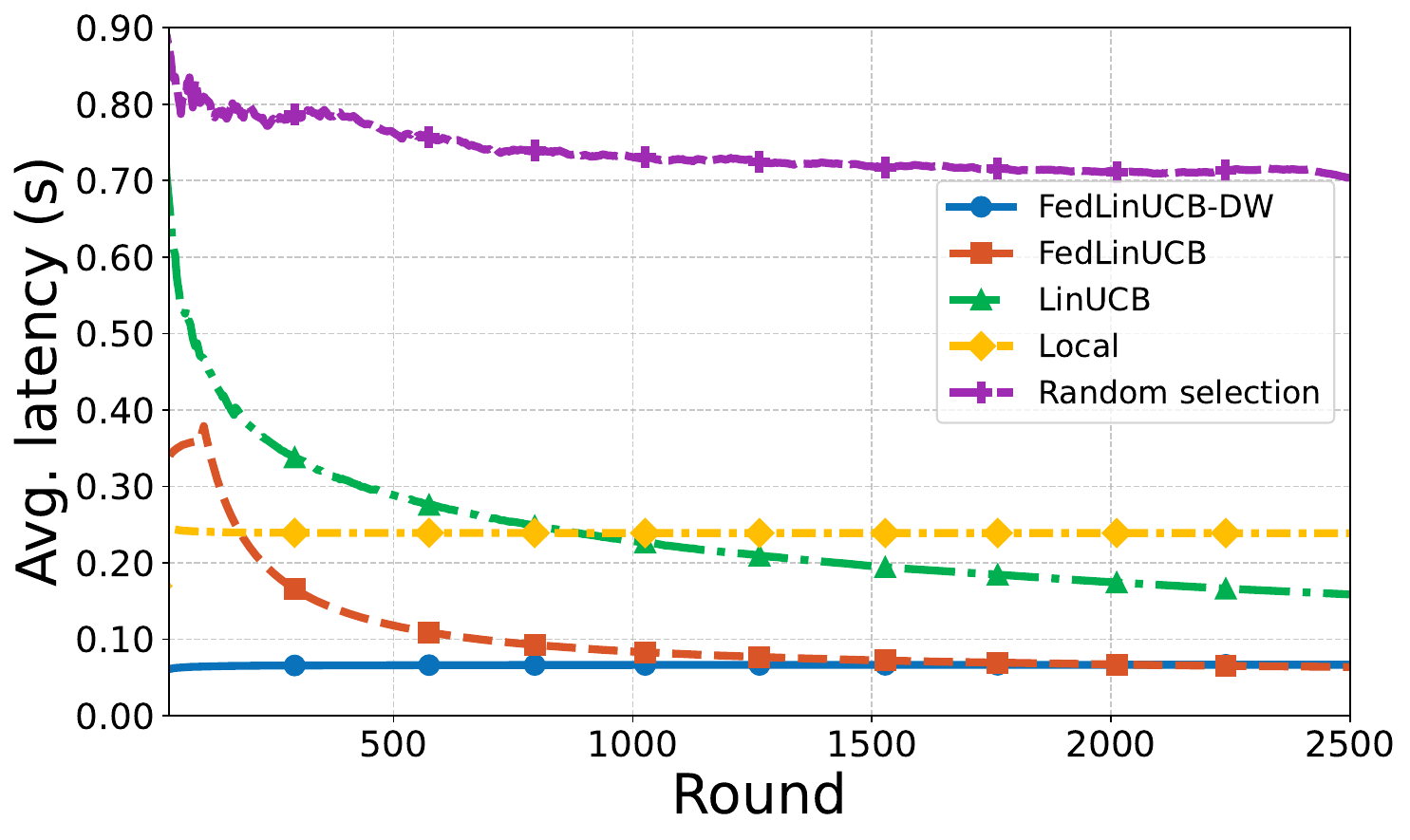}}
	\subfloat[Average latency for ViT]{\includegraphics[width = 0.33\textwidth]{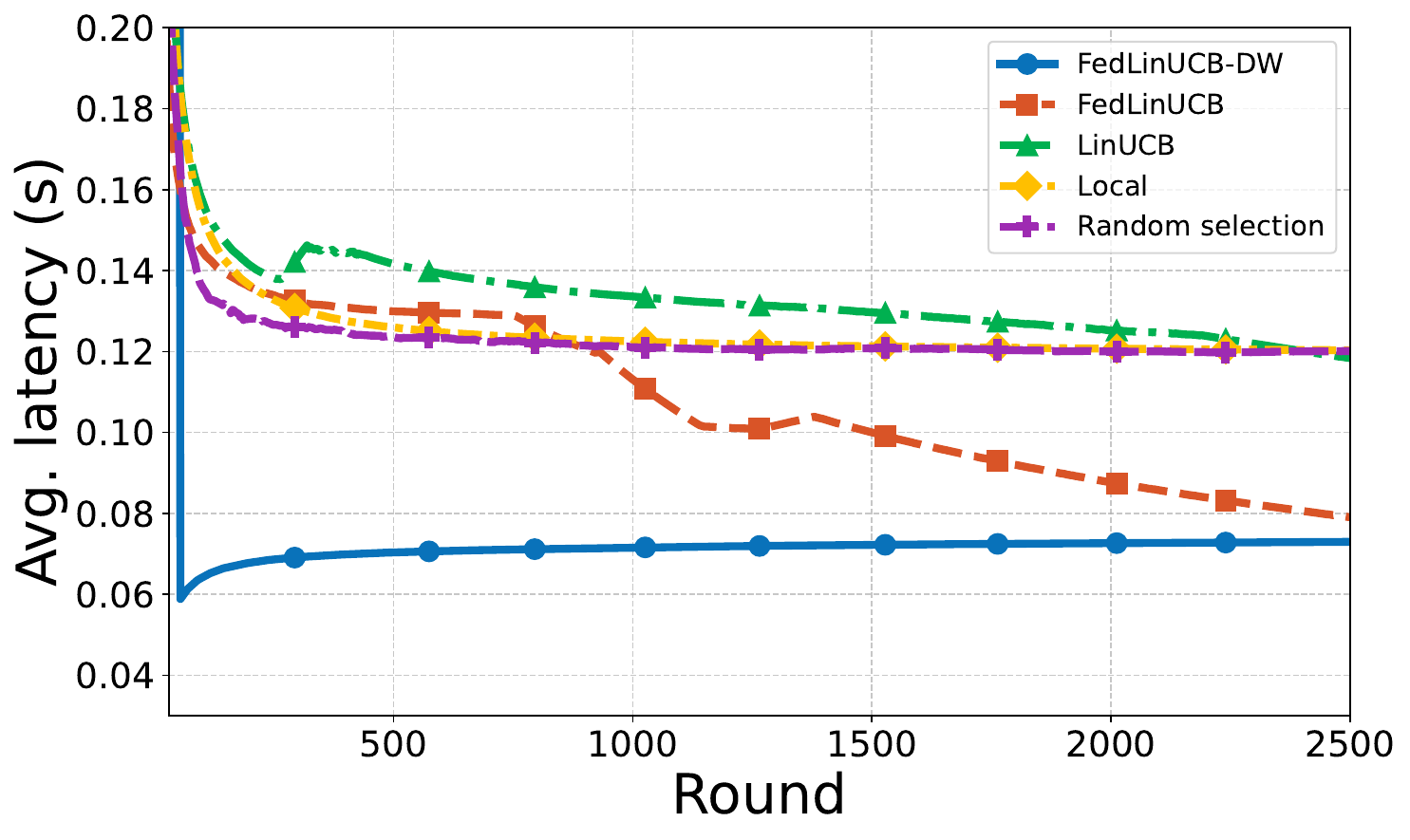}}
\caption{Average latency for different baselines on Xavier NX.}
\label{fig:latency_bs_nx}
\end{figure*}

    \begin{figure*}[htbp]
\captionsetup[subfloat]{font=footnotesize}	
\centering
\subfloat[Average latency for VGG]{\includegraphics[width = 0.33\textwidth]{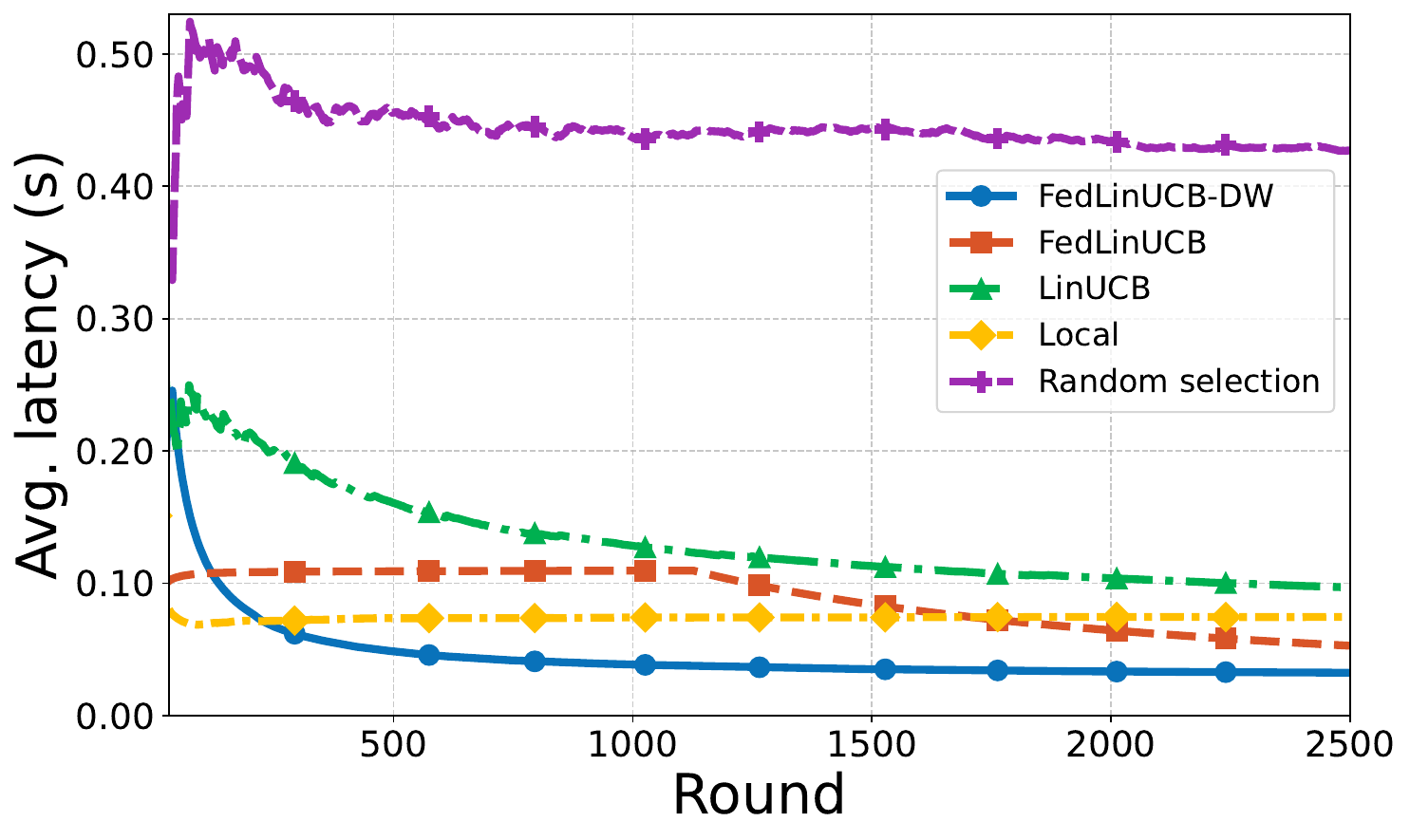}}
   	\subfloat[Average latency for ResNet]{\includegraphics[width = 0.33\textwidth]{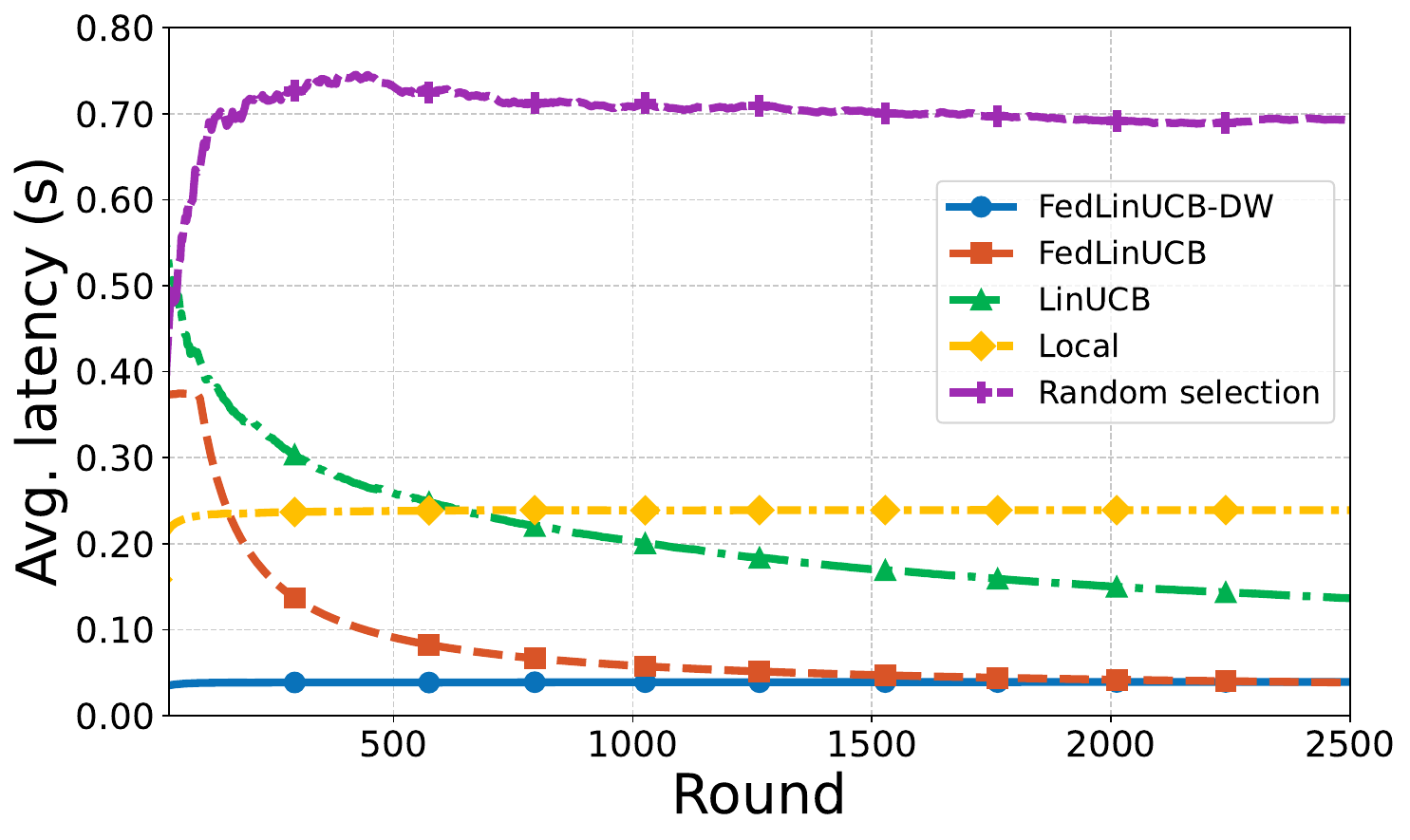}}
	\subfloat[Average latency for ViT]{\includegraphics[width = 0.33\textwidth]{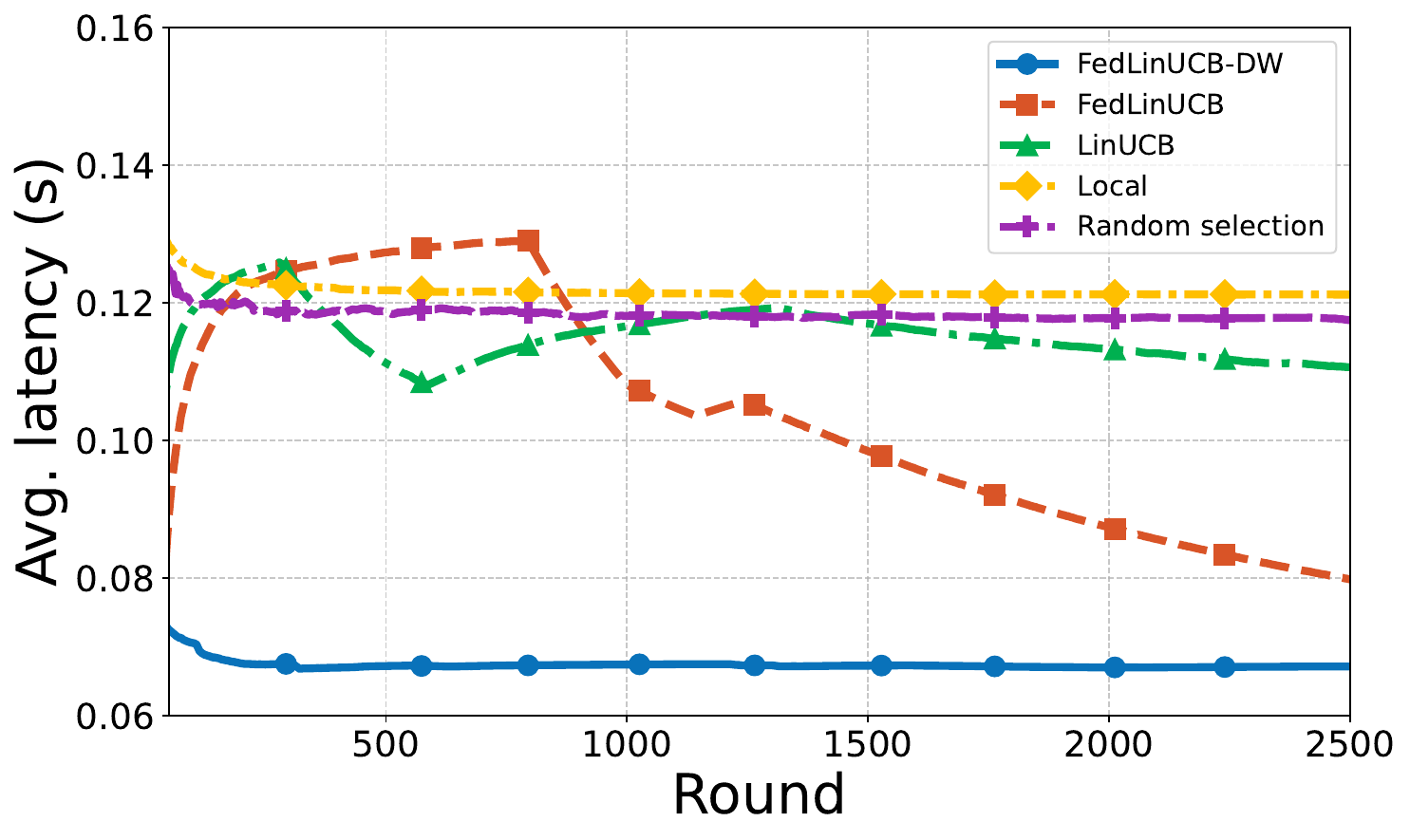}}
\caption{Average latency for different baselines on Jetson Orin Nano.}
\label{fig:latency_bs_nano}
\end{figure*}

\textbf{Learning protocol.} Each device (Xavier NX and  Jetson Orin Nano) is connected to the edge server through point-to-point socket connections over a local area network (LAN). The system adopts a fully asynchronous communication protocol, enabling each device to perform collaborative inference and exchange sufficient statistics with the server  independently, thereby eliminating inter-device dependencies.

Following the protocol of our simulation experiments, each device performs local early-exit inference at randomly sampled partition points. This process gathers the initial front-end statistics required for a warm-start initialization, enabling FedLinUCB-DW to effectively leverage offline experience during the subsequent online stage.

    \begin{figure*}[htbp]
\captionsetup[subfloat]{font=footnotesize}	
\centering
\subfloat[Average latency for VGG]{\includegraphics[width = 0.33\textwidth]{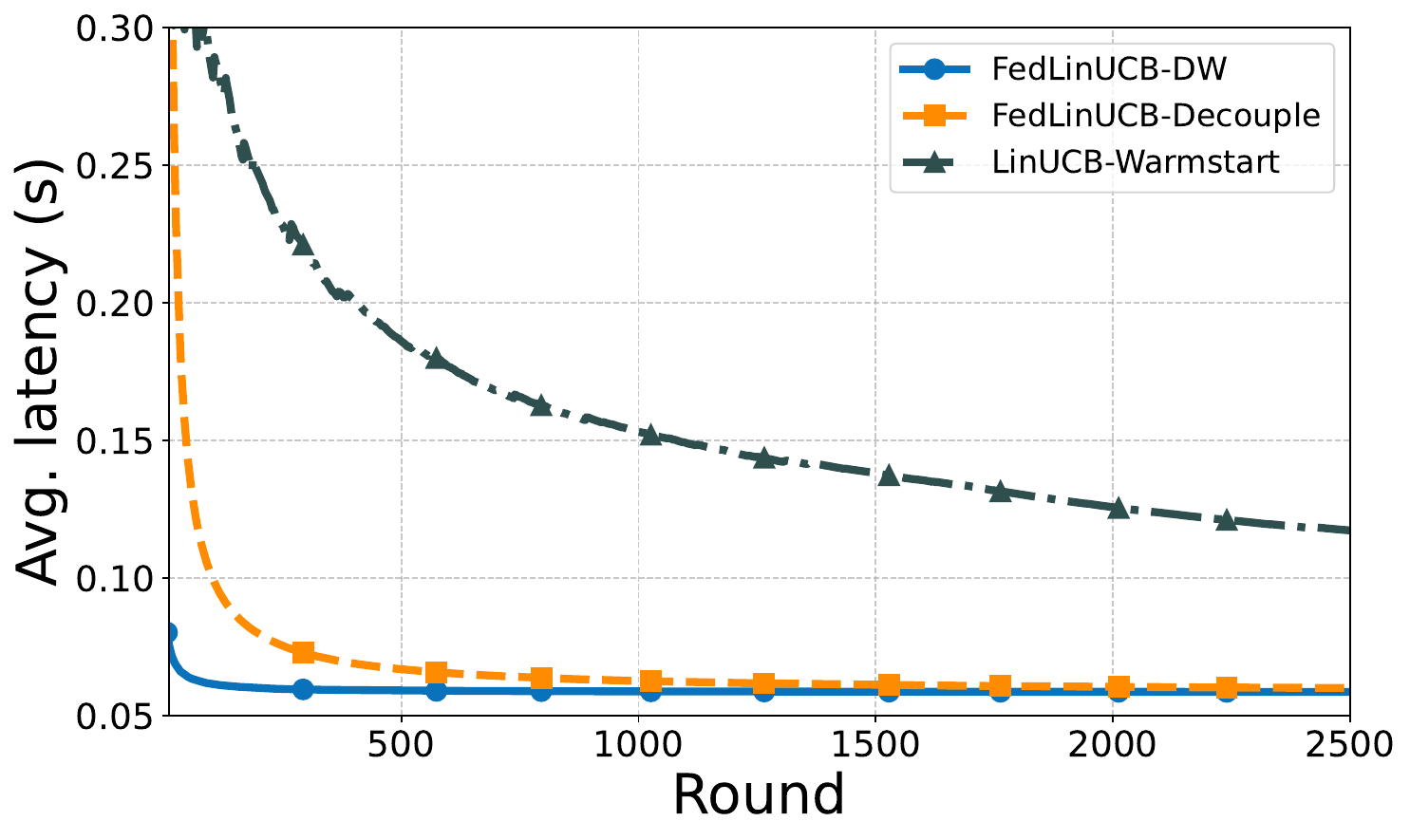}}
   	\subfloat[Average latency for ResNet]{\includegraphics[width = 0.33\textwidth]{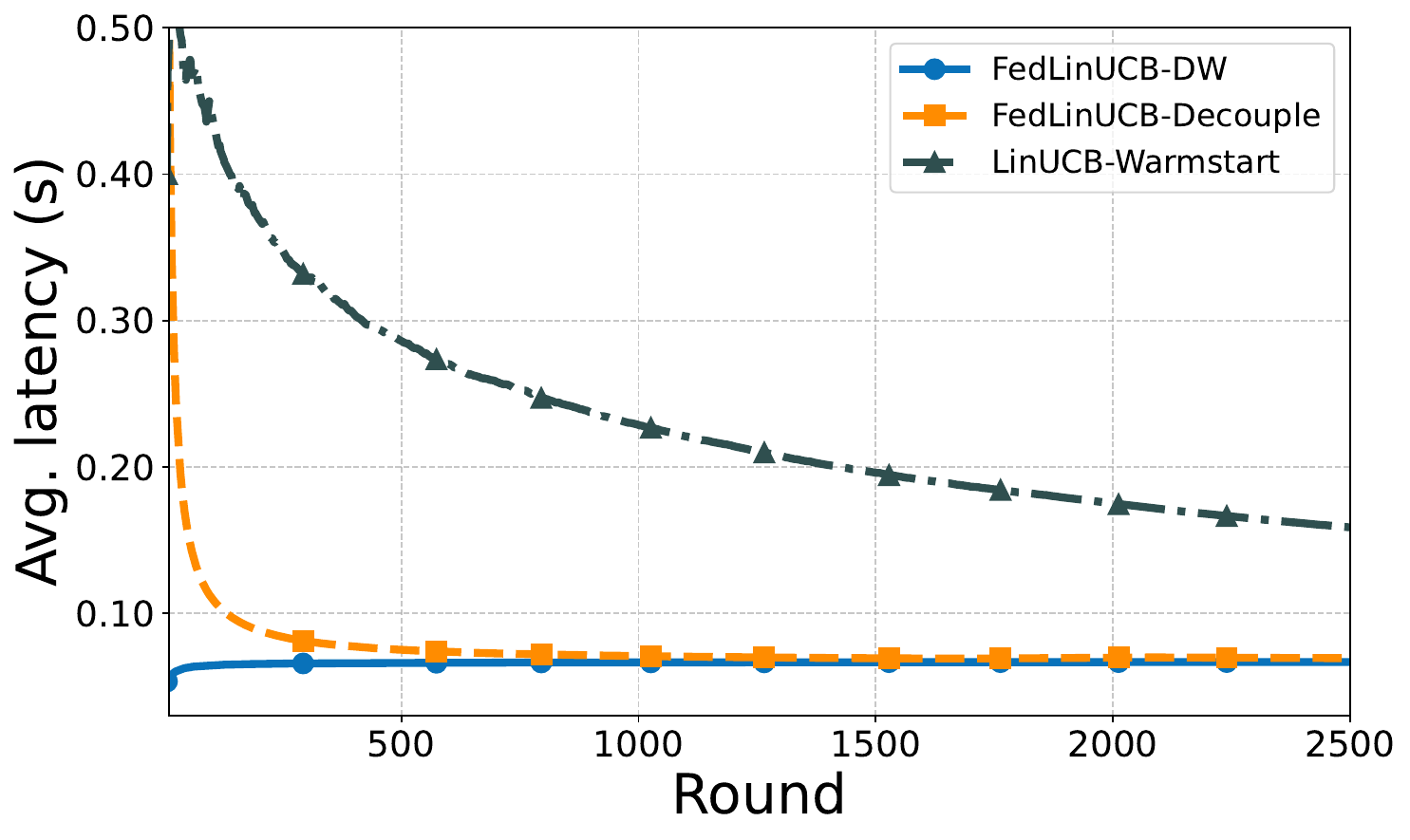}}
	\subfloat[Average latency for ViT]{\includegraphics[width = 0.33\textwidth]{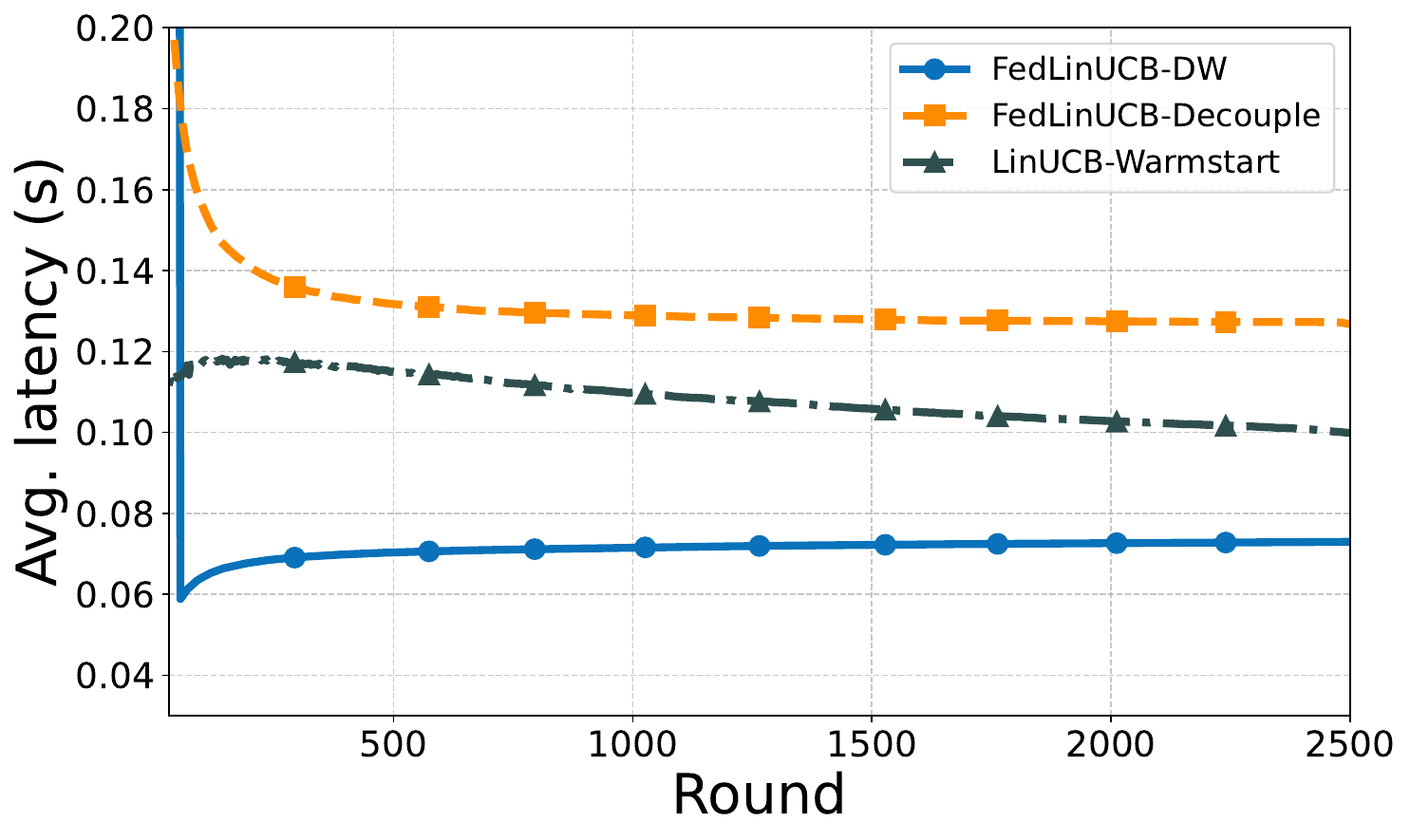}}
\caption{Average latency of ablation study on Xavier NX.}
\label{fig:latency_ablation_nx}
\end{figure*}

\begin{figure*}[htbp]
\captionsetup[subfloat]{font=footnotesize}	
\centering
\subfloat[Average latency for VGG]{\includegraphics[width = 0.33\textwidth]{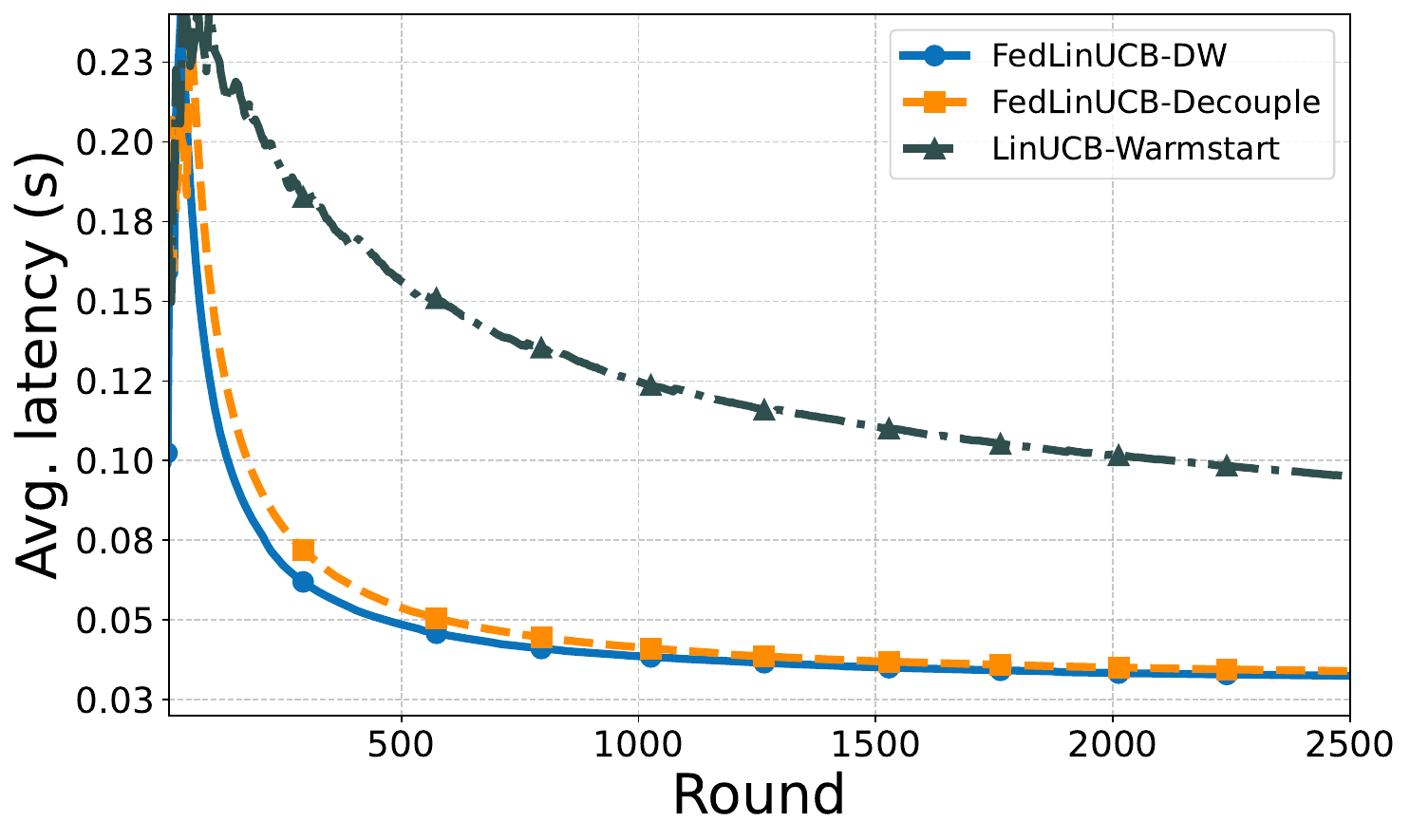}}
   	\subfloat[Average latency for ResNet]{\includegraphics[width = 0.33\textwidth]{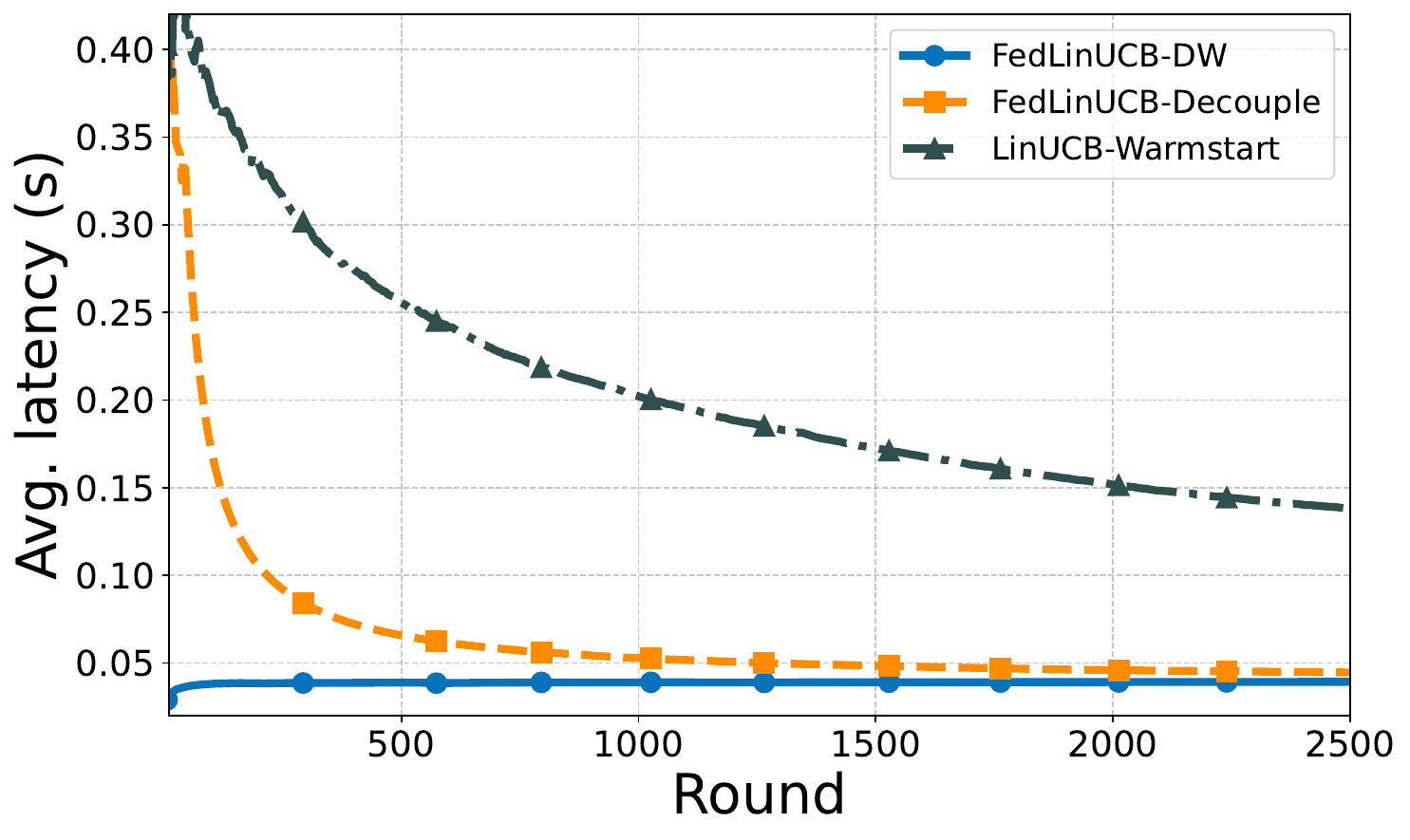}}
	\subfloat[Average latency for ViT]{\includegraphics[width = 0.33\textwidth]{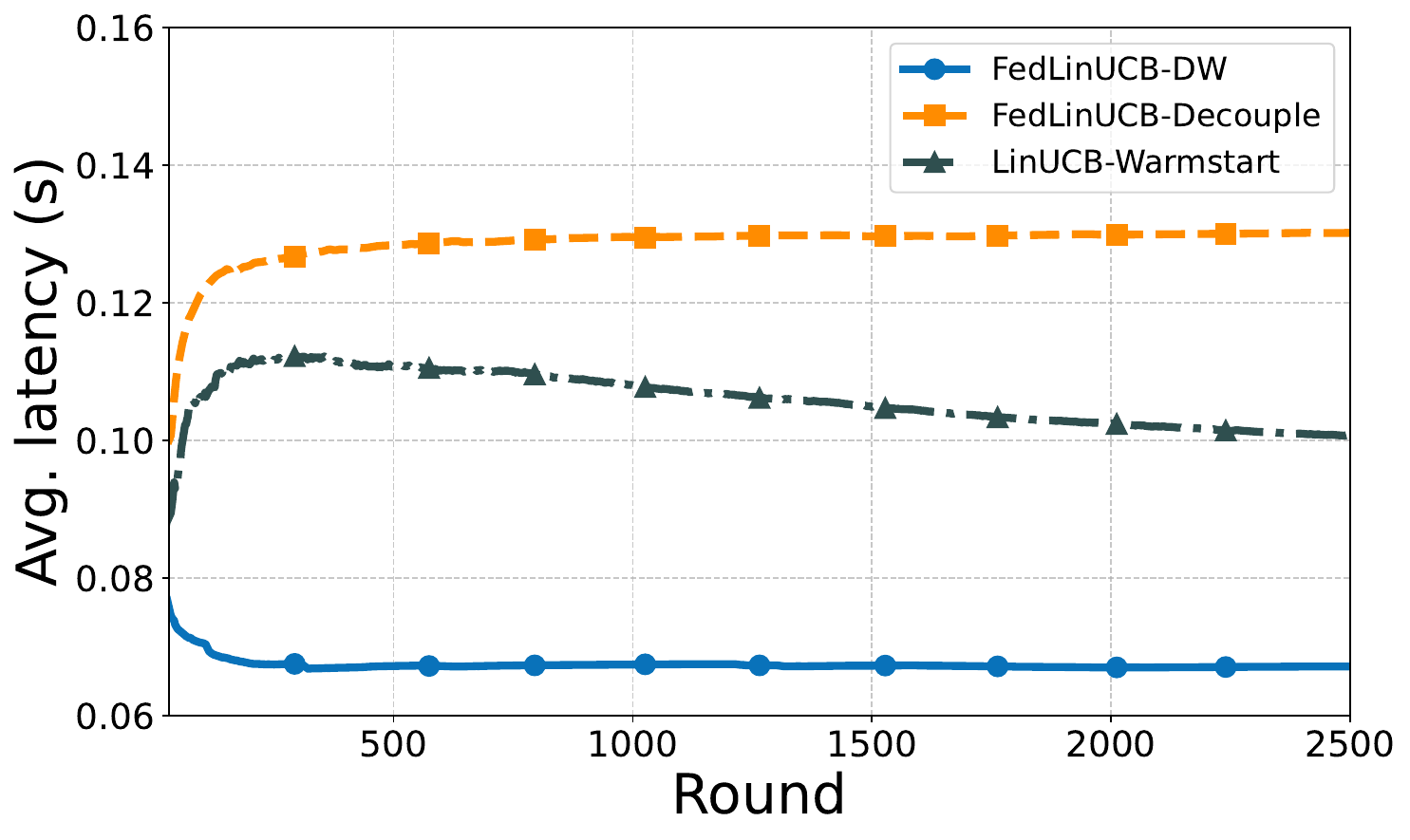}}
\caption{Average latency of ablation study on Jetson Orin Nano.}
\label{fig:latency_ablation_nano}
\end{figure*}

\textbf{Experimental results.} We first compare the average latency of the proposed FedLinUCB-DW against various baseline methods. The experimental results for the Jetson Xavier NX and Jetson Orin Nano are reported in Fig. \ref{fig:latency_bs_nx} and Fig. \ref{fig:latency_bs_nano}, respectively. We observe that, across three different backbone models on both devices, FedLinUCB-DW consistently achieves the lowest average end-to-end inference latency. For VGG-16 and ViT-16, the latency of FedLinUCB-DW decreases rapidly, as it jointly leverages decoupled shared feedback and offline experience to accelerate parameter estimation. For ResNet-50, FedLinUCB-DW attains the lowest latency from the initial rounds, owing to the warm-start initialization, which provides an accurate estimate of the system parameters. Besides, we observe that, unlike the simulation results, the average latency of FedLinUCB in the hardware prototype experiments decreases rapidly over rounds and eventually converges to a level comparable to FedLinUCB-DW, outperforming other baselines. This is because the two devices used in our prototype, the Xavier NX and Jetson Orin Nano, possess similar computational capabilities. The low degree of hardware heterogeneity ensures that sharing statistics for the overall system parameters in FedLinUCB does not introduce significant bias. Moreover, although LinUCB shows a decrease in average latency as the number of rounds increases, its final performance still lags behind FedLinUCB-DW and FedLinUCB. This is because LinUCB only allows each device to estimate parameters using its own local feedback, leading to low learning efficiency and making it difficult to quickly identify the optimal partition point.

We then conduct an ablation study of the proposed FedLinUCB-DW on the Jetson Xavier NX and Jetson Orin Nano, with results reported in Figure~\ref{fig:latency_ablation_nx} and Figure~\ref{fig:latency_ablation_nano}, respectively. As previously discussed, FedLinUCB-DW consists of two key components: (1) decoupling the cooperative learning process of front-end and back-end parameters, and (2) leveraging offline inference experience for warm-start initialization. To isolate their individual effects and evaluate the effectiveness of these components, we compare FedLinUCB-DW with two variants: \textbf{FedLinUCB-Decouple}, which performs decoupled cooperative online learning without offline warm-start, and \textbf{LinUCB-Warmstart}, which utilizes offline experience for initialization but does not involve any inter-device cooperation. From the experimental results, we first observe that FedLinUCB-DW outperforms the other two variants, indicating that the two components work jointly to achieve performance gains. Moreover, for VGG-16 and ResNet-50, FedLinUCB-Decouple exhibits performance comparable to FedLinUCB-DW and significantly outperforms LinUCB-Warmstart, suggesting that offline experience has a marginal impact on these specific tasks, whereas the majority of the performance improvements are derived from cooperative feedback sharing. In contrast, for the task of ViT-16, the warm-start component plays a more significant role. This is because the local statistics constructed from offline experience provide a strong initial estimate of the system parameters, enabling rapid identification of the optimal partition point and thereby reducing inference latency.

%% file: content/supplement.tex
\onecolumn

\clearpage
\begin{center}
    {\LARGE \bfseries Supplementary Material}
\end{center}
\appendices
    
In this supplementary material, we provide detailed proofs of the regret upper bound and communication complexity of the proposed FedLinUCB-DW algorithm. The overall analysis can be decomposed into two parts, corresponding to the front-end and back-end cooperative online learning processes. We note that the analysis of the back-end component follows arguments similar to those in~\cite{he2022simple}, which considers homogeneous agents without warm-starting from offline experience. Therefore, in the following, we mainly focus on the proofs for the front-end component.

\section*{Notations}

In this section, we summarize the main notations used in the following  analysis.
Table~\ref{tab:basic_notation} lists the basic system, latency, and model parameters.
Table~\ref{tab:frontend_statistics_notation} lists the offline and online front-end sufficient statistics used in the analysis.

During the online stage,
$\boldsymbol{\Sigma}^{f,\mathrm{up}}_{m,t}$ and $\mathbf b^{f,\mathrm{up}}_{m,t}$ denote online information of device $m$ that has already been uploaded to the server, while
$\boldsymbol{\Sigma}^{f,\mathrm{loc}}_{m,t}$ and $\mathbf b^{f,\mathrm{loc}}_{m,t}$ denote online information that remains in the local buffer of device $m$. The all-information quantities
$\boldsymbol{\Sigma}^{f,\mathrm{all}}_{\mathcal G_k,t}$ and
$\mathbf b^{f,\mathrm{all}}_{\mathcal G_k,t}$
are used only for the proof.
They contain the group-level offline information, all uploaded online information, and all local unuploaded online information within group $\mathcal G_k$.
We emphasize that the actual algorithm cannot access the all-information quantities under asynchronous communication.

\begin{table*}[htbp]
\centering
\caption{Basic notations.}
\label{tab:basic_notation}
\normalsize
\renewcommand{\arraystretch}{1.18}
\begin{tabularx}{\textwidth}{>{\centering\arraybackslash}p{0.30\textwidth}X}
\toprule
Notation & Meaning \\
\midrule

$M$ & Number of devices. \\

$K$ & Number of front-end groups. \\

$\mathcal G_k$ & The $k$-th front-end group. \\

$M_k=|\mathcal G_k|$ & Size of group $\mathcal G_k$. \\

$g(m)$ & Group index of device $m$. \\

$m_t$ & Active device at round $t$. \\

$k_t=g(m_t)$ & Active group index at round $t$. \\

$\mathcal P=[P]$ & Set of partition points. \\

$p_t$ & Selected partition point at round $t$. \\

$p_t^\star$ & Optimal partition point at round $t$. \\

$\mathbf x_p$ & Context of partition point $p$. \\

$\mathbf x_p^f$ & Front-end context of $p$. \\

$\mathbf x_p^b$ & Back-end context of $p$. \\

$d_f,d_b$ & Front-end and back-end dimensions. \\

$\boldsymbol{\theta}^f_m$ & Front-end parameter of device $m$. \\

$\boldsymbol{\theta}^b$ & Shared back-end parameter. \\

$\boldsymbol{\theta}_m$ & Full parameter of device $m$. \\

$\ell_t^f$ & Front-end latency at round $t$. \\

$\ell_t^b$ & Back-end latency at round $t$. \\

$\ell_t$ & Total latency at round $t$. \\

$\ell_{m,p}$ & Expected latency of device $m$ on $p$. \\

$\eta_t^f,\eta_t^b$ & Front-end and back-end noises. \\

$R_f,R_b$ & Sub-Gaussian noise parameters. \\

$L_f,L_b$ & Context norm bounds. \\

$S_f,S_b$ & Parameter norm bounds. \\

$\epsilon$ & Intra-group heterogeneity level. \\

$\lambda_f,\lambda_b$ & Ridge parameters. \\

$\alpha_f,\alpha_b$ & Communication trigger thresholds. \\

\bottomrule
\end{tabularx}
\end{table*}

\begin{table*}[htbp]
\centering
\caption{Offline and Online Front-end statistics.}
\label{tab:frontend_statistics_notation}
\normalsize
\renewcommand{\arraystretch}{1.18}
\begin{tabularx}{\textwidth}{>{\centering\arraybackslash}p{0.34\textwidth}X}
\toprule
Notation & Meaning \\
\midrule

$n_{\mathrm{off}}$ & Offline samples per device. \\

$\widetilde{\mathbf x}^f_{m,j}$ & Offline front-end context. \\

$\widetilde{\ell}^f_{m,j}$ & Offline front-end latency. \\

$\widetilde{\eta}^f_{m,j}$ & Offline front-end noise. \\

$\boldsymbol{\Sigma}^{f,\mathrm{off}}_m$ & Offline covariance of device $m$. \\

$\mathbf b^{f,\mathrm{off}}_m$ & Offline response vector of device $m$. \\

$\boldsymbol{\Sigma}^{f,\mathrm{off}}_{\mathcal G_k}$ & Group offline covariance. \\

$\mathbf b^{f,\mathrm{off}}_{\mathcal G_k}$ & Group offline response vector. \\

$\boldsymbol{\Sigma}^{f,0}_{\mathcal G_k}$ & Warm-start covariance  of group $\mathcal{G}_k$. \\

$\rho_k$ & Offline information strength  of group $\mathcal{G}_k$. \\

$\boldsymbol{\Sigma}^{f,\mathrm{up}}_{m,t}$ & Uploaded front-end covariance. \\

$\mathbf b^{f,\mathrm{up}}_{m,t}$ & Uploaded front-end response vector. \\

$\boldsymbol{\Sigma}^{f,\mathrm{loc}}_{m,t}$ & Local front-end covariance. \\

$\mathbf b^{f,\mathrm{loc}}_{m,t}$ & Local front-end response vector. \\

$\boldsymbol{\Sigma}^{f,\mathrm{ser}}_{\mathcal G_k,t}$ & Server front-end covariance  of group $\mathcal{G}_k$. \\

$\mathbf b^{f,\mathrm{ser}}_{\mathcal G_k,t}$ & Server front-end response vector  of group $\mathcal{G}_k$. \\

$\boldsymbol{\Sigma}^{f,\mathrm{all}}_{\mathcal G_k,t}$ & All-information front-end covariance  of group $\mathcal{G}_k$. \\

$\mathbf b^{f,\mathrm{all}}_{\mathcal G_k,t}$ & All-information front-end response vector  of group $\mathcal{G}_k$. \\

$\boldsymbol{\Sigma}^{f}_{m,t}$ & Front-end covariance held by device $m$. \\

$\mathbf b^f_{m,t}$ & Front-end response held by device $m$. \\

$\widehat{\boldsymbol{\theta}}^f_m(t)$ & Front-end estimator of device $m$. \\

$\widehat{\boldsymbol{\theta}}^{f,\mathrm{all}}_{\mathcal G_k}(t)$ & All-information front-end estimator of group $\mathcal{G}_k$. \\

\bottomrule
\end{tabularx}
\end{table*}


\section*{Proofs of the Regret Upper Bound}

\begin{lemma}[All-Information Confidence Bound]
For a group $\mathcal{G}_k$, under the boundedness, sub-Gaussian noise, and intra-group heterogeneity assumptions, with probability at least $1-\delta$, for all $t\in[T]$ and all target devices $m\in\mathcal{G}_k$, we have
\begin{align}
    \left\lVert 
    \widehat{\boldsymbol{\theta}}^{f,\mathrm{all}}_{\mathcal{G}_k}(t)-\boldsymbol{\theta}^f_m
     \right\rVert_{\boldsymbol{\Sigma}^{f,\mathrm{all}}_{\mathcal{G}_k,t}}
    \le
    \beta^{f,\mathrm{all}}_{k,T},
\end{align}
where
\begin{align}
\beta^{f,\mathrm{all}}_{k,T}
=
\sqrt{\lambda_f}S_f
+
R_f\sqrt{
d_f\log\left(1+\frac{\bar{N}_k(T)L_f^2}{\lambda_f d_f}\right)
+2\log\frac{M_kT}{\delta}
}
+
\epsilon L_f\sqrt{\bar{N}_k(T)}.
\end{align}

Equivalently, define
\begin{align}
\gamma^{f,\mathrm{all}}_{k,T}
=
R_f\sqrt{
d_f\log\left(1+\frac{\bar{N}_k(T)L_f^2}{\lambda_f d_f}\right)
+2\log\frac{M_kT}{\delta}
}
+
\epsilon L_f\sqrt{\bar{N}_k(T)}.    
\label{eq:gamma-all}
\end{align}

so that
\begin{align}
\beta^{f,\mathrm{all}}_{k,T}=\sqrt{\lambda_f}S_f+\gamma^{f,\mathrm{all}}_{k,T}.
\end{align}
\end{lemma}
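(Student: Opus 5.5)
We bound the error by the standard ridge-regression decomposition applied to the pooled (all-information) statistics of group $\mathcal{G}_k$. Write $\boldsymbol{\Sigma}^{f,\mathrm{all}}_{\mathcal{G}_k,t}=\lambda_f\mathbf{I}+\sum_{s\in\mathcal{S}_{k,t}}\mathbf{z}_s\mathbf{z}_s^\top$ and $\mathbf{b}^{f,\mathrm{all}}_{\mathcal{G}_k,t}=\sum_{s\in\mathcal{S}_{k,t}}y_s\mathbf{z}_s$. Here $\mathcal{S}_{k,t}$ indexes all $M_kn_{\mathrm{off}}$ offline samples of the group together with the online front-end samples of group-$k$ devices up to round $t$. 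Each sample $s$ comes from some device $m(s)\in\mathcal{G}_k$ and satisfies $y_s=\langle\boldsymbol{\theta}^f_{m(s)},\mathbf{z}_s\rangle+\xi_s$, with $\xi_s$ the corresponding offline or online noise.

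Fix a target device $m\in\mathcal{G}_k$. Write $\boldsymbol{\theta}^f_{m(s)}=\boldsymbol{\theta}^f_m+\boldsymbol{\Delta}_s$ with $\Vert\boldsymbol{\Delta}_s\Vert\le\epsilon$ by Assumption~\ref{assumption:hetero}. Setting $\mathbf{A}:=\boldsymbol{\Sigma}^{f,\mathrm{all}}_{\mathcal{G}_k,t}$, we get
\begin{align}
\widehat{\boldsymbol{\theta}}^{f,\mathrm{all}}_{\mathcal{G}_k}(t)-\boldsymbol{\theta}^f_m
=-\lambda_f\mathbf{A}^{-1}\boldsymbol{\theta}^f_m+\mathbf{A}^{-1}\sum_{s}\xi_s\mathbf{z}_s+\mathbf{A}^{-1}\sum_{s}\mathbf{z}_s\mathbf{z}_s^\top\boldsymbol{\Delta}_s .
\end{align}
Taking the $\mathbf{A}$-norm and applying the triangle inequality leaves three terms.

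\emph{Ridge term.} It is bounded by $\lambda_f\Vert\boldsymbol{\theta}^f_m\Vert_{\mathbf{A}^{-1}}\le\sqrt{\lambda_f}S_f$, since $\mathbf{A}\succeq\lambda_f\mathbf{I}$.

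\emph{Noise term.} It equals $\Vert\sum_s\xi_s\mathbf{z}_s\Vert_{\mathbf{A}^{-1}}$. We handle it with the self-normalized martingale bound of \cite{abbasi2011improved}. Order the samples with all offline samples first, in any fixed order, followed by online samples in round order. The offline noises are sub-Gaussian by assumption (we take them $R_f$-sub-Gaussian and independent of the online process), and the online $\eta^f_t$ are conditionally $R_f$-sub-Gaussian by Assumption~\ref{assumption:noise}. The resulting bound is $R_f\sqrt{2\log(\det(\mathbf{A})^{1/2}\det(\lambda_f\mathbf{I})^{-1/2}/\delta')}$.

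The determinant–trace inequality with $|\mathcal{S}_{k,t}|\le\bar{N}_k(T)$ and $\Vert\mathbf{z}_s\Vert\le L_f$ gives $\log\det(\mathbf{A})/\det(\lambda_f\mathbf{I})\le d_f\log(1+\bar{N}_k(T)L_f^2/(\lambda_fd_f))$. This yields the stated $\sqrt{d_f\log(\cdot)+2\log(1/\delta')}$ form up to the constant in front of the first log, which we absorb by the usual convention. The self-normalized bound already holds uniformly over time. To be safe and match the $2\log(M_kT/\delta)$ term, we take a union bound over the $M_k$ target devices and $T$ rounds with $\delta'=\delta/(M_kT)$; a further union over groups only changes constants.

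\emph{Heterogeneity bias term.} This is the main obstacle, since the $\boldsymbol{\Delta}_s$ differ across samples and we cannot factor them out. We write it as $\Vert\mathbf{A}^{-1/2}\mathbf{Z}^\top\mathbf{D}\Vert$, where $\mathbf{Z}$ stacks the $\mathbf{z}_s^\top$ and $\mathbf{D}$ stacks the scalars $\mathbf{z}_s^\top\boldsymbol{\Delta}_s$. Then $\Vert\mathbf{A}^{-1/2}\mathbf{Z}^\top\Vert_{\mathrm{op}}\le1$, because $\mathbf{Z}^\top\mathbf{Z}\preceq\mathbf{A}$. Also $\Vert\mathbf{D}\Vert_2\le\sqrt{|\mathcal{S}_{k,t}|}\,\epsilon L_f\le\epsilon L_f\sqrt{\bar{N}_k(T)}$ by Cauchy–Schwarz. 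This term is deterministic, so it needs no probability.

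Summing the three bounds gives $\sqrt{\lambda_f}S_f+\gamma^{f,\mathrm{all}}_{k,T}=\beta^{f,\mathrm{all}}_{k,T}$ simultaneously for all $t$ and $m\in\mathcal{G}_k$ with probability at least $1-\delta$.
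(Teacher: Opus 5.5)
Your proposal is correct and follows the paper's proof essentially step for step. It uses the same decomposition of $\widehat{\boldsymbol{\theta}}^{f,\mathrm{all}}_{\mathcal{G}_k}(t)-\boldsymbol{\theta}^f_m$ into ridge, noise, and heterogeneity terms, the same self-normalized bound with a union over $M_k$ devices and $T$ rounds, and your estimate $\Vert\mathbf{A}^{-1/2}\mathbf{Z}^\top\Vert_{\mathrm{op}}\le 1$ is just the matrix form of the paper's dual-norm/Cauchy--Schwarz argument for the bias term. As you noted, that union is strictly unnecessary: the bound is already time-uniform and the noise vector does not depend on $m$.

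One small remark: no constant needs to be absorbed in the noise term. The identity $2\log(\det(\mathbf{A})^{1/2}\det(\lambda_f\mathbf{I})^{-1/2}/\delta')=\log(\det(\mathbf{A})/\det(\lambda_f\mathbf{I}))+2\log(1/\delta')$ already gives exactly the stated form.
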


\begin{proof} For a group $\mathcal G_k$, we define the all-information front-end covariance and response vector by
\begin{align}
    \boldsymbol{\Sigma}^{f,\mathrm{all}}_{\mathcal G_k,t}
    &=
    \lambda_f\mathbf I_{d_f}
    +
    \boldsymbol{\Sigma}^{f,\mathrm{off}}_{\mathcal G_k}
    +
    \sum_{\substack{s\le t\\m_s\in\mathcal G_k}}
    \mathbf x^f_{p_s}
    \left(\mathbf x^f_{p_s}\right)^\top,
    \label{eq:all-cov}\\
    \mathbf b^{f,\mathrm{all}}_{\mathcal G_k,t}
    &=
    \mathbf b^{f,\mathrm{off}}_{\mathcal G_k}
    +
    \sum_{\substack{s\le t\\m_s\in\mathcal G_k}}
    \mathbf x^f_{p_s}\ell^f_s.
    \label{eq:all-b}
\end{align}

The all-information estimator is
\begin{align}
    \widehat{\boldsymbol{\theta}}^{f,\mathrm{all}}_{\mathcal G_k}(t)
    =
    \left(
    \boldsymbol{\Sigma}^{f,\mathrm{all}}_{\mathcal G_k,t}
    \right)^{-1}
    \mathbf b^{f,\mathrm{all}}_{\mathcal G_k,t}.
\end{align}

Let
\begin{align}
N_k(t)
&=
\sum_{s=1}^{t}
\mathbf 1\{m_s\in\mathcal G_k\},\\
\bar{N}_k(t)&=M_kn_{\mathrm{off}}+N_k(t).
\end{align}

We define the all-information noise vector as
\begin{align}
    \mathbf u^{f,\mathrm{all}}_{\mathcal G_k,t}
    =
    \sum_{m\in\mathcal G_k}
    \sum_{j=1}^{n_{\mathrm{off}}}
    \widetilde{\mathbf x}^f_{m,j}\widetilde{\eta}^f_{m,j}
    +
    \sum_{\substack{s\le t\\m_s\in\mathcal G_k}}
    \mathbf x^f_{p_s}\eta^f_s.
\end{align}

Fix a target device $m\in\mathcal G_k$. For any sample $\xb$ generated by a device $n\in\mathcal G_k$, we have
\begin{align}
    \ell^f
    &=
    \left\langle
    \boldsymbol{\theta}^f_n,
    \mathbf x^f
    \right\rangle
    +
    \eta^f
    =
    \left\langle
    \boldsymbol{\theta}^f_m,
    \mathbf x^f
    \right\rangle
    +
    \left\langle
    \boldsymbol{\theta}^f_n-\boldsymbol{\theta}^f_m,
    \mathbf x^f
    \right\rangle
    +
    \eta^f.
\end{align}

Using this decomposition in \eqref{eq:all-b}, we obtain
\begin{align}
\mathbf b^{f,\mathrm{all}}_{\mathcal G_k,t}
=&
\left(
\boldsymbol{\Sigma}^{f,\mathrm{all}}_{\mathcal G_k,t}
-
\lambda_f\mathbf I_{d_f}
\right)
\boldsymbol{\theta}^f_m
+
\mathbf u^{f,\mathrm{all}}_{\mathcal G_k,t}+
\sum_{n\in\mathcal G_k}
\sum_{j=1}^{n_{\mathrm{off}}}
\widetilde{\mathbf x}^f_{n,j}
\left(\widetilde{\mathbf x}^f_{n,j}\right)^\top
\left(
\boldsymbol{\theta}^f_n-
\boldsymbol{\theta}^f_m
\right)
+
\sum_{\substack{s\le t\\m_s\in\mathcal G_k}}
\mathbf x^f_{p_s}
\left(\mathbf x^f_{p_s}\right)^\top
\left(
\boldsymbol{\theta}^f_{m_s}-
\boldsymbol{\theta}^f_m
\right).
\label{eq:all-b-decomp-no-h}
\end{align}

Multiplying both sides by $\left(\boldsymbol{\Sigma}^{f,\mathrm{all}}_{\mathcal G_k,t}\right)^{-1}$ and subtracting $\boldsymbol{\theta}^f_m$ gives
\begin{align}
\widehat{\boldsymbol{\theta}}^{f,\mathrm{all}}_{\mathcal G_k}(t)
-
\boldsymbol{\theta}^f_m
&=
-
\lambda_f
\left(
\boldsymbol{\Sigma}^{f,\mathrm{all}}_{\mathcal G_k,t}
\right)^{-1}
\boldsymbol{\theta}^f_m
+
\left(
\boldsymbol{\Sigma}^{f,\mathrm{all}}_{\mathcal G_k,t}
\right)^{-1}
\mathbf u^{f,\mathrm{all}}_{\mathcal G_k,t}
+
\left(
\boldsymbol{\Sigma}^{f,\mathrm{all}}_{\mathcal G_k,t}
\right)^{-1}
\Bigg[
\sum_{n\in\mathcal G_k}
\sum_{j=1}^{n_{\mathrm{off}}}
\widetilde{\mathbf x}^f_{n,j}
\left(\widetilde{\mathbf x}^f_{n,j}\right)^\top
\left(
\boldsymbol{\theta}^f_n-
\boldsymbol{\theta}^f_m
\right)
\nonumber\\
&\quad+
\sum_{\substack{s\le t\\m_s\in\mathcal G_k}}
\mathbf x^f_{p_s}
\left(\mathbf x^f_{p_s}\right)^\top
\left(
\boldsymbol{\theta}^f_{m_s}-
\boldsymbol{\theta}^f_m
\right)
\Bigg].
\end{align}

Taking the $\boldsymbol{\Sigma}^{f,\mathrm{all}}_{\mathcal G_k,t}$-norm yields three terms.
The regularization term satisfies
\begin{align}
&\left\|
\lambda_f
\left(
\boldsymbol{\Sigma}^{f,\mathrm{all}}_{\mathcal G_k,t}
\right)^{-1}
\boldsymbol{\theta}^f_m
\right\|_{\boldsymbol{\Sigma}^{f,\mathrm{all}}_{\mathcal G_k,t}}
\nonumber=
\lambda_f
\sqrt{
\left(\boldsymbol{\theta}^f_m\right)^\top
\left(
\boldsymbol{\Sigma}^{f,\mathrm{all}}_{\mathcal G_k,t}
\right)^{-1}
\boldsymbol{\theta}^f_m
}
\le
\sqrt{\lambda_f}S_f.
\end{align}

By the self-normalized concentration inequality and a union bound over $m\in\mathcal G_k$ and $t\le T$,
\begin{align}
\left\|
\mathbf u^{f,\mathrm{all}}_{\mathcal G_k,t}
\right\|_{
\left(
\boldsymbol{\Sigma}^{f,\mathrm{all}}_{\mathcal G_k,t}
\right)^{-1}}
\le
R_f
\sqrt{
    d_f
    \log\left(
        1+
        \frac{\bar{N}_k(T)L_f^2}{\lambda_fd_f}
    \right)
    +
    2\log\frac{M_kT}{\delta}
}.
\end{align}

It remains to control the explicit heterogeneity terms. By dual norm, for any vector $\mathbf z$ satisfying
\begin{align}
    \left\|\mathbf z\right\|_{\boldsymbol{\Sigma}^{f,\mathrm{all}}_{\mathcal G_k,t}}
    \le 1,
\end{align}

we have
\begin{align}
&\mathbf z^\top
\Bigg[
\sum_{n\in\mathcal G_k}
\sum_{j=1}^{n_{\mathrm{off}}}
\widetilde{\mathbf x}^f_{n,j}
\left(\widetilde{\mathbf x}^f_{n,j}\right)^\top
\left(
\boldsymbol{\theta}^f_n-
\boldsymbol{\theta}^f_m
\right)
+
\sum_{\substack{s\le t\\m_s\in\mathcal G_k}}
\mathbf x^f_{p_s}
\left(\mathbf x^f_{p_s}\right)^\top
\left(
\boldsymbol{\theta}^f_{m_s}-
\boldsymbol{\theta}^f_m
\right)
\Bigg]\\
&\le
\left(
\sum_{r\in\mathcal I_k(t)}
\left(\mathbf z^\top\mathbf x^f_r\right)^2
\right)^{1/2}
\left(
\sum_{r\in\mathcal I_k(t)}
\left[
\left(\mathbf x^f_r\right)^\top
\left(
\boldsymbol{\theta}^f_{a(r)}-
\boldsymbol{\theta}^f_m
\right)
\right]^2
\right)^{1/2},
\end{align}
where $\mathcal I_k(t)$ is the set of all offline front-end samples in group $\mathcal G_k$ and all online front-end samples from group $\mathcal G_k$ up to round $t$, and $a(r)$ is the device that generated sample $r$.

The first factor is at most one because
\begin{align}
\sum_{r\in\mathcal I_k(t)}
\left(\mathbf z^\top\mathbf x^f_r\right)^2
\le
\mathbf z^\top
\boldsymbol{\Sigma}^{f,\mathrm{all}}_{\mathcal G_k,t}
\mathbf z
\le 1.
\end{align}

For the second factor, since $a(r),m\in\mathcal G_k$,
\begin{align}
\left\|
\boldsymbol{\theta}^f_{a(r)}-
\boldsymbol{\theta}^f_m
\right\|_2
\le
\epsilon,
\end{align}
and $\left\|\mathbf x^f_r\right\|_2\le L_f$, we have
\begin{align}
\left|
\left(\mathbf x^f_r\right)^\top
\left(
\boldsymbol{\theta}^f_{a(r)}-
\boldsymbol{\theta}^f_m
\right)
\right|
\le
\epsilon L_f.
\end{align}

Since $|\mathcal I_k(t)|\le \bar{N}_k(T)$, the explicit heterogeneity terms are bounded by
\begin{align}
\epsilon L_f\sqrt{\bar{N}_k(T)}.
\end{align}

Combining the three bounds proves the lemma.
\end{proof}

\subsection{Per-Agent Uploaded and Local Information}

Below we define the per-agent information. For each device $m$, let $\tau^f_m(t)$ denote the most recent front-end upload time of device $m$ before or at round $t$. Online samples with $s\le \tau^f_m(t)$ have been uploaded, while samples with $s>\tau^f_m(t)$ remain in the local buffer.

For device $m$, the uploaded front-end covariance, response, and noise are
\begin{align}
\boldsymbol{\Sigma}^{f,\mathrm{up}}_{m,t}
&=
\sum_{\substack{s\le t\\m_s=m\\s\le \tau^f_m(t)}}
\mathbf{x}^f_{p_s}(\mathbf{x}^f_{p_s})^\top,\\
\mathbf{b}^{f,\mathrm{up}}_{m,t}
&=
\sum_{\substack{s\le t\\m_s=m\\s\le \tau^f_m(t)}}
\mathbf{x}^f_{p_s}\ell_s^f,\\
\mathbf{u}^{f,\mathrm{up}}_{m,t}
&=
\sum_{\substack{s\le t\\m_s=m\\s\le \tau^f_m(t)}}
\mathbf{x}^f_{p_s}\eta_s^f.
\end{align}

The local unuploaded covariance, response, and noise are
\begin{align}
\boldsymbol{\Sigma}^{f,\mathrm{loc}}_{m,t}
&=
\sum_{\substack{s\le t\\m_s=m\\s> \tau^f_m(t)}}
\mathbf{x}^f_{p_s}(\mathbf{x}^f_{p_s})^\top,\\
\mathbf{b}^{f,\mathrm{loc}}_{m,t}
&=
\sum_{\substack{s\le t\\m_s=m\\s> \tau^f_m(t)}}
\mathbf{x}^f_{p_s}\ell_s^f,\\
\mathbf{u}^{f,\mathrm{loc}}_{m,t}
&=
\sum_{\substack{s\le t\\m_s=m\\s> \tau^f_m(t)}}
\mathbf{x}^f_{p_s}\eta_s^f.
\end{align}


Based on these definitions, if $m\in\mathcal{G}_k$,  the server-side group covariance is
\begin{align}
    \boldsymbol{\Sigma}^{f,\mathrm{ser}}_{\mathcal{G}_k,t}
    =
    \lambda_f\mathbf{I}_{d_f}
    +
    \boldsymbol{\Sigma}^{f,\mathrm{off}}_{\mathcal{G}_k}
    +
    \sum_{m\in\mathcal{G}_k}\boldsymbol{\Sigma}^{f,\mathrm{up}}_{m,t},
\end{align}
and the all-information covariance satisfies
\begin{align}
    \boldsymbol{\Sigma}^{f,\mathrm{all}}_{\mathcal{G}_k,t}
    =
    \boldsymbol{\Sigma}^{f,\mathrm{ser}}_{\mathcal{G}_k,t}
    +
    \sum_{m\in\mathcal{G}_k}\boldsymbol{\Sigma}^{f,\mathrm{loc}}_{m,t}.
\end{align}

\begin{lemma}[Local Front-End Concentration]
For a group $\mathcal G_k$, with probability at least $1-\delta$, for all $t\le T$ and all $m,n\in\mathcal G_k$, we have
\begin{align}
&\Bigg\|
\mathbf u^{f,\mathrm{loc}}_{n,t}
+
\sum_{\substack{s\le t\\m_s=n\\s>\tau^f_n(t)}}
\mathbf x^f_{p_s}
\left(\mathbf x^f_{p_s}\right)^\top
\left(
\boldsymbol{\theta}^f_n-
\boldsymbol{\theta}^f_m
\right)
\Bigg\|_{
\left(
\alpha_f\lambda_f\mathbf I_{d_f}
+
\boldsymbol{\Sigma}^{f,\mathrm{loc}}_{n,t}
\right)^{-1}}
\le
\gamma^{f,\mathrm{loc}}_{k,T},
\end{align}
where
\begin{align}
\gamma^{f,\mathrm{loc}}_{k,T}
=&
R_f
\sqrt{
    d_f
    \log\left(
        1+
        \frac{N_k(T)L_f^2}{\alpha_f\lambda_fd_f}
    \right)
    +
    2\log\frac{M_kT^2}{\delta}
}
+
\epsilon L_f\sqrt{N_k(T)}.
\label{eq:gamma-loc}
\end{align}
\end{lemma}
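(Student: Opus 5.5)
The plan is to mirror the proof of the All-Information Confidence Bound lemma, now applied to a single device's local buffer, and to split the vector inside the norm into a stochastic part and a heterogeneity (bias) part. By the triangle inequality, the left-hand side is at most
\begin{align}
\left\|\mathbf u^{f,\mathrm{loc}}_{n,t}\right\|_{(\alpha_f\lambda_f\mathbf I_{d_f}+\boldsymbol{\Sigma}^{f,\mathrm{loc}}_{n,t})^{-1}}
+\Bigg\|\sum_{\substack{s\le t\\m_s=n\\s>\tau^f_n(t)}}\mathbf x^f_{p_s}(\mathbf x^f_{p_s})^\top(\boldsymbol{\theta}^f_n-\boldsymbol{\theta}^f_m)\Bigg\|_{(\alpha_f\lambda_f\mathbf I_{d_f}+\boldsymbol{\Sigma}^{f,\mathrm{loc}}_{n,t})^{-1}}.
\end{align}
These two terms will produce, respectively, the $R_f\sqrt{\cdot}$ part and the $\epsilon L_f\sqrt{N_k(T)}$ part of $\gamma^{f,\mathrm{loc}}_{k,T}$.

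\textbf{Bias term.} This is handled exactly as in the previous lemma. Use the dual-norm characterization: take any $\mathbf z$ with $\|\mathbf z\|_{\alpha_f\lambda_f\mathbf I+\boldsymbol{\Sigma}^{f,\mathrm{loc}}_{n,t}}\le 1$. Apply Cauchy--Schwarz over the local samples $s$. The first factor is $\sum_s(\mathbf z^\top\mathbf x^f_{p_s})^2\le \mathbf z^\top\boldsymbol{\Sigma}^{f,\mathrm{loc}}_{n,t}\mathbf z\le 1$. In the second factor, each summand satisfies $|(\mathbf x^f_{p_s})^\top(\boldsymbol{\theta}^f_n-\boldsymbol{\theta}^f_m)|\le\epsilon L_f$ by Assumption~\ref{assumption:hetero}. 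The number of local samples is at most $N_k(T)$, so this term is at most $\epsilon L_f\sqrt{N_k(T)}$. This bound is deterministic, holding for all $t$ and all $m,n\in\mathcal G_k$ simultaneously, so no union bound over $m$ is needed here.

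\textbf{Noise term (the main obstacle).} The difficulty is that the local buffer sum starts at the random time $\tau^f_n(t)+1$, which depends on the data through the upload criterion. Consequently, $\mathbf u^{f,\mathrm{loc}}_{n,t}$ is not a single self-normalized martingale to which the Abbasi-Yadkori inequality can be applied directly.

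I would handle this, as in FedLinUCB~\cite{he2022simple}, by fixing the start index. For each deterministic pair $(n,\tau)$ with $\tau<T$, consider the process $\mathbf u_{n,\tau}(t)=\sum_{\tau<s\le t,\,m_s=n}\mathbf x^f_{p_s}\eta^f_s$. With respect to the filtration in Assumption~\ref{assumption:noise}, this is a sub-Gaussian martingale in $t$, and its Gram matrix is $\sum\mathbf x^f_{p_s}(\mathbf x^f_{p_s})^\top$. The self-normalized bound with regularizer $\alpha_f\lambda_f\mathbf I$, uniform over $t$, gives
\begin{align}
\|\mathbf u_{n,\tau}(t)\|_{(\alpha_f\lambda_f\mathbf I+\boldsymbol\Sigma)^{-1}}\le R_f\sqrt{2\log\frac{\det(\alpha_f\lambda_f\mathbf I+\boldsymbol\Sigma)^{1/2}}{\det(\alpha_f\lambda_f\mathbf I)^{1/2}}+2\log\frac{1}{\delta'}}.
\end{align}
The determinant-trace inequality, together with the fact that at most $N_k(T)$ samples have norm at most $L_f$, bounds the log-determinant ratio by $d_f\log(1+N_k(T)L_f^2/(\alpha_f\lambda_f d_f))$.

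Next, take a union bound over the $M_k$ devices $n$ and the at most $T$ start indices $\tau$. Uniformity in $t$ is already supplied by the stopping-time form of the inequality, but if a per-$t$ version is used instead, a further union bound over $T$ values of $t$ is needed. Setting $\delta'=\delta/(M_kT^2)$ then covers all cases and yields the $2\log(M_kT^2/\delta)$ term.

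Finally, instantiate $\tau=\tau^f_n(t)$. This is valid because the bound holds simultaneously for every $\tau$, so it holds in particular for the data-dependent choice. Adding the bias bound completes the proof of the lemma, with the event holding for all $t\le T$ and all $m,n\in\mathcal G_k$ with probability at least $1-\delta$.
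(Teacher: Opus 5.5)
Your proposal is correct and takes essentially the same route as the paper. The paper also splits the quantity into a self-normalized noise term, bounded with a union bound over the $M_k$ devices and all possible local intervals to get the $2\log(M_kT^2/\delta)$ term, and a deterministic heterogeneity term, bounded by the dual-norm/Cauchy--Schwarz argument from the all-information lemma. You additionally spell out the fixed-start-index martingale construction and the instantiation $\tau=\tau^f_n(t)$, which the paper only alludes to with the phrase ``union bound over all possible local intervals.''
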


\begin{proof}
For a fixed local interval, the self-normalized concentration inequality gives
\begin{align}
&\left\|
\mathbf u^{f,\mathrm{loc}}_{n,t}
\right\|_{
\left(
\alpha_f\lambda_f\mathbf I_{d_f}
+
\boldsymbol{\Sigma}^{f,\mathrm{loc}}_{n,t}
\right)^{-1}}
\le
R_f
\sqrt{
    d_f
    \log\left(
        1+
        \frac{N_k(T)L_f^2}{\alpha_f\lambda_fd_f}
    \right)
    +
    2\log\frac{M_kT^2}{\delta}
},
\end{align}
where the union bound is taken over devices and all possible local intervals.
For the explicit heterogeneity term, use the same dual-norm and Cauchy-Schwarz argument as in the all-information case, restricted to the local index set
\begin{align}
    \{s\le t:m_s=n,\ s>\tau^f_n(t)\}.
\end{align}
The first Cauchy-Schwarz factor is at most one in the norm induced by
\begin{align}
    \alpha_f\lambda_f\mathbf I_{d_f}
    +
    \boldsymbol{\Sigma}^{f,\mathrm{loc}}_{n,t},
\end{align}
and the second factor is at most $\epsilon L_f\sqrt{N_k(T)}$. Combining the stochastic and deterministic terms gives \eqref{eq:gamma-loc}.
\end{proof}

\begin{lemma}[Front-End Covariance Comparison]\label{lemma:covcomparion}
For a front-end group $\mathcal G_k$, under the front-end determinant-triggered communication rule, after the possible communication at round $t$, for each device $m\in\mathcal G_k$ it holds that
\begin{align}
\boldsymbol{\Sigma}^{f,\mathrm{ser}}_{\mathcal G_k,t}
=
\lambda_f\mathbf I_{d_f}
+
\boldsymbol{\Sigma}^{f,\mathrm{off}}_{\mathcal G_k}
+
\sum_{n\in\mathcal G_k}
\boldsymbol{\Sigma}^{f,\mathrm{up}}_{n,t}
\succeq
\frac{1}{\alpha_f}
\boldsymbol{\Sigma}^{f,\mathrm{loc}}_{m,t}.
\label{eq:frontend_cov_comp_1}
\end{align}
Consequently,
\begin{align}
\boldsymbol{\Sigma}^{f,\mathrm{ser}}_{\mathcal G_k,t}
\succeq
\frac{1}{1+M_k\alpha_f}
\boldsymbol{\Sigma}^{f,\mathrm{all}}_{\mathcal G_k,t},
\label{eq:frontend_cov_comp_server_all}
\end{align}
and
\begin{align}
\boldsymbol{\Sigma}^{f,\mathrm{ser}}_{\mathcal G_k,t}
\succeq
\frac{1}{2\alpha_f}
\left(
\alpha_f\lambda_f\mathbf I_{d_f}
+
\boldsymbol{\Sigma}^{f,\mathrm{loc}}_{m,t}
\right).
\label{eq:frontend_cov_comp_server_loc_reg}
\end{align}
Moreover, for any $1\le t_1<t_2\le T$, if device $m\in\mathcal G_k$ is the only active device in group $\mathcal G_k$ from round $t_1$ to round $t_2-1$, and device $m$ communicates with the group server only at round $t_1$, then for all $t_1+1\le t\le t_2$,
\begin{align}
\boldsymbol{\Sigma}^{f}_{m,t}
\succeq
\frac{1}{1+M_k\alpha_f}
\boldsymbol{\Sigma}^{f,\mathrm{all}}_{\mathcal G_k,t}.
\label{eq:frontend_cov_comp_2}
\end{align}
\end{lemma}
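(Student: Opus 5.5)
The plan is to derive all four claims from one invariant maintained by the determinant-triggered rule, and then assemble them by summing over the devices of $\mathcal G_k$.

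\textbf{Step 1: the invariant.} After the possible communication at round $t$, every device $m\in\mathcal G_k$ satisfies
\[
\det\bigl(\boldsymbol{\Sigma}^f_{m,t}+\boldsymbol{\Sigma}^{f,\mathrm{loc}}_{m,t}\bigr)\le(1+\alpha_f)\det\bigl(\boldsymbol{\Sigma}^f_{m,t}\bigr).
\]
If $m$ just uploaded, its buffer is reset to zero and the inequality is trivial. If $m$ did not upload, the trigger in Line~\ref{algmain:criterion} failed, which is exactly this inequality. If $m$ was inactive, neither matrix changed since its last active round.

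\textbf{Step 2: from determinants to a Loewner bound.} Write $\mathbf A=\boldsymbol{\Sigma}^f_{m,t}$ and $\mathbf B=\boldsymbol{\Sigma}^{f,\mathrm{loc}}_{m,t}$. The ratio equals $\det(\mathbf I+\mathbf A^{-1/2}\mathbf B\mathbf A^{-1/2})=\prod_j(1+\mu_j)$, where the $\mu_j\ge0$ are the eigenvalues of $\mathbf A^{-1/2}\mathbf B\mathbf A^{-1/2}$. A product bounded by $1+\alpha_f$ forces $\max_j\mu_j\le\alpha_f$, so $\mathbf B\preceq\alpha_f\mathbf A$. Next, $\mathbf A$ is the server matrix at $m$'s last synchronization. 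The server matrix only grows over time, since it accumulates PSD uploads on top of $\lambda_f\mathbf I+\boldsymbol{\Sigma}^{f,\mathrm{off}}_{\mathcal G_k}$, which is also the common initialization. Hence $\mathbf A\preceq\boldsymbol{\Sigma}^{f,\mathrm{ser}}_{\mathcal G_k,t}$, and \eqref{eq:frontend_cov_comp_1} follows.

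\textbf{Step 3: the two consequences.} Summing \eqref{eq:frontend_cov_comp_1} over the $M_k$ devices of the group gives $\sum_{n\in\mathcal G_k}\boldsymbol{\Sigma}^{f,\mathrm{loc}}_{n,t}\preceq M_k\alpha_f\boldsymbol{\Sigma}^{f,\mathrm{ser}}_{\mathcal G_k,t}$. Adding $\boldsymbol{\Sigma}^{f,\mathrm{ser}}_{\mathcal G_k,t}$ to both sides and using the decomposition of $\boldsymbol{\Sigma}^{f,\mathrm{all}}_{\mathcal G_k,t}$ yields \eqref{eq:frontend_cov_comp_server_all}. For \eqref{eq:frontend_cov_comp_server_loc_reg}, note that $\boldsymbol{\Sigma}^{f,\mathrm{ser}}_{\mathcal G_k,t}\succeq\lambda_f\mathbf I=\frac{1}{\alpha_f}\alpha_f\lambda_f\mathbf I$. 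Averaging this bound with \eqref{eq:frontend_cov_comp_1} gives $\boldsymbol{\Sigma}^{f,\mathrm{ser}}_{\mathcal G_k,t}\succeq\frac{1}{2\alpha_f}\bigl(\alpha_f\lambda_f\mathbf I+\boldsymbol{\Sigma}^{f,\mathrm{loc}}_{m,t}\bigr)$.

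\textbf{Step 4: the ``moreover'' part.} Suppose $m$ is the only active device of $\mathcal G_k$ during $[t_1,t_2-1]$ and synchronizes only at $t_1$. Then for $t_1+1\le t\le t_2$ we have $\boldsymbol{\Sigma}^f_{m,t}=\boldsymbol{\Sigma}^{f,\mathrm{ser}}_{\mathcal G_k,t_1}=\boldsymbol{\Sigma}^{f,\mathrm{ser}}_{\mathcal G_k,t}$, because no other device of the group uploaded in the meantime. The claim \eqref{eq:frontend_cov_comp_2} then follows from \eqref{eq:frontend_cov_comp_server_all}. One indexing detail needs care here: the bound must hold at round $t$ before the round-$t$ sample is added. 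This is handled by applying \eqref{eq:frontend_cov_comp_server_all} at time $t-1$, where $\boldsymbol{\Sigma}^{f,\mathrm{all}}$ is defined up to $t-1$.

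\textbf{Main obstacle.} I expect the hardest part to be the bookkeeping in Step 2: showing rigorously that $\boldsymbol{\Sigma}^f_{m,t}$ equals a past server snapshot, and that this snapshot is dominated by the current server matrix. This requires the server matrix to be monotone and the warm-start initialization to coincide with the server's initialization; both follow from Algorithms~\ref{alg:main} and~\ref{alg:global_update}.
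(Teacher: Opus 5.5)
Your proof is correct and is the FedLinUCB determinant-triggered covariance comparison argument that the paper cites without writing out; the paper's only added remark is that $\boldsymbol{\Sigma}^{f,\mathrm{off}}_{\mathcal G_k}$ sits in the common base matrix $\lambda_f\mathbf I+\boldsymbol{\Sigma}^{f,\mathrm{off}}_{\mathcal G_k}$ of both devices and server, which is your initialization and monotonicity point in Step~2. Your re-indexing in Step~4 is actually necessary, not just careful: with the paper's definition of $\boldsymbol{\Sigma}^{f,\mathrm{all}}_{\mathcal G_k,t}$ (samples $s\le t$), the literal bound can fail at $t=t_2$ when a group-$k$ device triggers an upload at round $t_2$, so the valid chain is $\boldsymbol{\Sigma}^f_{m,t}=\boldsymbol{\Sigma}^{f,\mathrm{ser}}_{\mathcal G_k,t-1}\succeq\frac{1}{1+M_k\alpha_f}\boldsymbol{\Sigma}^{f,\mathrm{all}}_{\mathcal G_k,t-1}$, which is also the form the elliptical-potential step uses (note that your intermediate equality should read $\boldsymbol{\Sigma}^{f,\mathrm{ser}}_{\mathcal G_k,t-1}$ rather than $\boldsymbol{\Sigma}^{f,\mathrm{ser}}_{\mathcal G_k,t}$, since these differ at $t=t_2$).
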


The proof follows the determinant-triggered covariance comparison argument in FedLinUCB~\cite{he2022simple}. The additional offline covariance \(\Sigma^{f,\mathrm{off}}_{G_k}\) is positive semidefinite and is included in the base matrix, so it only strengthens the covariance lower bounds and does not affect the determinant-growth argument.

\begin{lemma}[Local Front-End Confidence]
Fix a group $\mathcal G_k$. With probability at least $1-\delta$, for all rounds $t\in[T]$ and every device $m\in\mathcal G_k$,
\begin{align}
\left\|
\widehat{\boldsymbol{\theta}}^f_m(t)
-
\boldsymbol{\theta}^f_m
\right\|_{
\boldsymbol{\Sigma}^{f}_{m,t}}
\le
\beta^f_{k,T},
\end{align}
where
\begin{align}
\beta^f_{k,T}
=
\sqrt{\lambda_f}S_f
+
\sqrt{1+M_k\alpha_f}\gamma^{f,\mathrm{all}}_{k,T}
+
M_k\sqrt{2\alpha_f}\gamma^{f,\mathrm{loc}}_{k,T}.
\label{eq:beta-front}
\end{align}
\end{lemma}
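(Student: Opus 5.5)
We follow the FedLinUCB argument of \cite{he2022simple}: relate the device's estimator to the all-information estimator of the group, and route the discrepancy (information that other devices in $\mathcal G_k$ have not yet uploaded) through the covariance comparison of Lemma~\ref{lemma:covcomparion}. Fix $m\in\mathcal G_k$ and a round $t$. Let $t'\le t$ be the last round at which $m$ synchronized its front-end statistics; before any synchronization, $t'$ is the initialization, where $\Sg^f_{m,1}=\lambda_f\mathbf I+\Sg^{f,\mathrm{off}}_{\mathcal G_k}$. Then $\Sg^f_{m,t}=\Sg^{f,\ser}_{\mathcal G_k,t'}$ and $\Bf^f_{m,t}=\Bf^{f,\ser}_{\mathcal G_k,t'}$. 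We also have
\begin{align}
\Sg^{f,\ser}_{\mathcal G_k,t'}=\Sg^{f,\mathrm{all}}_{\mathcal G_k,t'}-\sum_{n\in\mathcal G_k}\Sg^{f,\loc}_{n,t'},
\end{align}
and the analogous identity holds for $\Bf$. Write $\Bf^{f,\loc}_{n,t'}=\Sg^{f,\loc}_{n,t'}\Tf^f_m+\mathbf v_n$, where $\mathbf v_n$ is exactly the vector (local noise plus local heterogeneity term) controlled by the Local Front-End Concentration lemma. Then
\begin{align}
\Sg^f_{m,t}\big(\widehat{\Tf}^f_m(t)-\Tf^f_m\big)=-\lambda_f\Tf^f_m+\big(\Bf^{f,\mathrm{all}}_{\mathcal G_k,t'}-(\Sg^{f,\mathrm{all}}_{\mathcal G_k,t'}-\lambda_f\mathbf I)\Tf^f_m\big)-\sum_{n\in\mathcal G_k}\mathbf v_n .
\end{align}

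Next we take the $(\Sg^f_{m,t})^{-1}$-norm of each of the three pieces.

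The first piece is the regularization term. Since $\Sg^f_{m,t}\succeq\lambda_f\mathbf I$, its norm is at most $\sqrt{\lambda_f}S_f$.

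The second piece is the all-information term $\mathbf w:=\mathbf u^{f,\mathrm{all}}_{\mathcal G_k,t'}+(\text{heterogeneity terms})$. The proof of the All-Information Confidence Bound lemma shows that $\|\mathbf w\|_{(\Sg^{f,\mathrm{all}}_{\mathcal G_k,t'})^{-1}}\le\gamma^{f,\mathrm{all}}_{k,T}$. By \eqref{eq:frontend_cov_comp_server_all}, $(\Sg^f_{m,t})^{-1}\preceq(1+M_k\alpha_f)(\Sg^{f,\mathrm{all}}_{\mathcal G_k,t'})^{-1}$, so this piece is at most $\sqrt{1+M_k\alpha_f}\,\gamma^{f,\mathrm{all}}_{k,T}$.

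The third piece collects the local terms. By \eqref{eq:frontend_cov_comp_server_loc_reg} applied with device $n$, $(\Sg^f_{m,t})^{-1}\preceq 2\alpha_f(\alpha_f\lambda_f\mathbf I+\Sg^{f,\loc}_{n,t'})^{-1}$. Each $\mathbf v_n$ therefore contributes at most $\sqrt{2\alpha_f}\,\gamma^{f,\mathrm{loc}}_{k,T}$, and summing over the $M_k$ devices gives $M_k\sqrt{2\alpha_f}\,\gamma^{f,\mathrm{loc}}_{k,T}$.

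Finally, the triangle inequality combined with $\|\widehat{\Tf}-\Tf\|_{\Sg}=\|\Sg(\widehat{\Tf}-\Tf)\|_{\Sg^{-1}}$ yields \eqref{eq:beta-front}. The probability statement comes from intersecting the events of the two concentration lemmas and rescaling $\delta$; this affects only the logarithmic factors.

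The main obstacle is making the snapshot bookkeeping rigorous. The comparison inequalities of Lemma~\ref{lemma:covcomparion} must be applied at the server snapshot $t'$ that $m$ actually holds, and must remain valid after other devices' later uploads. This works because the comparison is stated "after the possible communication" at every round, and we evaluate it at $t'$. The decomposition of the local responses must also be taken relative to the target $\Tf^f_m$ rather than to $\Tf^f_n$, which is exactly why the local lemma includes the $(\Tf^f_n-\Tf^f_m)$ term. A second subtlety is that the local concentration has to hold uniformly over the random buffer intervals. The $\log(M_kT^2/\delta)$ in $\gamma^{f,\mathrm{loc}}_{k,T}$ reflects a union bound over all start/end pairs, which we take as given from that lemma.
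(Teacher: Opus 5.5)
Your proposal is correct and follows essentially the same argument as the paper: you identify the device's held statistics with a server snapshot, write the server response as the all-information term minus the per-device unuploaded local terms (each taken relative to $\boldsymbol{\theta}^f_m$), and bound these pieces using \eqref{eq:frontend_cov_comp_server_all}, \eqref{eq:frontend_cov_comp_server_loc_reg} and the two concentration lemmas. Evaluating at the last synchronization snapshot $t'$ is equivalent to the paper's ``prove at synchronization rounds and inherit'' step, and your explicit handling of the warm-start (pre-synchronization) case and of the $\delta$-rescaling needed to intersect the two concentration events are points the paper leaves implicit.
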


\begin{proof}
It is enough to prove the result for synchronization rounds. If device $m$ does not synchronize at a round, then its front-end statistics and estimator remain unchanged, and the bound is inherited from the previous synchronization round.

Suppose device $m\in\mathcal G_k$ synchronizes with the front-end group server at the end of round $t$. Then the statistics used after synchronization satisfy
\begin{align}
\boldsymbol{\Sigma}^f_{m,t+1}
&=
\boldsymbol{\Sigma}^{f,\mathrm{ser}}_{\mathcal G_k,t},\quad \quad \mathbf b^f_{m,t+1}=
\mathbf b^{f,\mathrm{ser}}_{\mathcal G_k,t}.
\end{align}
The server-side response vector can be decomposed relative to $\boldsymbol{\theta}^f_m$ as
\begin{align}
\mathbf b^{f,\mathrm{ser}}_{\mathcal G_k,t}
=&
\left(
\boldsymbol{\Sigma}^{f,\mathrm{ser}}_{\mathcal G_k,t}
-
\lambda_f\mathbf I_{d_f}
\right)
\boldsymbol{\theta}^f_m
+
\mathbf u^{f,\mathrm{ser}}_{\mathcal G_k,t}
+
\sum_{n\in\mathcal G_k}
\sum_{j=1}^{n_{\mathrm{off}}}
\widetilde{\mathbf x}^f_{n,j}
\left(\widetilde{\mathbf x}^f_{n,j}\right)^\top
\left(
\boldsymbol{\theta}^f_n-\boldsymbol{\theta}^f_m
\right)
+
\sum_{n\in\mathcal G_k}
\sum_{\substack{s\le t\\m_s=n\\s\le\tau^f_n(t)}}
\mathbf x^f_{p_s}
\left(\mathbf x^f_{p_s}\right)^\top
\left(
\boldsymbol{\theta}^f_n-\boldsymbol{\theta}^f_m
\right),
\label{eq:server-b-decomp}
\end{align}
where
\begin{align}
\mathbf u^{f,\mathrm{ser}}_{\mathcal G_k,t}
=
\sum_{n\in\mathcal G_k}
\sum_{j=1}^{n_{\mathrm{off}}}
\widetilde{\mathbf x}^f_{n,j}\widetilde{\eta}^f_{n,j}
+
\sum_{n\in\mathcal G_k}
\sum_{\substack{s\le t\\m_s=n\\s\le\tau^f_n(t)}}
\mathbf x^f_{p_s}\eta^f_s.
\end{align}

Therefore, the estimation error is
\begin{align}
&\widehat{\boldsymbol{\theta}}^f_m(t+1)
-
\boldsymbol{\theta}^f_m
\\
&=
-\lambda_f
\left(\boldsymbol{\Sigma}^{f}_{m,t+1}\right)^{-1}
\boldsymbol{\theta}^f_m
+
\left(\boldsymbol{\Sigma}^{f}_{m,t+1}\right)^{-1}
\Bigg[
\mathbf u^{f,\mathrm{ser}}_{\mathcal G_k,t}
+
\sum_{n\in\mathcal G_k}
\sum_{j=1}^{n_{\mathrm{off}}}
\widetilde{\mathbf x}^f_{n,j}
\left(\widetilde{\mathbf x}^f_{n,j}\right)^\top
\left(
\boldsymbol{\theta}^f_n-\boldsymbol{\theta}^f_m
\right)
+
\sum_{n\in\mathcal G_k}
\sum_{\substack{s\le t\\m_s=n\\s\le\tau^f_n(t)}}
\mathbf x^f_{p_s}
\left(\mathbf x^f_{p_s}\right)^\top
\left(
\boldsymbol{\theta}^f_n-\boldsymbol{\theta}^f_m
\right)
\Bigg].
\end{align}
The regularization term is bounded by $\sqrt{\lambda_f}S_f$ as in Lemma 1.

Then we focus on another term:
\begin{align}
&\mathbf u^{f,\mathrm{ser}}_{\mathcal G_k,t}
+
\sum_{n\in\mathcal G_k}
\sum_{j=1}^{n_{\mathrm{off}}}
\widetilde{\mathbf x}^f_{n,j}
\left(\widetilde{\mathbf x}^f_{n,j}\right)^\top
\left(
\boldsymbol{\theta}^f_n-\boldsymbol{\theta}^f_m
\right)
+
\sum_{n\in\mathcal G_k}
\sum_{\substack{s\le t\\m_s=n\\s\le\tau^f_n(t)}}
\mathbf x^f_{p_s}
\left(\mathbf x^f_{p_s}\right)^\top
\left(
\boldsymbol{\theta}^f_n-\boldsymbol{\theta}^f_m
\right)
\\
&=
\underbrace{\Bigg[
\mathbf u^{f,\mathrm{all}}_{\mathcal G_k,t}
+
\sum_{n\in\mathcal G_k}
\sum_{j=1}^{n_{\mathrm{off}}}
\widetilde{\mathbf x}^f_{n,j}
\left(\widetilde{\mathbf x}^f_{n,j}\right)^\top
\left(
\boldsymbol{\theta}^f_n-\boldsymbol{\theta}^f_m
\right)
+
\sum_{\substack{s\le t\\m_s\in\mathcal G_k}}
\mathbf x^f_{p_s}
\left(\mathbf x^f_{p_s}\right)^\top
\left(
\boldsymbol{\theta}^f_{m_s}-\boldsymbol{\theta}^f_m
\right)
\Bigg]}_{\text{All information term}}
\\
&\quad-
\underbrace{\sum_{n\in\mathcal G_k}
\Bigg[
\mathbf u^{f,\mathrm{loc}}_{n,t}
+
\sum_{\substack{s\le t\\m_s=n\\s>\tau^f_n(t)}}
\mathbf x^f_{p_s}
\left(\mathbf x^f_{p_s}\right)^\top
\left(
\boldsymbol{\theta}^f_n-\boldsymbol{\theta}^f_m
\right)
\Bigg]}_{\text{Local information term}}.
\label{eq:ser-all-loc-no-he}
\end{align}

Using \eqref{eq:frontend_cov_comp_server_all},  the all-information term in \eqref{eq:ser-all-loc-no-he} is bounded by $\sqrt{1+M_k\alpha_f}\gamma^{f,\mathrm{all}}_{k,T}$.

For each local information term, using \eqref{eq:frontend_cov_comp_server_loc_reg}, and Lemma 2, this gives
\begin{align}
&\Bigg\|
\mathbf u^{f,\mathrm{loc}}_{n,t}
+
\sum_{\substack{s\le t\\m_s=n\\s>\tau^f_n(t)}}
\mathbf x^f_{p_s}
\left(\mathbf x^f_{p_s}\right)^\top
\left(
\boldsymbol{\theta}^f_n-\boldsymbol{\theta}^f_m
\right)
\Bigg\|_{
\left(\boldsymbol{\Sigma}^{f}_{m,t+1}\right)^{-1}}
\le
\sqrt{2\alpha_f}\gamma^{f,\mathrm{loc}}_{k,T}.
\end{align}
Summing the local terms over $n\in\mathcal G_k$ and adding the regularization term proves \eqref{eq:beta-front} for synchronization rounds. The all-round statement follows because the estimator is unchanged between two synchronizations.
\end{proof}

\begin{lemma}[Per-round regret]
Under the algorithm setting, with probability $1-\delta$, for each round $t\in[T]$, the regret satisfies
\begin{align}
 r_t
 =&
 \ell_{m_t,p_t}
 -
 \ell_{m_t,p_t^\star}
\le
 2\beta^f_{k_t,T}
 \left\|
 \mathbf x^f_{p_t}
 \right\|_{
 \left(\boldsymbol{\Sigma}^{f}_{m_t,t}\right)^{-1}}
 +
 2\beta^b_T
 \left\|
 \mathbf x^b_{p_t}
 \right\|_{
 \left(\boldsymbol{\Sigma}^{b}_{m_t,t}\right)^{-1}}.
\end{align}
\end{lemma}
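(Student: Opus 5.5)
The bound follows from the standard optimism argument for LinUCB, applied separately to the front-end and back-end blocks. We work on the intersection of two good events: the one from the Local Front-End Confidence lemma for group $\mathcal G_{k_t}$, and the analogous back-end event. The latter states that $\|\widehat{\boldsymbol{\theta}}^b_{m}(t)-\boldsymbol{\theta}^b\|_{\boldsymbol{\Sigma}^b_{m,t}}\le\beta^b_T$ for all $m$ and $t$. It is obtained by the same chain of lemmas with $\epsilon=0$, no offline term, and all $M$ devices forming a single group, following the argument of~\cite{he2022simple}. A union bound over these events, with $\delta$ rescaled, gives probability $1-\delta$. Throughout, the estimators and matrices $\boldsymbol{\Sigma}^f_{m_t,t}$ and $\boldsymbol{\Sigma}^b_{m_t,t}$ are the ones used for selection at round $t$, so the confidence lemmas apply to exactly the quantities entering Eq.~\eqref{eq:optimistic_selection}.

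First, by Cauchy--Schwarz in the dual norms, for every $p\in\mathcal P$ and $i\in\{f,b\}$ we have
\begin{align}
\left|\langle \widehat{\boldsymbol{\theta}}^i_{m_t}(t)-\boldsymbol{\theta}^i_{m_t},\mathbf x^i_p\rangle\right|\le \beta^i\|\mathbf x^i_p\|_{(\boldsymbol{\Sigma}^i_{m_t,t})^{-1}},
\end{align}
where $\beta^f=\beta^f_{k_t,T}$, $\beta^b=\beta^b_T$, and $\boldsymbol{\theta}^b_{m_t}:=\boldsymbol{\theta}^b$. Write $U_t(p)$ for the objective in Eq.~\eqref{eq:optimistic_selection}, i.e.\ the lower confidence index (we are minimizing latency). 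Since the radius used in the algorithm is at least the radius in the confidence lemmas (it can be taken equal to $\beta^f_{k,T}$ and $\beta^b_T$), we get $U_t(p)\le \ell_{m_t,p}$ for all $p$. Applying this at $p=p^\star_t$ and using that $p_t$ minimizes $U_t$ gives
\begin{align}
U_t(p_t)\le U_t(p_t^\star)\le \ell_{m_t,p_t^\star}.
\end{align}

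Next we bound the selected action from the other side. Applying the same inequality at $p_t$,
\begin{align}
\ell_{m_t,p_t}\le U_t(p_t)+2\beta^f\|\mathbf x^f_{p_t}\|_{(\boldsymbol{\Sigma}^f_{m_t,t})^{-1}}+2\beta^b\|\mathbf x^b_{p_t}\|_{(\boldsymbol{\Sigma}^b_{m_t,t})^{-1}}.
\end{align}
Subtracting the previous display yields the claimed bound on $r_t$.

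The only delicate point is the mismatch between radii: Eq.~\eqref{eq:optimistic_selection} uses $\beta^f_{k,t}$ and $\beta^b_t$, whereas the confidence lemmas are stated with the horizon-$T$ radii. The simplest fix is to run the algorithm with the horizon-$T$ values, which is consistent with the theorem's choice since those expressions depend on $T$ only. Alternatively, we note that the radii are nondecreasing in $t$, so the lemmas hold with the time-$t$ radii under a union bound over $t$. We must also check that the back-end confidence lemma holds simultaneously for all devices $m_t$; this is immediate because all devices share the same back-end parameter and server statistics.
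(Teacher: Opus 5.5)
Your proof is correct and is essentially the paper's argument: the paper also writes the width $\mathrm{CB}_{m_t,p}(t)$ with the horizon-$T$ radii $\beta^f_{k_t,T}$ and $\beta^b_T$ (your ``simplest fix''), uses the two-sided bound $|\ell_{m_t,p}-\widehat{\ell}_{m_t,p}(t)|\le\mathrm{CB}_{m_t,p}(t)$ on the confidence event, and combines optimism at $p_t^\star$ with the fact that $p_t$ minimizes the lower index, exactly as you do. Your handling of the good event is more explicit than the paper's, which only says ``on the confidence event''; one correction is that for the bound to hold at all rounds simultaneously, the union bound must cover all $K$ front-end group events together with the back-end event, not only the one for $\mathcal G_{k_t}$.
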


\begin{proof}
For device $m$ and partition point $p$, define
\begin{align}
\ell_{m,p}
=
\left\langle
\boldsymbol{\theta}^f_m,
\mathbf x^f_p
\right\rangle
+
\left\langle
\boldsymbol{\theta}^b,
\mathbf x^b_p
\right\rangle.
\end{align}

At round $t$,
\begin{align}
    p_t^\star
    =
    \arg\min_{p\in\mathcal P}
    \ell_{m_t,p}.
\end{align}

The algorithm selects
\begin{align}
    p_t
    =
    \arg\min_{p\in\mathcal P}
    \left[
    \widehat{\ell}_{m_t,p}(t)
    -
    \mathrm{CB}_{m_t,p}(t)
    \right],
\end{align}
where
\begin{align}
\widehat{\ell}_{m_t,p}(t)
&=
\left\langle
\widehat{\boldsymbol{\theta}}^f_{m_t}(t),
\mathbf x^f_p
\right\rangle
+
\left\langle
\widehat{\boldsymbol{\theta}}^b_{m_t}(t),
\mathbf x^b_p
\right\rangle,\\
\mathrm{CB}_{m_t,p}(t)
&=
\beta^f_{k_t,T}
\left\|
\mathbf x^f_p
\right\|_{
\left(\boldsymbol{\Sigma}^{f}_{m_t,t}\right)^{-1}}
+
\beta^b_T
\left\|
\mathbf x^b_p
\right\|_{
\left(\boldsymbol{\Sigma}^{b}_{m_t,t}\right)^{-1}}.
\end{align}

On the confidence event,
\begin{align}
\ell_{m_t,p}
\le
\widehat{\ell}_{m_t,p}(t)
+
\mathrm{CB}_{m_t,p}(t),
\qquad
\ell_{m_t,p}
\ge
\widehat{\ell}_{m_t,p}(t)
-
\mathrm{CB}_{m_t,p}(t).
\end{align}

Therefore,
\begin{align}
\ell_{m_t,p_t}
&\le
\widehat{\ell}_{m_t,p_t}(t)
+
\mathrm{CB}_{m_t,p_t}(t)
\\
&=
\widehat{\ell}_{m_t,p_t}(t)
-
\mathrm{CB}_{m_t,p_t}(t)
+
2\mathrm{CB}_{m_t,p_t}(t)
\\
&\le
\widehat{\ell}_{m_t,p_t^\star}(t)
-
\mathrm{CB}_{m_t,p_t^\star}(t)
+
2\mathrm{CB}_{m_t,p_t}(t)
\\
&\le
\ell_{m_t,p_t^\star}
+
2\mathrm{CB}_{m_t,p_t}(t).
\end{align}

Thus the instantaneous regret satisfies
\begin{align}
 r_t
 =&
 \ell_{m_t,p_t}
 -
 \ell_{m_t,p_t^\star}
\le
 2\beta^f_{k_t,T}
 \left\|
 \mathbf x^f_{p_t}
 \right\|_{
 \left(\boldsymbol{\Sigma}^{f}_{m_t,t}\right)^{-1}}
 +
 2\beta^b_T
 \left\|
 \mathbf x^b_{p_t}
 \right\|_{
 \left(\boldsymbol{\Sigma}^{b}_{m_t,t}\right)^{-1}}.
 \label{eq:per-round-regret}
\end{align}
\end{proof}

\begin{theorem}[Total Regret Bound]
With probability at least $1-\delta$, the cumulative regret
\begin{align}
\mathrm{Reg}(T)=\sum_{t=1}^{T}r_t
\end{align}
satisfies
\begin{align}
\mathrm{Reg}(T)
\le
R^f(T)+R^b(T),
\end{align}
where
\begin{align}
R^f(T)
\le
2
\sum_{k=1}^{K}
\beta^f_{k,T}
\left[
C^f_k(T)
+
\sqrt{
2(1+M_k\alpha_f)
N_k(T)
\Gamma^f_k(T)
}
\right],
\label{eq:front-regret-final}
\end{align}
and the back-end term follows the standard asynchronous FedLinUCB bound.
Moreover,
\begin{align}
\Gamma^f_k(T)
\le
 d_f
\log\left(
1+
\frac{N_k(T)L_f^2}{d_f(\lambda_f+\rho_k)}
\right),
\label{eq:gamma-off-gain}
\end{align}
where
\begin{align}
\rho_k
=
\lambda_{\min}
\left(
\boldsymbol{\Sigma}^{f,\mathrm{off}}_{\mathcal G_k}
\right).
\end{align}
\end{theorem}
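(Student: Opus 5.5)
We plan to sum the per-round regret bound of Lemma~5 (Per-round regret) over $t\in[T]$ and split the sum into a front-end part and a back-end part. That gives $\mathrm{Reg}(T)\le R^f(T)+R^b(T)$ with
\begin{align}
R^f(T)=2\sum_{k=1}^K\beta^f_{k,T}\sum_{t:\,k_t=k}\min\Big\{1,\big\|\mathbf x^f_{p_t}\big\|_{(\boldsymbol{\Sigma}^f_{m_t,t})^{-1}}\Big\}.
\end{align}
The back-end sum $R^b(T)$ is the same object as in asynchronous FedLinUCB with $M$ homogeneous agents, so we quote its bound directly. Capping each term at $1$ is harmless: the per-round regret is bounded by a constant depending on $L_f,S_f,L_b,S_b$, and $\beta^f_{k,T}\ge 1$ after rescaling. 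The rest of the argument handles the front-end sum one group $\mathcal G_k$ at a time, working only on the subsequence of rounds with $m_t\in\mathcal G_k$. There are $N_k(T)$ such rounds.

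Within group $k$, we would split the rounds into \emph{good} and \emph{bad} rounds, following the covariance-comparison technique of He et al. A round $t$ is good if $\boldsymbol{\Sigma}^f_{m_t,t}\succeq \frac{1}{1+M_k\alpha_f}\boldsymbol{\Sigma}^{f,\mathrm{all}}_{\mathcal G_k,t-1}$, which is the situation covered by \eqref{eq:frontend_cov_comp_2} in Lemma~\ref{lemma:covcomparion}. On a good round,
\begin{align}
\|\mathbf x^f_{p_t}\|_{(\boldsymbol{\Sigma}^f_{m_t,t})^{-1}}\le\sqrt{1+M_k\alpha_f}\,\|\mathbf x^f_{p_t}\|_{(\boldsymbol{\Sigma}^{f,\mathrm{all}}_{\mathcal G_k,t-1})^{-1}}.
\end{align}
Applying Cauchy--Schwarz and then the elliptical potential lemma to the all-information sequence of group $k$ gives the term $\sqrt{2(1+M_k\alpha_f)N_k(T)\Gamma^f_k(T)}$. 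Here
\begin{align}
\Gamma^f_k(T)=\log\frac{\det\boldsymbol{\Sigma}^{f,\mathrm{all}}_{\mathcal G_k,T}}{\det(\lambda_f\mathbf I+\boldsymbol{\Sigma}^{f,\mathrm{off}}_{\mathcal G_k})}.
\end{align}
Bad rounds are those at which some other device in the group communicated since $m_t$'s last synchronization, so that $m_t$'s copy is stale. For these we use the standard counting argument. Each bad block is charged to a determinant-doubling epoch of $\boldsymbol{\Sigma}^{f,\mathrm{all}}_{\mathcal G_k}$, and each device can contribute at most one bad round per epoch. The number of epochs is at most $\log_2$ of the same determinant ratio, so the bad rounds contribute $C^f_k(T)=\mathcal O(M_k\Gamma^f_k(T))$.

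The warm-start gain in \eqref{eq:gamma-off-gain} follows because the denominator determinant is computed from the offline-augmented base matrix. We have $\lambda_f\mathbf I+\boldsymbol{\Sigma}^{f,\mathrm{off}}_{\mathcal G_k}\succeq(\lambda_f+\rho_k)\mathbf I$ and $\mathrm{tr}\big(\sum_{\text{online}}\mathbf x\mathbf x^\top\big)\le N_k(T)L_f^2$. Applying AM--GM to the eigenvalues of $(\lambda_f\mathbf I+\boldsymbol{\Sigma}^{f,\mathrm{off}})^{-1/2}\boldsymbol{\Sigma}^{f,\mathrm{all}}(\lambda_f\mathbf I+\boldsymbol{\Sigma}^{f,\mathrm{off}})^{-1/2}$ then gives
\begin{align}
\Gamma^f_k(T)\le d_f\log\Big(1+\frac{N_k(T)L_f^2}{d_f(\lambda_f+\rho_k)}\Big).
\end{align}

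We expect the main obstacle to be the good/bad round accounting under asynchrony with a nonzero starting matrix. Two points need care. First, the epoch count has to be computed relative to the warm-start matrix, not $\lambda_f\mathbf I$; otherwise the offline gain also disappears from $C^f_k$. Second, the comparison in \eqref{eq:frontend_cov_comp_2} must be applied with $\boldsymbol{\Sigma}^{f,\mathrm{all}}$ taken at round $t-1$, which is what the potential lemma needs.

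To finish, we collect the pieces, use a union bound over the confidence events of Lemmas~1--4 for each $k$ together with the back-end analogue, and substitute $\epsilon=1/(M\sqrt T)$. With that choice $\epsilon L_f\sqrt{\bar N_k(T)}=\mathcal O(1)$, which is absorbed into $\beta^f_{k,T}$, and the $\widetilde{\mathcal O}$ form stated in the theorem follows.
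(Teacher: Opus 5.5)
Your outer structure is the paper's. You sum the per-round regret lemma, split into $R^f(T)$ and $R^b(T)$, quote FedLinUCB for the back-end, and bound $\Gamma^f_k(T)$ via $\lambda_f\mathbf I+\boldsymbol{\Sigma}^{f,\mathrm{off}}_{\mathcal G_k}\succeq(\lambda_f+\rho_k)\mathbf I$ and AM--GM. The part you add beyond the paper's citation, the good/bad accounting, does not work as stated.

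With your definition, a round is bad if another device of $\mathcal G_k$ has communicated since $m_t$'s last synchronization. A device then stays stale on \emph{every} one of its rounds until its own determinant trigger fires. Take two alternating devices $A,B$. After $A$ uploads, $B$ is stale until $\det(\boldsymbol{\Sigma}^f_{B,t}+\boldsymbol{\Sigma}^{f,\mathrm{loc}}_{B,t})>(1+\alpha_f)\det\boldsymbol{\Sigma}^f_{B,t}$. Once $\boldsymbol{\Sigma}^f_{B,t}$ has eigenvalues of order $t$, this needs on the order of $\alpha_f t$ of $B$'s own rounds. Over the same stretch $\det\boldsymbol{\Sigma}^{f,\mathrm{all}}_{\mathcal G_k,t}$ grows only by about $(1+\alpha_f)^2<2$ for small $\alpha_f$. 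So ``at most one bad round per device per epoch'' is false. A constant fraction of all rounds can be stale, and capping each at a constant leaves a term linear in $N_k(T)$, not $\mathcal O(M_k\Gamma^f_k(T))$. This is why the last part of Lemma~\ref{lemma:covcomparion} assumes $m$ is the \emph{only} active device of its group since synchronizing. Under interleaving that almost never holds in real time, so citing it does not cover your stale rounds.

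The missing idea is the reordering the paper imports from FedLinUCB. A non-triggering round changes neither the server statistics nor any downloaded $\boldsymbol{\Sigma}^f_{m,t}$, $\widehat{\boldsymbol{\theta}}^f_m(t)$; it only grows the active device's buffer. So, for the deterministic potential argument only, you may move each device's non-triggering rounds to just after its previous synchronization. Keep the order of communication events and each device's own order. No action changes, and the elliptical potential lemma applies to the reordered sequence with the same final determinant.

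In that virtual order, every non-triggering round satisfies the $\frac{1}{1+M_k\alpha_f}$ comparison. Each unuploaded block obeys the non-trigger condition, and $\det(\boldsymbol{\Sigma}+\boldsymbol{\Sigma}^{\mathrm{loc}})\le(1+\alpha_f)\det\boldsymbol{\Sigma}$ forces $\boldsymbol{\Sigma}^{\mathrm{loc}}\preceq\alpha_f\boldsymbol{\Sigma}$, with $\boldsymbol{\Sigma}$ dominated by the current server matrix. Only triggering rounds can still be stale. If the previous synchronization is in the same determinant-doubling epoch, you lose only a factor $2$ via $\mathbf x^\top\mathbf A^{-1}\mathbf x\le\frac{\det\mathbf B}{\det\mathbf A}\,\mathbf x^\top\mathbf B^{-1}\mathbf x$ for $\mathbf A\preceq\mathbf B$. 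That is the source of the $2$ in $\sqrt{2(1+M_k\alpha_f)}$. Only triggering rounds whose previous synchronization lies in an earlier epoch are genuinely bad, and there is at most one per device per epoch. That gives $C^f_k(T)=\mathcal O(M_k\Gamma^f_k(T))$.

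Without the reordering, you could instead bound each stale block through $\mathrm{tr}\big((\boldsymbol{\Sigma}^f_{m,t})^{-1}\boldsymbol{\Sigma}^{f,\mathrm{loc}}_{m,t}\big)\le\alpha_f$. But that route adds a $\sqrt{\alpha_f M_k N_k(T)\Gamma^f_k(T)}$ term, which matches the stated bound only up to constants.
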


\begin{proof}
Summing \eqref{eq:per-round-regret} over $t=1,\ldots,T$ gives
\begin{align}
\mathrm{Reg}(T)
\le&
2
\sum_{t=1}^{T}
\beta^f_{k_t,T}
\left\|
\mathbf x^f_{p_t}
\right\|_{
\left(\boldsymbol{\Sigma}^{f}_{m_t,t}\right)^{-1}}
+
2\beta^b_T
\sum_{t=1}^{T}
\left\|
\mathbf x^b_{p_t}
\right\|_{
\left(\boldsymbol{\Sigma}^{b}_{m_t,t}\right)^{-1}}.
\end{align}
Define the first term as $R^f(T)$ and the second as $R^b(T)$.
For the front-end term, group the rounds by $\mathcal G_k$:
\begin{align}
R^f(T)
=
2
\sum_{k=1}^{K}
\beta^f_{k,T}
\sum_{\substack{t\le T\\m_t\in\mathcal G_k}}
\left\|
\mathbf x^f_{p_t}
\right\|_{
\left(\boldsymbol{\Sigma}^{f}_{m_t,t}\right)^{-1}}.
\end{align}
By the covariance comparison lemma and the same asynchronous communication-epoch reordering argument used in FedLinUCB~\cite{he2022simple},
\begin{align}
\sum_{\substack{t\le T\\m_t\in\mathcal G_k}}
\left\|
\mathbf x^f_{p_t}
\right\|_{
\left(\boldsymbol{\Sigma}^{f}_{m_t,t}\right)^{-1}}
\le
C^f_k(T)
+
\sqrt{
2(1+M_k\alpha_f)
N_k(T)
\Gamma^f_k(T)
}.
\end{align}
The boundary term satisfies
\begin{align}
C^f_k(T)
=
O\left(M_kd_f\Gamma^f_k(T)\right).
\end{align}
It remains to bound $\Gamma^f_k(T)$. By definition,
\begin{align}
\Gamma^f_k(T)
=
\log
\frac{
\det\left(
\boldsymbol{\Sigma}^{f,\mathrm{all}}_{\mathcal G_k,T}
\right)
}{
\det\left(
\lambda_f\mathbf I_{d_f}
+
\boldsymbol{\Sigma}^{f,\mathrm{off}}_{\mathcal G_k}
\right)
}.
\end{align}
Since
\begin{align}
\boldsymbol{\Sigma}^{f,\mathrm{all}}_{\mathcal G_k,T}
=
\lambda_f\mathbf I_{d_f}
+
\boldsymbol{\Sigma}^{f,\mathrm{off}}_{\mathcal G_k}
+
\sum_{\substack{t\le T\\m_t\in\mathcal G_k}}
\mathbf x^f_{p_t}
\left(\mathbf x^f_{p_t}\right)^\top,
\end{align}
and
\begin{align}
\lambda_{\min}
\left(
\lambda_f\mathbf I_{d_f}
+
\boldsymbol{\Sigma}^{f,\mathrm{off}}_{\mathcal G_k}
\right)
\ge
\lambda_f+
\rho_k,
\end{align}
we get
\begin{align}
\Gamma^f_k(T)
\le
 d_f
\log\left(
1+
\frac{
\sum_{\substack{t\le T\\m_t\in\mathcal G_k}}
\left\|\mathbf x^f_{p_t}\right\|_2^2
}{
d_f(\lambda_f+
\rho_k)
}
\right).
\end{align}
Using $\left\|\mathbf x^f_{p_t}\right\|_2\le L_f$ and the definition of $N_k(T)$ gives \eqref{eq:gamma-off-gain}.

The back-end parameter $\boldsymbol{\theta}^b$ is shared by all devices, and the back-end statistics are aggregated globally using the standard asynchronous determinant-triggered FedLinUCB protocol. Therefore,
\begin{align}
R^b(T)
\le
2\beta^b_T
\left[
C^b(T)
+
\sqrt{
2(1+M\alpha_b)T\Gamma^b(T)
}
\right],
\end{align}
where
\begin{align}
\Gamma^b(T)
\le
 d_b
\log\left(
1+
\frac{TL_b^2}{\lambda_bd_b}
\right),
\qquad
C^b(T)
=
O\left(Md_b\Gamma^b(T)\right).
\end{align}
Combining the front-end and back-end bounds completes the proof.
\end{proof}

\section*{Proofs of the Communication Complexity}

Let $\mathcal C^f_k(T)$ denote the number of front-end communications triggered by devices in group $\mathcal G_k$ up to horizon $T$. Each time a device triggers a front-end communication, its local covariance increases the determinant of the covariance held by the device by a factor larger than $1+\alpha_f$. Therefore, by the same determinant-growth argument as in FedLinUCB~\cite{he2022simple}, we obtain
\begin{align}
\mathcal C^f_k(T)
\le
2\log2 \cdot d_f(M_k+1/\alpha_f)
\log\left(
1+\frac{N_k(T)L_f^2}{d_f(\lambda_f+\rho_k)}
\right).
\end{align}
Thus the total number of front-end communications is
\begin{align}
\mathcal C^f(T)
=
\sum_{k=1}^{K}\mathcal C^f_k(T)
\le
\sum_{k=1}^{K}
\left[
2\log2 \cdot d_f(M_k+1/\alpha_f)
\log\left(
1+\frac{N_k(T)L_f^2}{d_f(\lambda_f+\rho_k)}
\right)
\right].
\end{align}

For the back-end component, the parameter is shared by all devices and the server aggregation is global. Therefore, the standard FedLinUCB communication bound gives
\begin{align}
\mathcal C^b(T)
\le
2\log 2\cdot d_b(M+1/\alpha_b)
\log\left(
1+\frac{TL_b^2}{\lambda_bd_b}
\right).
\end{align}
Consequently, the total number of communication events satisfies
\begin{align}
\mathcal C(T)
=
\Tilde{\mathcal{O}}\left(
\sum_{k=1}^{K}
\frac{M_kd_f}{\alpha_f}
\log\left(
1+\frac{N_k(T)L_f^2}{d_f(\lambda_f+\rho_k)}
\right)
+
\frac{Md_b}{\alpha_b}
\log\left(
1+\frac{TL_b^2}{\lambda_bd_b}
\right)
\right).
\end{align}